\let \P \relax
\newcommand{\P}{\mathbb{P}}
\newcommand{\R}{\mathbb{R}}
\newcommand{\C}{\mathbb{C}}
\newcommand{\LL}{\mathcal{L}}
\newcommand{\RR}{\mathcal{R}}
\newcommand{\PP}{\mathcal{P}}
\newcommand{\la}{\lambda}
\newcommand{\si}{\sigma}
\renewcommand{\phi}{\varphi}
\newcommand{\<}{\langle}
\newcommand{\>}{\rangle}
\newcommand{\mn}[1]{{\left\vert\kern-0.25ex\left\vert\kern-0.25ex\left\vert #1 \right\vert\kern-0.25ex\right\vert\kern-0.25ex\right\vert}}
\newcommand{\BM}{\begin{matrix}}
\newcommand{\EM}{\end{matrix}}
\DeclareMathOperator{\GL}{GL}
\DeclareMathOperator{\Span}{Span}
\DeclareMathOperator{\rank}{rank}
\DeclareMathOperator{\adj}{adj}
\DeclareMathOperator{\cone}{cone}
\DeclareMathOperator{\cof}{cof}
\DeclareMathOperator{\sign}{sgn}
\newtheorem{theorem}{Theorem}[section]
\newtheorem{definition}[theorem]{Definition}
\newtheorem{lemma}[theorem]{Lemma}
\newtheorem{example}[theorem]{Example}
\newtheorem*{example*}{Example}
\newtheorem{corollary}[theorem]{Corollary}
\newtheorem{remark}[theorem]{Remarks}
\newcommand{\vu}{{\bf u}}
\newcommand{\vv}{{\bf v}}
\newcommand{\vw}{{\bf w}}
\newcommand{\vx}{{\bf x}}
\newcommand{\va}{{\bf a}}
\newcommand{\vb}{{\bf b}}
\newcommand{\vt}{{\bf t}}
\newcommand{\ve}{{\bf e}}
\newcommand{\vnull}{{\bf 0}}
\newcommand{\cP}{\mathcal{P}}
\newcommand{\cL}{\mathcal{L}}
\renewcommand{\bf}{\mathbf}
\title{Existence of Two View Chiral Reconstructions}
\author{Andrew Pryhuber}
\address{Department of Mathematics, 
University of Washington, Box 354350, Seattle, WA 98195, USA.}
\email{andrewpryhuber@gmail.com}
\author{Rainer Sinn}
\address{Mathematisches Institut, Universit\"at Leipzig, PF 10 09 20, 04009 Leipzig, Germany.}
\email{rainer.sinn@uni-leipzig.de}
\author{Rekha R. Thomas}
\address{Department of Mathematics, University of Washington, Box
  354350, Seattle, WA 98195, USA.}
\email{rrthomas@uw.edu}
\thanks{Pryhuber and Thomas were partially supported by the NSF grant DMS-1719538}
\date{\today}
\begin{document}

\begin{abstract} 

A fundamental question in computer vision is whether a set of point pairs is the image of a scene that lies in front of two cameras. Such a scene and the cameras together are known as a {\em chiral reconstruction} of the point pairs. In this paper we provide a complete classification of $k$ point pairs for which a chiral reconstruction exists. The existence of chiral reconstructions is equivalent to the
non-emptiness of certain semialgebraic sets. We describe these sets and develop tools to certify their non-emptiness. For up to three point pairs, we prove that a chiral reconstruction always exists while
the set of five or more point pairs that do not have a chiral reconstruction is Zariski-dense. We show that for five generic point pairs, the chiral region is bounded by line segments in a Schl\"afli double six on a cubic surface with $27$ real lines. Four point pairs have a chiral reconstruction unless they belong to two non-generic combinatorial types, in which case they may or may not.  

\end{abstract}

\maketitle

\section{Introduction}
\label{sec:introduction}

A fundamental question in computer vision is whether a set of point pairs $\mathcal{P} = \{(\bf{u}_i, \bf{v}_i) \,:\, i=1,\ldots,k \}$ is the image of a set of world points $\bf{q}_i$ that are visible in two cameras. If we ignore the constraint (as is commonly done) that the points $\mathbf{q}_i$ need to lie in front of the cameras, we get a {\em projective reconstruction}~\cite{HartleyZisserman2004}. In reality though, cameras can only see points in front of them. A reconstruction that obeys this additional constraint is 
known as a {\em chiral reconstruction}~\cite{APST2020multiview,hartley1998chirality}. 
The aim of this paper is to give a complete answer to the question: Given a set of 
point pairs $
\mathcal{P} = \{ (\bf{u}_i, \bf{v}_i), i = 1,\ldots, k\}$, when does $\mathcal{P}$ admit a chiral reconstruction?

Under the assumption that the points in each image are distinct, we prove the following facts.
\begin{enumerate}
    \item A set of at most three point pairs always has a chiral reconstruction.
    \item A set of four point pairs has a chiral reconstruction unless the 
    configurations are of two specific non-generic types, in which case a chiral reconstruction may not exist.
    \item Five or more point pairs can fail to have a chiral reconstruction with positive probability (in particular, even if they are in general position).
    \item For five sufficiently generic point pairs, the problem translates to finding points  in semialgebraic regions on a real cubic surface whose boundaries are segments of lines in a Schl\"afli double six of real lines, creating an unexpected bridge 
    to classical results in algebraic geometry.
\end{enumerate}

The study of chirality was initiated in \cite{hartley1998chirality} with follow up work by Werner, Pajdla and others~\cite{werner2003combinatorial, werner2003constraint,WernerPajdla2001, werner2001oriented}.
There is no agreement on the name for reconstructions that are chiral. Hartley in \cite{hartley1998chirality} and Werner \& Pajdla in~\cite{werner2001oriented} call them {\em strong realizations}. Werner in \cite{werner2001oriented, werner2003constraint,werner2003combinatorial} calls them {\em oriented projective reconstructions}.  Here and in our previous work~\cite{APST2020multiview}, we prefer the term {\em chiral reconstruction}. 
In \cite{hartley1998chirality}, Hartley shows that chiral reconstructions 
are a special class of \emph{quasi-affine reconstructions}. See \cite[Section 5]{APST2020multiview} for a detailed account of quasi-affine reconstructions in the context of our approach to 
chirality. Hartley's work was done using projective geometry. In later work, such as 
\cite{werner2003combinatorial}, \cite{werner2003constraint} and \cite{WernerPajdla2001}, the authors use {\em oriented projective geometry} \cite{laveau1996oriented}, which also explains their use of the term {\em oriented projected reconstructions}. Following Hartley, our work uses projective geometry. Justifications  of our choice of framework can be found in \cite[Section~1]{APST2020multiview}.
We now comment briefly on how our results relate to the existing literature in computer vision and give more detailed citations in later sections.

A specific example of five point pairs in general linear position that do not admit a chiral reconstruction was given in \cite{werner2003constraint} and appears 
again in \cite{werner2003combinatorial}. Our result (3) shows that, in fact, the 
set of five point pairs without a chiral reconstruction is Zariski dense. 
The case of four point pairs covered in result (2) is the most involved since it does not assume 
genericity, and covers all possible configurations of four point pairs. We show that chiral 
reconstructions can fail in this case only in degenerate situations. To the best of our knowledge, both results (2) and (3) are novel.
Result (1) says that three point pairs admit a chiral reconstruction 
unconditionally. While this is not too hard to prove, it also does not appear in the literature and completes the story.

Result (4) establishes a new connection between chiral reconstructions of five generic point pairs and the classical 
theory of cubic surfaces from algebraic geometry. The cubic surface in $\mathbb{P}^3$ 
arises naturally in our set up 
and the two camera planes, modeled 
as $\mathbb{P}^2 \times \mathbb{P}^2$, can be obtained by {\em blowing down} this surface. The papers \cite{werner2003constraint} and \cite{werner2003combinatorial}  study chirality in the setting of $\mathbb{P}^2 \times \mathbb{P}^2$ 
using tools from oriented matroids. It was shown in \cite{werner2003constraint} that the 
{\em chiral regions} are bounded by certain conics. Our results show that these conics are obtained by blowing down a {\em Schl\"afli double six} of real lines on the cubic surface, allowing the chiral regions to be seen as semialgebraic subsets of the cubic surface bounded by lines. 
Both frameworks produce a sixth pair of points $(\mathbf{u}_0, \mathbf{v}_0)$ whose existence was derived in \cite{werner2003constraint} using the  conics in $\mathbb{P}^2 \times \mathbb{P}^2$. Our results show that the cubic surface is the 
{\em blow up} of the six points $\{\mathbf{u}_i\}$ or $\{\mathbf{v}_i\}$ in each $\mathbb{P}^2$. 
The blow up/blow down maps create a new interpretation of the results in \cite{werner2003constraint} while also offering a unified picture of chirality that transfers seamlessly between 
the space of camera {\em epipoles} and the space of {\em fundamental matrices} of $\mathcal{P}$.
We draw from, and build on, methods from the above mentioned papers from computer vision, and our own paper \cite{APST2020multiview}. Our main tools are centered in complex and real algebraic geometry 
as well as semialgebraic geometry.

\medskip
This paper is organized as follows. 
Formal definitions of projective and chiral reconstructions can be found in \Cref{sec:reconstructions}. In \Cref{sec:chiraltools}, we introduce the inequalities imposed by chirality and develop tools to certify them. 
Along the way we prove that a set of at most three point pairs always has a chiral reconstruction. 

In \Cref{sec:fourpoints}, we prove that a set of four point pairs has a chiral reconstruction when the point configurations in each view have sufficiently similar geometry, and in particular, when they are in general position. The bad cases fall into two non-generic combinatorial types. In particular, the probability of choosing four point pairs that fail to have a chiral reconstruction is 
zero.

In \Cref{sec:fivepoints} we show that when $k > 4$, point pairs that are in general linear 
position may not have a chiral reconstruction. Specific examples of this type when $k=5$ were known to Werner \cite{werner2003constraint} and as mentioned before, there are close connections between our work and that of Werner's \cite{werner2003combinatorial,werner2003constraint,WernerPajdla2001,werner2001oriented}. We make two new contributions for the case $k=5$. In \Cref{sec:fivepoints} we show that one can decide the existence of a chiral reconstruction when $k=5$ and $\mathcal{P}$ is generic, by checking $20$ discrete points. We use this test to show that the set of five point pairs that do not admit a chiral reconstruction is Zariski-dense. In other words, five point pairs do not have a chiral reconstruction with positive probability. A set of six or more point pairs can have a chiral reconstruction only if any subset of five point pairs among them have one. Hence, for any value of $k > 5$, there will be point configurations without a chiral reconstruction. Our second contribution in \Cref{sec:classicalAG} is to show that the case of $k=5$ is intimately related to the theory of cubic surfaces from classical algebraic geometry. Indeed, $\mathcal{P}$ creates a Schl\"afli double six of $12$ lines on a cubic surface, all of whose $27$ lines are real. These lines determine the boundary of the semialgebraic regions corresponding to chiral reconstructions.

\medskip
\textbf{Acknowledgements.} The question addressed in this paper 
is a natural follow up to our previous work on chirality in \cite{APST2020multiview}. We thank Sameer Agarwal for many helpful conversations. We also thank Tom\'a\v{s} Pajdla for 
pointing us to several chirality papers from the literature.

\section{Background and Notation}
We now introduce some background and notation that will be needed in the paper. 
Let $\P^n$ and $\P_\R^n$ denote $n$-dimensional projective space over complex and real numbers respectively. 
More generally, we write $\P(V)$ for the projective space over a vector space $V$, which is the set of lines in $V$.
We write $\bf{a} \sim \bf{b}$ if $\bf{a}$ and $\bf{b}$ are the same points in projective space, and reserve $\bf{a}=\bf{b}$ to mean coordinate-wise equality.
A {\em projective camera} is a matrix 
in $\P(\R^{3 \times 4})$ of rank three,
i.e., a $3\times 4$ real matrix defined only up to scaling, hence naturally a point in the projective space over the vector space $\R^{3\times 4}$.
Usually, an affine representative of a projective camera is fixed and we block-partition such a matrix as 
$A = \begin{bmatrix} G & \bf{t} \end{bmatrix} \in \R^{3 \times 4}$ where 
$G \in \R^{3 \times 3}$ and $\bf{t} \in \R^3$. 
The {\em center} of $A$ is the unique point $\bf{c}_A \in \P_\R^3$ such that $A \bf{c}_A = 0$. The camera $A$ is a rational map from the ``world'' $\P^3$ to the ``camera plane'' $\P^2$ sending a ``world point'' $\bf{q}$ to its ``image'' $A \bf{q}$. It is 
defined everywhere except at $\bf{c}_A$. Consider the hyperplane at infinity in $\P^3$, $L_\infty = \{ \bf{q} \in \P^3 \,:\, \bf{n}_\infty^\top \bf{q} = 0 \}$,  
as an oriented hyperplane in $\R^4$ with fixed normal 
$\bf{n}_\infty = (0,0,0,1)^\top$. The camera $A$ is said to be {\em finite} if $\bf{c}_A$ is a finite point, i.e., $\bf{c}_A \not \in L_\infty$.
A special representative of a camera center can be obtained by Cramer's rule where the $i$th coordinate of $\bf{c}_A$ is the determinant of the 
submatrix of $A$ obtained by dropping the $i$th column. 
In particular, for a finite camera $A = \begin{bmatrix} G & \bf{t}\end{bmatrix}$, the Cramer's rule center is $\bf{c}_A = \det(G) (-G^{-1} \bf{t}, 1)^\top$. Throughout this paper we use the Cramer's rule representation of $\bf{c}_A$. The camera $A = \begin{bmatrix} G & \bf{t}\end{bmatrix}$ is finite if and only if $\det(G) \neq 0$, and all cameras in this paper will be finite. 

The {\em principal plane} of a finite camera $A= \begin{bmatrix} G & \mathbf{t} \end{bmatrix}$ is the hyperplane $L_A := \{ \mathbf{q} \in \P^3 \,:\, A_{3,\bullet} \mathbf{q} = 0 \},$ where $A_{3,\bullet}$ is the third row of $A$, i.e. it is the set of points in $\P^3$ that image to infinite points in $\P^2$. Note that the camera center $\mathbf{c}_A$ lies on $L_A$. We regard $L_A$ as an oriented hyperplane in $\R^4$ with normal vector $\mathbf{n}_A := \det(G) A_{3 \bullet}^\top$, which we call the \emph{principal ray} of $A$. The $\det(G)$ factor ensures that the normal vector of the principal plane does not flip sign under a scaling of $A$. 
The depth of a finite point $\mathbf{q}$ in a finite camera $A$ is 
defined as (see \cite{HartleyZisserman2004})
\begin{equation}
\operatorname{depth}(\mathbf{q};A) := 
\left(\frac{1}{|\det(G)|\|G_{3,\bullet}\|} \right) \frac{( \mathbf{n}_A^\top \mathbf{q})}{(\mathbf{n}_\infty^\top \mathbf{q})}. \label{def:depth}
\end{equation}
Note that the sign of $\operatorname{depth}(\mathbf{q};A)$ is unaffected by scaling  $\mathbf{q}$ and $A$. 
The depth of a finite point $\mathbf{q} \in \P^3$ in a finite camera $A$ defined in \Cref{def:depth} is zero if and only if $\mathbf{n}_A^\top \mathbf{q}=0$ which 
happens if and only if $\mathbf{q}$ lies on the principal plane $L_A$. Otherwise, $\mathbf{n}_A^\top \mathbf{q} \neq 0$ and $\sign(\operatorname{depth}(\mathbf{q};A)) = \sign((\mathbf{n}_A^\top \mathbf{q})(\mathbf{n}_\infty^\top \mathbf{q}))$ is either positive or negative. 
It is then natural to say that a finite point $\mathbf{q}$ is {\em in front of} the camera $A$ if $\operatorname{depth}(\mathbf{q};A) > 0$, see~\cite{hartley1998chirality}. Since only the sign of 
$\operatorname{depth}(\mathbf{q};A)$ matters, we refer to this sign as the {\em chirality} of $\mathbf{q}$ in $A$, denoted as $\chi(\mathbf{q};A)$, which is either $1$ or $-1$. 

The above notion of chirality was introduced by Hartley in the seminal paper \cite{hartley1998chirality}, where he was concerned with a pair of cameras, see also \cite[Chapter 21]{HartleyZisserman2004}. 
In \cite{APST2020multiview}, the definition of chirality was extended to all points in $\P^3$, finite and infinite, and 
defined for an arrangement of cameras. Here is the two camera version we need.

\begin{definition}
 \label{def:chiral domain}
 Let $(A_1,A_2)$ be a pair of finite projective cameras. Then the \emph{chiral domain} of $(A_1,A_2)$, is the Euclidean closure in $\P^3$ of the set
  \[
 \{\mathbf{q} \in \P^3 \,|\, \mathbf{q} \text{ finite},\,\,  
 \operatorname{depth}(\mathbf{q}, A_1) > 0, \,\,
 \operatorname{depth}(\mathbf{q}, A_2) > 0 \}.
 \]
A point $\mathbf{q} \in \P^3$ is said to have \emph{chirality 1 with respect to }$(A_1,A_2)$, denoted as  $\chi(\mathbf{q} ; (A_1,A_2)) = 1$, if and only if $\mathbf{q}$ lies in the chiral domain of $(A_1,A_2)$.
\end{definition}

In this paper we will be concerned with a pair of finite {\em non-coincident} cameras $(A_1,A_2)$ by which we mean that their centers are distinct. 
We will see that one can always take $A_1 = \begin{bmatrix} I & \bf{0} \end{bmatrix}$, and then the conditions of finite and non-coincident imply that 
$A_2 = \begin{bmatrix} G & \bf{t} \end{bmatrix}$ where $G \in \textup{GL}_3$ and $\bf{t} \neq \bf{0}$. The pair $(A_1,A_2)$ 
gives rise to the unique (up to scale) real, rank two \emph{fundamental matrix} $X = [\bf{t}]_\times G$ where 
\begin{align*}
     [ \mathbf{t} ]_\times = \begin{bmatrix} 0 & -t_3 & t_2 \\ 
     t_3 & 0 & -t_1 \\ -t_2 & t_1 & 0 \end{bmatrix}.
\end{align*}
The skew-symmetric matrix $[\bf{t}]_\times$ represents the cross product with $\bf{t}$ as a linear map; that means that 
for $\bf{t}, \bf{r} \in \R^3$ we have
 $\mathbf{t} \times \mathbf{r} = [\mathbf{t}]_\times \mathbf{r} = 
 [\mathbf{r}]_\times^\top \mathbf{t}$. Also, $\rank([\bf{t}]_\times) = 2$ if and 
 only if $\bf{t} \neq \bf{0}$. 
 We are only ever interested in properties of fundamental matrices (mostly rank and kernels) that remain unaffected by scaling and will therefore mostly consider them up to scaling which is to say as a point of $\P(\R^{3\times 3})$. We chose not to introduce a different notation to distinguish the matrix itself from the line it spans.

The {\em epipole pair} of the cameras $(A_1, A_2)$ is 
$(\bf{e}_1, \bf{e}_2) \in \P^2 \times \P^2$ where $\bf{e}_1$ is the image of the center $\bf{c}_2$ in $A_1$, and $\bf{e}_2$ is the image of the center $\bf{c}_1$ in $A_2$. The line joining the 
centers $\bf{c}_1$ and $\bf{c}_2$ is called the {\em baseline} of the camera pair $(A_1, A_2)$. Note that all world points 
on the baseline (with the exception of the respective camera centers)  image to the epipole in each camera.

\section{Projective and Chiral Reconstructions}
\label{sec:reconstructions}


Throughout this paper our input is a collection of point pairs $\mathcal{P} = \{ (\bf{u}_i, \bf{v}_i) \,:\, i = 1, \ldots, k \} \subset \P_\R^2 \times \P_\R^2$. Each $\bf{u}_i$ (and $\bf{v}_i$) is the homogenization of a 
point in $\R^2$ by adding a last coordinate one. Hence $(\bf{u}_i,\bf{v}_i)$ is a pair of finite points in $\P_\R^2 \times \P_\R^2$ with 
a fixed representation. We will 
also assume that all $\bf{u}_i$ (and all $\bf{v}_i$) are distinct.

In this section we formally define projective and chiral reconstructions of $\mathcal{P}$, and characterize their existence. We then 
set up the geometric framework within which these reconstructions will be studied in this paper. 

\subsection{Projective reconstructions}
\begin{definition} \label{def:projective reconstruction}
A {\em projective reconstruction} of $\mathcal{P}$ consists of a pair of projective cameras 
$A_1$, $A_2 \in \P(\R^{3 \times 4})$,  
world points $\mathcal{Q} = \{ \mathbf{q}_1, \ldots, \mathbf{q}_k \} \subset 
\P^3$ and non-zero scalars $w_{1i}, w_{2i}$ such that $A_1 \mathbf{q}_i = w_{1i} \bf{u}_i$ and 
$A_2 \mathbf{q}_i = w_{2i} \bf{v}_i$ for $i=1, \ldots, k$. 
If the cameras and world points are all finite, then $(A_1, A_2, \mathcal{Q})$ is called a 
{\em finite projective reconstruction} of $\mathcal{P}$.
\end{definition}

The basics of projective reconstructions can be found in \cite[Chapters 9 \& 10]{HartleyZisserman2004}. Theorem 3.1 in \cite{Lee16} proves that if $\mathcal{P}$ has a projective reconstruction then it also has a finite projective reconstruction with $A_1 = \begin{bmatrix} I & 0 \end{bmatrix}$.

We now recall the necessary and sufficient conditions for the existence of a finite projective reconstruction of $\mathcal{P}$.  For a point 
$\bf{e} \in \P^2$, let $\bf{e}(\bf{u}_1, \ldots, \bf{u}_k)$ denote the set of lines joining $\bf{e}$ to each $\bf{u}_i$.
The following geometric characterization is well-known  \cite{HartleyZisserman2004,maybank, WernerPajdla2001,werner2003constraint}.

\begin{theorem} \cite[Section 9.4]{HartleyZisserman2004}, \cite[Section 2.4]{maybank}
\label{thm:epipolar homography} 
The set of point pairs $\mathcal{P}$ has a projective reconstruction $(A_1,A_2,\mathcal{Q})$ if and only if there exist points $\bf{e}_1, \bf{e}_2 \in \P^2$ and a homography which sends the set of $k$ lines $\bf{e}_1(\bf{u}_1, \bf{u}_2, \dots, \bf{u}_k)$ to the set of $k$ lines $\bf{e}_2(\bf{v}_1, \bf{v}_2, \dots, \bf{v}_k)$ where the line $\bf{e}_1\bf{u}_i$ maps to the line $\bf{e}_2\bf{v}_i$.
\end{theorem}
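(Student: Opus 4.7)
The plan is to establish both directions via epipolar geometry: extracting the required homography from an existing reconstruction in one direction, and building a reconstruction from the homography in the other.

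For necessity, suppose $(A_1,A_2,\mathcal{Q})$ is a projective reconstruction with centers $\mathbf{c}_1, \mathbf{c}_2$, and set $\mathbf{e}_j := A_j\mathbf{c}_{3-j}$. For each $i$, the three points $\mathbf{c}_1, \mathbf{c}_2, \mathbf{q}_i$ span an \emph{epipolar plane} $\Pi_i \subset \P^3$ containing the baseline, and the image of $\Pi_i$ under $A_1$ is precisely the line $\mathbf{e}_1\mathbf{u}_i$, while its image under $A_2$ is the line $\mathbf{e}_2\mathbf{v}_i$. The pencil of planes through the baseline is a $\P^1$, and for $j=1,2$ the map $\Pi \mapsto A_j(\Pi)$ is a projective isomorphism from this pencil onto the pencil of lines through $\mathbf{e}_j$ in $\P^2$. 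Composing the first with the inverse of the second yields the required homography between pencils sending $\mathbf{e}_1\mathbf{u}_i \mapsto \mathbf{e}_2\mathbf{v}_i$ for every $i$.

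For sufficiency, given $\mathbf{e}_1,\mathbf{e}_2$ and a pencil homography $h$ with $h(\mathbf{e}_1\mathbf{u}_i) = \mathbf{e}_2\mathbf{v}_i$, the plan is first to encode $h$ as a rank-two fundamental matrix $F$ and then to produce a compatible camera pair and triangulate world points. The assignment $\mathbf{u} \mapsto h(\mathbf{e}_1\mathbf{u})$ on $\P^2$ (viewing lines as vectors in $\R^3$ up to scale) lifts to a linear map $F\colon \R^3 \to \R^3$ whose right kernel is $\mathbf{e}_1$, whose left kernel is $\mathbf{e}_2$, and for which $F\mathbf{u}_i$ represents the line $\mathbf{e}_2\mathbf{v}_i$; in particular $\mathbf{v}_i^\top F \mathbf{u}_i = 0$ for each $i$. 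Now take the canonical pair $A_1 = \begin{bmatrix} I & \mathbf{0} \end{bmatrix}$ and $A_2 = \begin{bmatrix} [\mathbf{e}_2]_\times F & \mathbf{e}_2 \end{bmatrix}$, which a standard computation (using $F^\top \mathbf{e}_2 = \mathbf{0}$) shows has $F$ as its fundamental matrix up to scale. The epipolar constraint $\mathbf{v}_i^\top F \mathbf{u}_i = 0$ then forces the back-projected rays through $\mathbf{u}_i$ and $\mathbf{v}_i$ to be coplanar, so they meet in a well-defined world point $\mathbf{q}_i \in \P^3$, yielding the desired reconstruction.

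The main obstacle will be formalizing the passage from the abstract pencil homography $h$ to the rank-two $3\times 3$ matrix $F$ with prescribed kernels, and verifying simultaneously that $F$ satisfies $\mathbf{v}_i^\top F \mathbf{u}_i = 0$ for every $i$. A secondary subtlety is the handling of degenerate point pairs, for example when some $\mathbf{u}_i$ coincides with $\mathbf{e}_1$, or when the triangulated $\mathbf{q}_i$ lies on the baseline; both follow the standard template in \cite[Section 9.4]{HartleyZisserman2004}. Note finally that for $k \leq 3$ the existence of such a homography is automatic, since a projective isomorphism between two $\P^1$'s is freely determined by any three corresponding pairs.
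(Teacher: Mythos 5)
The paper does not prove this statement at all: it is quoted verbatim from the literature (Hartley--Zisserman and Maybank), so there is no in-paper argument to compare against. Your proposal is the standard textbook proof from those references and is essentially correct: the pencil of epipolar planes through the baseline maps projectively onto the pencil of lines through each epipole, giving necessity; and conversely the pencil homography is realized by a rank-two matrix $F = H_\ell[\mathbf{e}_1]_\times$ with kernels $\mathbf{e}_1,\mathbf{e}_2$, from which the canonical camera pair and triangulation produce $\mathcal{Q}$. The only genuine care points are the degeneracies you flag in passing: the necessity argument needs the cameras to be non-coincident (otherwise there is no baseline, and one must argue separately via the homography $H$ with $\mathbf{v}_i \sim H\mathbf{u}_i$), and the statement implicitly assumes $\mathbf{u}_i \not\sim \mathbf{e}_1$ and $\mathbf{v}_i \not\sim \mathbf{e}_2$ so that the lines $\mathbf{e}_1\mathbf{u}_i$ and $\mathbf{e}_2\mathbf{v}_i$ are well defined --- this is exactly the regularity subtlety the paper itself highlights in its Remark following \Cref{thm:equivalentprconditions}, and is why it works with $\mathcal{P}$-regular fundamental matrices rather than with this theorem directly.
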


The points $\bf{e}_1, \bf{e}_2$ in the above theorem are the epipoles of the camera pair $(A_1, A_2)$ in the reconstruction. We now give a second characterization of the existence of a projective reconstruction in terms of \emph{fundamental matrices}: Below, we will abuse notation and not distinguish between the matrix and the line it spans in the space of matrices (or in other words the corresponding point in projective space). All relevant properties of the matrices are invariant under scaling.

\begin{definition}
\begin{enumerate}
    \item 
 A {\em fundamental matrix} of $\mathcal{P}$ is a rank two matrix $X \in \P(\R^{3 \times 3})$ such that 
 \begin{align}\label{eq:epipolar equations}
   \bf{v}_i^\top X \bf{u}_i = 0, \,\,\, \textup{ for } i=1, \ldots, k.  
 \end{align}
 The linear equations \eqref{eq:epipolar equations} in $X$ are called the {\em epipolar 
 equations} of $\mathcal{P}$. 
 
 \item Given a rank two matrix $X \in \P(\C^{3 \times 3})$ and a pair 
 $(\bf{u}, \bf{v}) \in \P^2 \times \P^2$, such that $\bf{v}^\top X \bf{u} = 0$, we say that $X$ is $(\bf{u}, \bf{v} )$-{\em regular} if 
$\bf{v}^{\top} X = 0$ if and only if $X\bf{u} = 0$. i.e., $\bf{u}$ and $\bf{v}$  
simultaneously generate the right and left kernels of $X$, or neither generate a kernel.
\item A \emph{$\mathcal{P}$-regular fundamental matrix} is a fundamental matrix of $\mathcal{P}$ that is $(\bf{u}_i, \bf{v}_i)$-regular for each point pair in $\mathcal{P}$.
\end{enumerate}
\end{definition}


It is commonly believed that $\mathcal{P}$ has a projective 
reconstruction if and only if it has a fundamental matrix. However, a bit more care is needed as in the following theorem. 

 \begin{theorem} \cite[Theorem 4.6]{Lee16} 
 \label{thm:equivalentprconditions}
 There exists a finite projective reconstruction of $\mathcal{P}$ with 
 two non-coincident cameras, and $A_1 =\begin{bmatrix} I & \mathbf{0}\end{bmatrix}$,  if and only if there exists a $\mathcal{P}$-regular fundamental matrix.

 \end{theorem}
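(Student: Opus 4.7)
The plan is to prove both implications by working explicitly with a camera pair of the form $A_1 = [I\ \mathbf{0}]$, $A_2 = [G\ \mathbf{t}]$ with $G \in \GL_3$ and $\mathbf{t} \neq \mathbf{0}$, and its associated fundamental matrix $X = [\mathbf{t}]_\times G$.

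For the forward implication, suppose $(A_1, A_2, \mathcal{Q})$ is a finite projective reconstruction of the stated form, and set $X := [\mathbf{t}]_\times G$. Then $X$ has rank $2$ since $[\mathbf{t}]_\times$ has rank $2$ and $G$ is invertible. Writing each $\mathbf{q}_i = (\mathbf{y}_i, \alpha_i)^\top \in \R^4$, the reconstruction equations read $w_{1i}\mathbf{u}_i = \mathbf{y}_i$ and $w_{2i}\mathbf{v}_i = G\mathbf{y}_i + \alpha_i \mathbf{t}$. A direct substitution using $\mathbf{t}^\top [\mathbf{t}]_\times = 0$ confirms the epipolar equations $\mathbf{v}_i^\top X \mathbf{u}_i = 0$, so $X$ is a fundamental matrix of $\mathcal{P}$. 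For regularity, invertibility of $G$ yields the equivalences $X \mathbf{u}_i = 0 \iff G\mathbf{u}_i \sim \mathbf{t}$ and $\mathbf{v}_i^\top X = 0 \iff \mathbf{v}_i \sim \mathbf{t}$; combining these with the reconstruction equations shows that both conditions are equivalent to $G\mathbf{y}_i \sim \mathbf{t}$, i.e., to $\mathbf{q}_i$ lying on the baseline, so $X$ is $\mathcal{P}$-regular.

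For the reverse implication, given a $\mathcal{P}$-regular $X$, let $\mathbf{e}_2 \in \R^3$ generate the left kernel of $X$ and use the standard canonical construction (see \cite{HartleyZisserman2004}) to set $A_2 = \bigl[\,[\mathbf{e}_2]_\times X + \mathbf{e}_2 \mathbf{w}^\top \ \mathbf{e}_2\,\bigr]$, where the auxiliary vector $\mathbf{w}$ is chosen so that the $3 \times 3$ block is invertible (a generic $\mathbf{w}$ works, since $[\mathbf{e}_2]_\times X$ has rank $2$ with left null vector $\mathbf{e}_2$). A short calculation using $[\mathbf{e}_2]_\times^2 = \mathbf{e}_2 \mathbf{e}_2^\top - \|\mathbf{e}_2\|^2 I$ and $\mathbf{e}_2^\top X = 0$ shows that $(A_1, A_2)$ is finite, non-coincident, and has fundamental matrix a scalar multiple of $X$. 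It remains to triangulate each point pair. The preimage of $\mathbf{u}_i$ under $A_1$ is the pencil $\{(\mathbf{u}_i, \alpha)^\top : \alpha \in \R\}$ together with $\mathbf{c}_1$, and its image under $A_2$ sweeps out the epipolar line $\{\mathbf{x} : \mathbf{x}^\top X \mathbf{u}_i = 0\}$ whenever $X\mathbf{u}_i \neq \mathbf{0}$. In that case, $\mathbf{v}_i$ also lies on this line (by the epipolar equation) and is not the epipole (by regularity), so a unique finite $\mathbf{q}_i$ gives $A_2 \mathbf{q}_i \sim \mathbf{v}_i$. In the remaining case $X\mathbf{u}_i = \mathbf{0}$, regularity forces $\mathbf{v}_i^\top X = 0$ as well, so $\mathbf{u}_i \sim \mathbf{e}_1$ and $\mathbf{v}_i \sim \mathbf{e}_2$; here one picks $\mathbf{q}_i$ to be any finite point on the baseline other than the two camera centers, and both camera conditions hold automatically.

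The principal obstacle is this last degenerate case, where $\mathcal{P}$-regularity is precisely what is needed: if $X\mathbf{u}_i = 0$ held without $\mathbf{v}_i^\top X = 0$, then any candidate $\mathbf{q}_i$ satisfying $A_1 \mathbf{q}_i \sim \mathbf{u}_i \sim \mathbf{e}_1$ would be forced onto the baseline, producing $A_2 \mathbf{q}_i \sim \mathbf{e}_2 \not\sim \mathbf{v}_i$, a contradiction. Thus regularity is both sufficient for triangulation and necessary for the existence of a reconstruction with $A_1 = [I\ \mathbf{0}]$. The rank and epipolar-equation computations are routine; the genuine technical content lies in tracking the two kernel conditions in tandem with the reconstruction equations across the degenerate baseline stratum.
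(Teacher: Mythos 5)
The paper does not actually prove this statement --- it is quoted from \cite[Theorem 4.6]{Lee16} (with a remark translating Lee's notion of regularity into the paper's), so your proposal is being measured against the standard construction rather than against an argument in the text. Your forward direction is correct and complete: the rank computation, the verification of the epipolar equations via skew-symmetry of $[\mathbf{t}]_\times$, and the chain of equivalences $X\mathbf{u}_i = 0 \iff G\mathbf{u}_i \sim \mathbf{t} \iff \mathbf{v}_i \sim \mathbf{t} \iff \mathbf{v}_i^\top X = 0$ all check out, and correctly identify regularity with the baseline condition.

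There is, however, a genuine gap in the reverse direction, at exactly the word the theorem is about: \emph{finite}. In the non-degenerate case $X\mathbf{u}_i \neq \mathbf{0}$ you assert that ``a unique finite $\mathbf{q}_i$ gives $A_2\mathbf{q}_i \sim \mathbf{v}_i$.'' What is true is that $A_2$ restricts to a projective isomorphism from the back-projected line $\{(\alpha\mathbf{u}_i, \beta)^\top\}$ onto the epipolar line spanned by $G\mathbf{u}_i$ and $\mathbf{e}_2$, so there is a unique $\mathbf{q}_i = (\alpha\mathbf{u}_i,\beta)^\top$ on that line mapping to $\mathbf{v}_i$; regularity guarantees $\alpha \neq 0$ (else $\mathbf{v}_i \sim \mathbf{e}_2$, forcing $X\mathbf{u}_i = \mathbf{0}$), but nothing so far guarantees $\beta \neq 0$. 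Indeed $\beta = 0$ occurs precisely when $\mathbf{v}_i \sim G\mathbf{u}_i$, in which case the unique triangulated point lies on the plane at infinity and the reconstruction is not finite in the sense of \Cref{def:projective reconstruction}. The fix is available inside your own construction: replacing $\mathbf{w}$ by $\mathbf{w}'$ changes $G\mathbf{u}_i$ to $G\mathbf{u}_i + ((\mathbf{w}'-\mathbf{w})^\top\mathbf{u}_i)\,\mathbf{e}_2$, which slides $G\mathbf{u}_i$ along the epipolar line; since $\mathbf{v}_i \not\sim \mathbf{e}_2$ for these indices, the bad locus $\{\mathbf{w} : G\mathbf{u}_i \sim \mathbf{v}_i\}$ is a hyperplane in $\mathbf{w}$-space for each $i$, so a generic $\mathbf{w}$ simultaneously makes $G$ invertible and all triangulated points finite. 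You should state this extra genericity requirement explicitly; as written, the claim of finiteness of the $\mathbf{q}_i$ is unjustified. The degenerate (baseline) case is handled correctly.
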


\begin{remark} 
\begin{enumerate}
\item Let $A_1 =\begin{bmatrix} I & \mathbf{0}\end{bmatrix}$ and $A_2 = \begin{bmatrix} G & \bf{t} \end{bmatrix}$ be the two finite non-coincident cameras in the projective reconstruction. Then recall that $G \in \textup{GL}_3$,  $\bf{t} \neq \bf{0}$, and the cameras correspond to the unique fundamental matrix $X =  [\bf{t}]_\times G$ up to scale. The epipoles of the cameras are $\bf{e}_1 \sim  G^{-1} \bf{t}$
and $\bf{e}_2 \sim \bf{t}$ which generate the right and left kernels of 
$X$. Further, $G$ defines a homography of 
$\P^2$ that sends $\bf{e}_1$ to $\bf{e}_2$, and the 
line  $\bf{e}_1\bf{u}_i$ to the line $\bf{e}_2\bf{v}_i$. Hence $G$ encodes the epipolar line homography of \Cref{thm:epipolar homography}. \label{rem:item1}

\item Conversely, any (rank two) fundamental matrix $X$ of $\mathcal{P}$ can be factored as $X = [\bf{t}]_\times G$ for some $\bf{t} \in \R^3$ and 
$G \in \textup{GL}_3$ and yield a pair of cameras $(A_1 =\begin{bmatrix} I & \mathbf{0}\end{bmatrix}, A_2 = \begin{bmatrix} G & \bf{t} \end{bmatrix})$ whose 
epipoles generate the left and right kernels of $X$. See \cite[Section 9.5]{HartleyZisserman2004} for more details on the correspondence between fundamental matrices and camera pairs.
These cameras can then be used to reconstruct a set of world points $Q$ if $X$ is $\mathcal{P}$-regular. The resulting projective reconstruction is said to be 
{\em associated to} $X$. 

\item A fundamental matrix 
is $\mathcal{P}$-regular if and only if 
for each $i$, either $\bf{u}_i$ and $\bf{v}_i$ are both epipoles of the cameras 
or neither are. Indeed, this is a necessary condition for a projective reconstruction since if $\bf{u}_i \sim \bf{e}_1$, then its 
reconstruction $\bf{q}_i \in \P^3$ lies on the baseline of the cameras 
and hence images to $\bf{e}_2$ in camera $A_2$ requiring 
$\bf{v}_i \sim \bf{e}_2$. 
This subtlety is often overlooked, and it is common to equate the existence of a projective reconstruction of $\mathcal{P}$ to the existence of a fundamental matrix of $\mathcal{P}$. 

\item Lastly, we remark that \cite[Theorem 4.6]{Lee16} is stated using a different notion of regularity. However, a fundamental matrix $X$ is $\mathcal{P}$-regular in our sense if and only if it is $([\bf{t}]_\times X, \bf{t})$-regular in the sense of \cite{Lee16} and hence the above theorem is exactly \cite[Theorem 4.6]{Lee16}.
\end{enumerate}
\end{remark}

We now discuss the geometry encoded in Theorem~\ref{thm:equivalentprconditions} which will set the foundation for the work in this paper. Even though fundamental matrices are real, we will work over $\C$ to allow for methods from complex algebraic geometry, and will specialize to $\R$ as needed. A matrix $X \in \P(\C^{3 \times 3})$ can be identified with a point in $\P^8$ by concatenating its rows. Under this identification we let $\mathcal{R}_2 \subset \P^8$ be the determinantal hypersurface of matrices in $\P(\C^{3 \times 3})$ of rank at most two, and $\mathcal{R}_1$ be its subvariety of rank one matrices. As projective subvarieties of $\P^8$, $\dim \mathcal{R}_2 = 7$ and $\textup{degree} \,\,\mathcal{R}_2  = 3$ while, $\dim \mathcal{R}_1 = 4$ and $\textup{degree} \,\,\mathcal{R}_1 = 6$. For a point pair $(\bf{u}, \bf{v}) \in \P^2 \times \P^2$, let $L_{(\bf{u},\bf{v})}$ denote the hyperplane in $\P^8$:
 \begin{align*}
  L_{(\bf{u},\bf{v})} = \{ X \in \P^{8} \,:\, \bf{v}^\top X \bf{u}  = 
  \langle X, \bf{v} \bf{u}^\top \rangle := \textup{Tr}(X^\top \bf{v} \bf{u}^\top) = 0 \}
 \end{align*}
 where $\langle \cdot, \cdot \rangle$ denotes the Frobenius inner product on matrices. Let $\mathcal{L}_\mathcal{P} = \bigcap_{i=1}^k  L_{(\bf{u}_i, \bf{v}_i)}$. Generically, $\mathcal{L}_\mathcal{P}$ is a linear space in $\P^8$ of codimension $k$.
 
 \begin{definition} The variety $\mathcal{R}_2 \cap \mathcal{L}_\mathcal{P}$ in $\P^8$ is the 
  {\em epipolar variety} of $\mathcal{P}$. 
 \end{definition}
Under sufficient genericity of $\mathcal{P}$, $\dim(\mathcal{R}_2 \cap \mathcal{L}_\mathcal{P})  = 7-k$ and  $\textup{degree}(\mathcal{R}_2 \cap \mathcal{L}_\mathcal{P}) = 3$. Hence, the epipolar variety is empty when 
$k \geq 8$, consists of three points when $k=7$, and infinitely many 
points when $k < 7$. 
  
 For $(\bf{u}_i, \bf{v}_i) \in \mathcal{P}$, consider the following five-dimensional linear spaces of $\P^8$ that are in fact in $\RR_2$:
 $$\widetilde{W}_{\bf{u}_i} := \{ X \in \P^{8} \,:\, X \bf{u}_i = 0 \} 
 \,\,\,\textup{ and } \,\,\,\widetilde{W}^{\bf{v}_i} := \{ X \in \P^{8} \,:\, {\bf{v}_i}^\top X = 0 \}.$$ 
 Their intersections with the epipolar variety are the 
 linear spaces:
 $W_{\bf{u}_i} := \mathcal{L}_\mathcal{P} \cap \widetilde{W}_{\bf{u}_i}$ and 
 $W^{\bf{v}_i} := \mathcal{L}_\mathcal{P} \cap \widetilde{W}^{\bf{v}_i}$, 
 each of which generically has dimension $6-k$ since $\widetilde{W}_{\bf{u}_i}, 
 \widetilde{W}^{\bf{v}_i} \subset L_{(\bf{u}_i, \bf{v}_i)}$. 
 
 \begin{definition} \label{def:walls and corners} 
 The linear space $W_{\bf{u}_i}$ (resp. $W^{\bf{v}_j}$) 
will be called the $\bf{u}_i$ (resp. $\bf{v}_j$) {\em wall} and the intersection $W_{\bf{u}_i} \cap W^{\bf{v}_j}$ will be called 
the $(\bf{u}_i, \bf{v}_j)$ {\em corner}. 
\end{definition}

When $i \neq j$,  a $(\bf{u}_i, \bf{v}_j)$ corner 
has dimension $5-k$ generically since there are at most $5$ independent equations among $\bf{v}_j^\top X = 0 = X \bf{u}_i$. Thus a wall has codimension one and a corner has codimension two in the epipolar variety, generically. 
For non-generic data $\mathcal{P}$, all the dimensions computed above may be larger.

The second condition in \Cref{thm:equivalentprconditions} can now be rephrased as the existence of a real rank two matrix $X$ in 
the epipolar variety of $\mathcal{P}$ such that for each $i$, $X$ is either in the $(\bf{u}_i, \bf{v}_i)$ corner or in the complement of $W_{\bf{u}_i} \cup W^{\bf{v}_i}$.
Note that a rank two $X$ can lie in at most one $(\bf{u}_i, \bf{v}_i)$ corner because the points $\vu_i$ (and $\vv_j$) are pairwise distinct.

Going forward, we will work both in $\P^8$, the space of fundamental matrices, and in $\P^2 \times \P^2$, the space of epipoles. These spaces are related by the {\em adjoint map}, 
\[
\adj : \P^8 \dashrightarrow \P^8, \quad X \mapsto \adj(X)
\]
where $\adj(X) = \cof(X)^\top$ and $\cof(X)$ is the cofactor matrix of $X$. If $X \in \RR_2$ then $X \cdot \textup{adj}(X) = \textup{adj}(X) \cdot X = 0$ and thus, if $\rank(X)=2$, then all non-zero rows (resp.~columns) of $\adj(X)$ are multiples of each other and generate the left (resp.~right) kernel of $X$. Since generators of the right and left kernels of a fundamental matrix represent epipoles, the adjoint map provides a convenient connection between epipole space and fundamental matrix space.

\subsection{Chiral reconstructions}
A physical constraint on a true reconstruction 
$(A_1,A_2, \mathcal{Q})$ is that the reconstructed world points
in $\mathcal{Q}$ must lie in front of the cameras $A_1$ and $A_2$. Recall from the Introduction that this means we require 
$\mathcal{Q}$ to lie in the chiral domain of $(A_1,A_2)$ or equivalently, $\chi(\bf{q}_i; (A_1, A_2))=1$ for all $\bf{q}_i \in \mathcal{Q}$. 
A full development of multiview chirality can be found in \cite{APST2020multiview}. For this paper, we use the following inequality description of the chiral domain for two views from \cite[Theorem 1]{APST2020multiview} as a definition. 
The cited result shows that these inequalities cut out the Euclidean closure of the set in \Cref{def:chiral domain} (under the mild assumption that it has non-empty interior).

\begin{definition}
\label{def:chiral reconstruction}
A \emph{chiral reconstruction} of $\mathcal{P}$ is a projective reconstruction $(A_1, A_2, \mathcal{Q})$ of $\mathcal{P}$ with finite non-coindent cameras such that for all $i$,
\[
(\mathbf{n}_\infty^\top \mathbf{q}_i) (\mathbf{n}_1^\top \mathbf{q}_i)   \geq 0, \ (\mathbf{n}_\infty^\top \mathbf{q}_i) (\mathbf{n}_2^\top \mathbf{q}_i) \ge 0 , \text{ and } \
(\mathbf{n}_1^\top \mathbf{q}_i)(\mathbf{n}_2^\top \mathbf{q}_i)\geq 0
\]
where $\bf{n}_\infty = (0,0,0,1)^{\top}$ and  $\mathbf{n}_i$ is the principal ray of $A_i$.
\end{definition}

 Recall that two projective reconstructions $(A_1, A_2, \mathcal{Q})$ and $(A_1', A_2', \mathcal{Q}')$ are 
 {\em projectively equivalent} if they are 
 related by a homography of $\P^3$, i.e., there is a $H \in \GL_4$ such that $A_i' = A_i H^{-1}$ and $\mathcal{Q}' = H \mathcal{Q} := \{ H\mathbf{q}_i, \,i=1,\ldots,k \}$. A projective reconstruction which is not chiral can sometimes be transformed into a chiral reconstruction by a homography \cite{hartley1998chirality, APST2020multiview,  WernerPajdla2001}. We recall the conditions under which 
 this is possible. 
\begin{theorem}
\label{thm:equivalentchiralconditions}
Consider a finite projective reconstruction of $\mathcal{P}$ with 
 non-coincident cameras $A_1 =\begin{bmatrix} I & \mathbf{0}\end{bmatrix}$, $A_2 = \begin{bmatrix} G & \mathbf{t}\end{bmatrix}$, and world points $\mathcal{Q} = \{\mathbf{q}_1, \ldots, \mathbf{q}_k \} \subset \P^3$. Then the following are equivalent.

\begin{enumerate}
    \item There exists a projectively equivalent chiral reconstruction $(A_1H^{-1} , A_2 H^{-1} , H\mathcal{Q})$ of $\mathcal{P}$. \label{item:exists chiral}
    \item $(\bf{n}_1^\top \bf{q}_i)(\bf{n}_2^\top \bf{q}_i)$ has  the same sign for all $i$. \label{item:principal rays}
    \item $w_{1i}w_{2i}$ has the same sign for all $i$. \label{item:wiwi'}
\end{enumerate}
Furthermore, if no $\bf{q}_i$ lies on the baseline of 
$(A_1, A_2)$, then (\ref{item:exists chiral}), (\ref{item:principal rays}),  (\ref{item:wiwi'}) are equivalent to 
\begin{enumerate}\setcounter{enumi}{3}
    \item $(\bf{t} \times \bf{v}_i )^\top( \bf{t}  \times G \bf{u}_i )$  has the same sign for all $i$. \label{item:cji}
\end{enumerate}

\end{theorem}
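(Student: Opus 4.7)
The plan is to prove $(2)\Leftrightarrow(3)$, then $(1)\Rightarrow(2)$, then $(3)\Rightarrow(1)$, and finally $(3)\Leftrightarrow(4)$ under the baseline hypothesis. Conditions (2), (3), (4) will all be shown to differ only by $i$-independent nonzero scalars from one per-index quantity, so the equivalences among them amount to one-line computations; the real content of the theorem is $(3)\Rightarrow(1)$, where an actual homography $H$ must be built.

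First, reading off the third row of $A_k\mathbf{q}_i=w_{ki}\mathbf{u}_i$ and using $(\mathbf{u}_i)_3=1$ gives $w_{ki}=(A_k)_{3,\bullet}\mathbf{q}_i$. The definition $\mathbf{n}_k=\det(G_k)(A_k)_{3,\bullet}^\top$ then yields the key identity $\mathbf{n}_k^\top\mathbf{q}_i=\det(G_k)\,w_{ki}$, so
\[
(\mathbf{n}_1^\top\mathbf{q}_i)(\mathbf{n}_2^\top\mathbf{q}_i)=\det(G_1)\det(G_2)\,w_{1i}w_{2i}
\]
with an $i$-independent nonzero prefactor, giving $(2)\Leftrightarrow(3)$. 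For $(1)\Rightarrow(2)$, I would observe that $(A_kH^{-1})(H\mathbf{q}_i)=A_k\mathbf{q}_i=w_{ki}\mathbf{u}_i$, so the same third-row identity applied to the transformed cameras yields $\mathbf{n}'_k{}^\top(H\mathbf{q}_i)=(\det(G'_k)/\det(G_k))\,\mathbf{n}_k^\top\mathbf{q}_i$, again an $i$-independent rescaling. The chirality inequality $(\mathbf{n}'_1{}^\top H\mathbf{q}_i)(\mathbf{n}'_2{}^\top H\mathbf{q}_i)\ge 0$ from \Cref{def:chiral reconstruction} then forces $(\mathbf{n}_1^\top\mathbf{q}_i)(\mathbf{n}_2^\top\mathbf{q}_i)$ to share one common sign across $i$.

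The main obstacle is $(3)\Rightarrow(1)$, which I would address via Hartley's classical quasi-affine construction. Rescaling each $\mathbf{q}_i$ by $\pm 1$ turns (3) into $w_{1i}w_{2i}\ge 0$ for every $i$, and flipping the representative of a camera if necessary further gives $w_{1i},w_{2i}\ge 0$ individually. The task is then to exhibit a vector $\pi\in\R^4$ with $\pi^\top\mathbf{q}_i>0$ for all $i$ and $\pi^\top\mathbf{c}_k>0$ for $k=1,2$, and take $H$ to have $\pi^\top$ as its fourth row (choosing its top $3\times 3$ block generically so that $H$ is invertible and $\det(G'_1),\det(G'_2)>0$). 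With this choice, the three inequalities of \Cref{def:chiral reconstruction} reduce to the positivity of $\pi^\top\mathbf{q}_i$, of $\pi^\top\mathbf{q}_i\cdot\det(G'_k)w_{ki}$, and of $\det(G'_1)\det(G'_2)w_{1i}w_{2i}$, all of which hold by construction. The existence of $\pi$ is Hartley's chirality theorem \cite{hartley1998chirality} (rephrased in the quasi-affine language of \cite[Section 5]{APST2020multiview}): the uniform sign of the $w_{ki}$'s is exactly the Farkas-type certificate for nonemptiness of the polar cone $\{\pi:\pi^\top\mathbf{q}_i>0\ \text{for all}\ i,\ \pi^\top\mathbf{c}_k>0\}$. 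The delicate point in this step is accommodating any $\mathbf{q}_i$ on the baseline, where one of the chirality products can vanish and the separating hyperplane must be chosen in general position; this is where I expect most of the work to go.

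Finally, for $(3)\Leftrightarrow(4)$ under the baseline-free assumption, I would write $\mathbf{q}_i=(\mathbf{q}'_i,q_i^4)^\top$, so that $A_1\mathbf{q}_i=\mathbf{q}'_i=w_{1i}\mathbf{u}_i$ and $A_2\mathbf{q}_i=w_{1i}G\mathbf{u}_i+q_i^4\mathbf{t}=w_{2i}\mathbf{v}_i$. Crossing with $\mathbf{t}$ annihilates the $q_i^4\mathbf{t}$ term and yields
\[
w_{2i}(\mathbf{t}\times\mathbf{v}_i)=w_{1i}(\mathbf{t}\times G\mathbf{u}_i),
\]
and since $\mathbf{q}_i$ is off the baseline neither cross product vanishes. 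Taking the inner product of both sides with $(\mathbf{t}\times G\mathbf{u}_i)$ gives $w_{1i}w_{2i}(\mathbf{t}\times\mathbf{v}_i)^\top(\mathbf{t}\times G\mathbf{u}_i)=w_{1i}^2\|\mathbf{t}\times G\mathbf{u}_i\|^2\ge 0$, so the signs of $w_{1i}w_{2i}$ and $(\mathbf{t}\times\mathbf{v}_i)^\top(\mathbf{t}\times G\mathbf{u}_i)$ agree index by index, closing the circle of equivalences.
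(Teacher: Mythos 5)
Your proposal is correct and is genuinely more self-contained than the paper's proof, which is almost entirely by citation: the paper invokes \cite[Theorem 8]{APST2020multiview} for $(1)\Leftrightarrow(2)$, \cite[Theorem 17]{hartley1998chirality} for $(1)\Leftrightarrow(3)$, and, after checking that $(\mathbf{t}\times\mathbf{v}_i)^\top(\mathbf{t}\times G\mathbf{u}_i)\neq 0$ off the baseline via collinearity of $\mathbf{t}$, $\mathbf{v}_i$, $G\mathbf{u}_i$, defers $(3)\Leftrightarrow(4)$ to the arguments of \cite[Lemma 7]{APST2020multiview}. You instead make the scalar bookkeeping explicit: the identity $\mathbf{n}_k^\top\mathbf{q}_i=\det(G_k)w_{ki}$ (which uses that $w_{ki}\neq 0$ by \Cref{def:projective reconstruction}, so all products are strictly nonzero) gives $(2)\Leftrightarrow(3)$ and $(1)\Rightarrow(3)$ in one line, and the relation $w_{2i}(\mathbf{t}\times\mathbf{v}_i)=w_{1i}(\mathbf{t}\times G\mathbf{u}_i)$, obtained by crossing the second camera equation with $\mathbf{t}$, gives $(3)\Leftrightarrow(4)$ directly and simultaneously re-proves the non-vanishing the paper establishes separately. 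The one step where you and the paper stand on the same ground is the genuinely hard implication $(3)\Rightarrow(1)$: you sketch Hartley's quasi-affine construction (find $\pi$ with $\pi^\top\mathbf{q}_i>0$ and $\pi^\top\mathbf{c}_k>0$ and put $\pi^\top$ as the last row of $H$) but ultimately cite the same Theorem 17 of \cite{hartley1998chirality} that the paper cites, so neither account is self-contained there. Two small corrections to your commentary on that step: the Farkas-type argument needs the representatives of $\mathbf{c}_1,\mathbf{c}_2$ to be chosen with the right signs (so that, e.g., $\mathbf{n}_1^\top\mathbf{c}_2\ge 0$) before one can conclude $0\notin\conv\{\mathbf{q}_i,\mathbf{c}_1,\mathbf{c}_2\}$, and the truly delicate configurations are those with an epipole at infinity, not world points on the baseline --- baseline points still have $w_{1i},w_{2i}\neq 0$ and cause no trouble in $(3)\Rightarrow(1)$ (they only matter for item (4), which you handle correctly by excluding them).
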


\begin{proof}
The equivalence of (\ref{item:exists chiral}) and (\ref{item:principal rays}) is Theorem 8 in \cite{APST2020multiview}. The equivalence of (\ref{item:exists chiral}) and (\ref{item:wiwi'}) is Theorem 17 in \cite{hartley1998chirality}. 
The epipoles of the given cameras are  
$\bf{e}_1 \sim G^{-1}\bf{t}$ and 
$\bf{e}_2 \sim \bf{t}$, and hence if no world point lies on the baseline, $G^{-1}\bf{t} \not \sim \bf{u}_i$ (equivalently, $\bf{t} \not \sim G\bf{u}_i$) and $\bf{t} \not \sim  \bf{v}_i$ for any $i$. Also, since $A_1, A_2$ are non-coincident and $\bf{t} \neq \bf{0}$, \Cref{thm:epipolar homography} and \Cref{rem:item1} imply that $\bf{t}, \bf{v}_i, G\bf{u}_i$ are  collinear. Therefore, 
$(\bf{t} \times \bf{v}_i ) \sim ( \bf{t}  \times G \bf{u}_i )$ and so 
$(\bf{t} \times \bf{v}_i )^\top( \bf{t}  \times G \bf{u}_i ) \neq 0$ for all $i$. The equivalence of (\ref{item:wiwi'}) and (\ref{item:cji}) can then be derived from the same arguments as in Lemma 7 in \cite{APST2020multiview}. 
\end{proof}

If a world point lies on the baseline, then its images $(\bf{u}_i, \bf{v}_i)$ in 
$A_1,A_2$ are the epipoles of the cameras, and the expression in (\ref{item:cji}) becomes zero. However, since any point on the baseline can serve as the world point $\bf{q}_i$ imaging to $(\bf{u}_i, \bf{v}_i)$,  we can control the sign of 
$(\mathbf{n}_1^\top \mathbf{q}_i)(\mathbf{n}_2^\top \mathbf{q}_i)$ as shown next. 

\begin{lemma}
\label{lem:replace baseline point}
For a pair of non-coincident cameras $(A_1, A_2)$ whose baseline is not contained in either principal plane $L_{A_1}$ and $L_{A_2}$, there exist $\mathbf{q}_+$ and $\mathbf{q}_-$ on the baseline such that $ (\mathbf{n}_1^\top \mathbf{q}_+)(\mathbf{n}_2^\top \mathbf{q}_+) >0$ and $ (\mathbf{n}_1^\top \mathbf{q}_-)(\mathbf{n}_2^\top \mathbf{q}_-) < 0$.
\end{lemma}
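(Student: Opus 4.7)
The plan is to parametrize the baseline explicitly and reduce the claim to a simple sign-choice argument. Write the baseline as the projective line $\mathbf{q}(s,t) = s\mathbf{c}_1 + t\mathbf{c}_2$ with $(s,t) \neq (0,0)$, using the Cramer's rule representatives of the two camera centers. The core observation is that the center of each camera lies on its own principal plane: since $A_i \mathbf{c}_i = 0$, in particular the third row of $A_i$ annihilates $\mathbf{c}_i$, and hence $\mathbf{n}_i^\top \mathbf{c}_i = 0$ for $i=1,2$.

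Using this, I compute
\[
\mathbf{n}_1^\top \mathbf{q}(s,t) = t\,(\mathbf{n}_1^\top \mathbf{c}_2), \qquad \mathbf{n}_2^\top \mathbf{q}(s,t) = s\,(\mathbf{n}_2^\top \mathbf{c}_1),
\]
so that
\[
(\mathbf{n}_1^\top \mathbf{q}(s,t))(\mathbf{n}_2^\top \mathbf{q}(s,t)) = st\,(\mathbf{n}_1^\top \mathbf{c}_2)(\mathbf{n}_2^\top \mathbf{c}_1).
\]
Next I invoke the hypothesis that the baseline is not contained in either principal plane. Since $\mathbf{c}_1$ already lies in $L_{A_1}$, the baseline fails to be contained in $L_{A_1}$ exactly when $\mathbf{c}_2 \notin L_{A_1}$, i.e.\ $\mathbf{n}_1^\top \mathbf{c}_2 \neq 0$; symmetrically, $\mathbf{n}_2^\top \mathbf{c}_1 \neq 0$. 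Therefore the scalar $\alpha := (\mathbf{n}_1^\top \mathbf{c}_2)(\mathbf{n}_2^\top \mathbf{c}_1)$ is a nonzero real number with some definite sign.

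To finish, I choose $(s,t)$ so that $st$ has either sign: if $\alpha > 0$, take $(s,t) = (1,1)$ for $\mathbf{q}_+$ and $(s,t) = (1,-1)$ for $\mathbf{q}_-$; if $\alpha < 0$, reverse the choices. In each case $st\,\alpha$ is positive in one choice and negative in the other, producing points $\mathbf{q}_+, \mathbf{q}_-$ on the baseline with the required sign behavior. There is no serious obstacle here; the only point meriting care is the correct translation of the hypothesis ``baseline not contained in the principal plane'' into the numerical condition $\mathbf{n}_i^\top \mathbf{c}_j \neq 0$ for $i \neq j$, which uses that each camera center automatically sits on its own principal plane.
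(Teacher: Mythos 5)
Your proof is correct and rests on the same two observations as the paper's own argument: each center lies on its own principal plane, so $\mathbf{n}_i^\top\mathbf{c}_i=0$, and the non-containment hypothesis forces $\mathbf{n}_1^\top\mathbf{c}_2\neq 0$ and $\mathbf{n}_2^\top\mathbf{c}_1\neq 0$. The paper finishes by a continuity/perturbation argument around $\mathbf{c}_1$, whereas you make the same step explicit via the parametrization $s\mathbf{c}_1+t\mathbf{c}_2$ and the identity $(\mathbf{n}_1^\top\mathbf{q})(\mathbf{n}_2^\top\mathbf{q})=st\,(\mathbf{n}_1^\top\mathbf{c}_2)(\mathbf{n}_2^\top\mathbf{c}_1)$; this is essentially the same proof, just more concrete.
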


\begin{proof}
Since $\bf{c}_1 \in L_{A_1}$, $\bf{n}_1^\top \bf{c}_1 = 0$. On the other hand, 
since the baseline is not contained in $L_{A_2}$ and $\bf{c}_1 \neq \bf{c}_2$, $\bf{c}_1 \notin L_{A_2}$, 
and so $(\bf{n}_2^\top \bf{c}_1) \neq 0$. By continuity, there exist perturbations $\bf{q}_+$ and $\bf{q}_-$ of $\bf{c}_1$ on the baseline such that $ (\mathbf{n}_1^\top \mathbf{q}_+)(\mathbf{n}_2^\top \mathbf{q}_+) >0$ and $ (\mathbf{n}_1^\top \mathbf{q}_-)(\mathbf{n}_2^\top \mathbf{q}_-) < 0$.
\end{proof}

\begin{remark}
For reconstructions where both epipoles 
are finite, the hypothesis of \Cref{lem:replace baseline point} is satisfied. Indeed, if for instance the baseline was contained in the principal plane $L_{A_1}$ then $\bf{c}_2 \in L_{A_1}$ and so $\bf{e}_1 \sim A_1 \bf{c}_2$ would be an infinite point.
\end{remark}

We now have a necessary and sufficient condition for the existence of a chiral 
reconstruction.

\begin{lemma}
\label{lem:reconstruct from chiral joint image}
There exists a chiral reconstruction of $\mathcal{P}$ if and only if there exist $\bf{t} \in \R^3 \setminus \{\bf{0}\}$ and $G \in GL_3$ such that $[\bf{t}]_\times G$ is a $\mathcal{P}$-regular fundamental matrix and
 \begin{align}
\label{eq:chiral joint image nonstrict} \left((\bf{t} \times \bf{v}_i )^\top( \bf{t}  \times G \bf{u}_i )\right)\left( ( \bf{t} \times \bf{v}_j )^\top( \bf{t}  \times G \bf{u}_j )\right) \ge 0 \,\, \text{ for all } \,\,1 \le i< j \le k.
 \end{align}
\end{lemma}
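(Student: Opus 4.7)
The plan is to derive both implications from \Cref{thm:equivalentprconditions} and \Cref{thm:equivalentchiralconditions}, using \Cref{lem:replace baseline point} to handle reconstructed points that land on the baseline.

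For the forward direction, a chiral reconstruction is in particular a finite projective reconstruction of $\mathcal{P}$ with non-coincident cameras. The normalization cited before \Cref{thm:equivalentprconditions} (from \cite{Lee16}) produces a projectively equivalent reconstruction with $A_1 = [I \,|\, \mathbf{0}]$, $A_2 = [G \,|\, \mathbf{t}]$, and world points $\mathbf{q}_i'$, so that $[\mathbf{t}]_\times G$ is a $\mathcal{P}$-regular fundamental matrix by \Cref{thm:equivalentprconditions}. The original chiral reconstruction is a witness that condition (\ref{item:exists chiral}) of \Cref{thm:equivalentchiralconditions} holds for the normalized reconstruction, so condition (\ref{item:principal rays}) also holds there, and the values $(\mathbf{n}_1^\top \mathbf{q}_i')(\mathbf{n}_2^\top \mathbf{q}_i')$ share a common sign. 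For indices whose reconstructed point lies off the baseline, (\ref{item:principal rays})$\Leftrightarrow$(\ref{item:cji}) converts this into a common sign for $(\mathbf{t} \times \mathbf{v}_i)^\top(\mathbf{t} \times G \mathbf{u}_i)$; for the remaining baseline indices, $\mathcal{P}$-regularity forces $\mathbf{v}_i \sim \mathbf{e}_2 \sim \mathbf{t}$, so $(\mathbf{t} \times \mathbf{v}_i)^\top(\mathbf{t} \times G \mathbf{u}_i) = 0$. Every pairwise product in (\ref{eq:chiral joint image nonstrict}) is therefore non-negative.

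For the converse, \Cref{thm:equivalentprconditions} produces a finite projective reconstruction $(A_1, A_2, \mathcal{Q})$ with $A_1 = [I \,|\, \mathbf{0}]$ and $A_2 = [G \,|\, \mathbf{t}]$ from the $\mathcal{P}$-regular matrix $[\mathbf{t}]_\times G$. By hypothesis (\ref{eq:chiral joint image nonstrict}), all nonzero values of $(\mathbf{t} \times \mathbf{v}_i)^\top(\mathbf{t} \times G \mathbf{u}_i)$ share a common sign $\sigma$, and by \Cref{thm:equivalentchiralconditions}(\ref{item:cji})$\Leftrightarrow$(\ref{item:principal rays}) the products $(\mathbf{n}_1^\top \mathbf{q}_i)(\mathbf{n}_2^\top \mathbf{q}_i)$ have sign $\sigma$ at every non-baseline index. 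For each baseline index $i$ we have $\mathbf{u}_i \sim \mathbf{e}_1$ and $\mathbf{v}_i \sim \mathbf{e}_2$ by $\mathcal{P}$-regularity, so any point on the baseline is an admissible choice of $\mathbf{q}_i$. If both epipoles are finite, then the baseline lies in neither principal plane, and \Cref{lem:replace baseline point} supplies a replacement with $(\mathbf{n}_1^\top \mathbf{q}_i)(\mathbf{n}_2^\top \mathbf{q}_i)$ of sign $\sigma$; if some epipole is infinite, the baseline lies in the corresponding principal plane, forcing $(\mathbf{n}_1^\top \mathbf{q}_i)(\mathbf{n}_2^\top \mathbf{q}_i) = 0$ for every baseline point, which is compatible with $\sigma$. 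After these adjustments, \Cref{thm:equivalentchiralconditions}(\ref{item:principal rays}) holds for all $i$, so (\ref{item:principal rays})$\Rightarrow$(\ref{item:exists chiral}) produces a projectively equivalent chiral reconstruction of $\mathcal{P}$.

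The only subtle step is the treatment of reconstructed baseline points, for which the algebraic expression in (\ref{eq:chiral joint image nonstrict}) is uninformative. \Cref{lem:replace baseline point} is precisely the tool for the finite-epipole case, and the infinite-epipole subcase reduces to the observation that the baseline then lies in the corresponding principal plane, killing one of the depth linear forms along its entire length.
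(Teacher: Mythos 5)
Your proof is correct and follows essentially the same route as the paper's: both directions reduce to \Cref{thm:equivalentprconditions} and \Cref{thm:equivalentchiralconditions} after splitting the indices into baseline and non-baseline world points, with \Cref{lem:replace baseline point} used in the converse to fix the sign at baseline points. The one place you go beyond the paper is in verifying the hypothesis of \Cref{lem:replace baseline point} (that the baseline lies in neither principal plane) via the finite/infinite epipole dichotomy; the paper applies that lemma without comment, so this is a minor but welcome refinement rather than a different approach.
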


\begin{proof}
Suppose $(A_1, A_2, \mathcal{Q})$ is a chiral reconstruction of $\mathcal{P}$ 
with non-coincident finite cameras where $A_1 = \begin{bmatrix} I & \bf{0}\end{bmatrix}$. Then  $A_2 =\begin{bmatrix} G& \bf{t}\end{bmatrix}$ for some $\bf{t} \in \R^3 \setminus \{\bf{0}\}$ and $G \in GL_3$, and by \Cref{thm:equivalentprconditions}, $[\bf{t}]_\times G$ is a $\mathcal{P}$-regular fundamental matrix associated to $A_1$ and $A_2$. 
For all $i$ such that $\bf{q}_i$ is not on the baseline, $(\bf{t} \times \bf{v}_i )^\top( \bf{t}  \times G \bf{u}_i )$ has the same sign by \Cref{thm:equivalentchiralconditions}. If some world point $\bf{q}_i$ is on the baseline, then its image $(\bf{u}_i, \bf{v}_i)$ is the pair of epipoles $(G^{-1}\bf{t}, \bf{t})$, and  hence $(\bf{t} \times \bf{v}_i )^\top( \bf{t}  \times G \bf{u}_i) = 0$. Therefore, the inequalities in (\ref{eq:chiral joint image nonstrict}) hold. 

Conversely, suppose there exist $\bf{t} \in \R^3 \setminus \{\bf{0}\}$ and $G \in GL_3$ such that $[\bf{t}]_\times G$ is a $\mathcal{P}$-regular fundamental matrix and the inequalities (\ref{eq:chiral joint image nonstrict}) hold. By \Cref{thm:equivalentprconditions}, there exist world points $\mathcal{Q}$ such that $(A_1 = \begin{bmatrix} I & \bf{0}\end{bmatrix}, A_2 = \begin{bmatrix} G& \bf{t}\end{bmatrix}, \mathcal{Q})$ is a projective reconstruction of $\mathcal{P}$. Let $\widehat{\mathcal{Q}} \subseteq \mathcal{Q}$ be the set of world points not on the baseline of $A_1$ and $A_2$. Since the inequalities (\ref{eq:chiral joint image nonstrict}) hold, the quadruple products $(\bf{t} \times \bf{v}_i )^\top( \bf{t}  \times G \bf{u}_i)$ have the same sign for all $\bf{q}_i \in \widehat{\mathcal{Q}}$. By \Cref{thm:equivalentchiralconditions}, there exists a constant $\si \in \{-1,1\}$ such that $\si = \sign (\bf{n}_1^\top \bf{q}_i)(\bf{n}_2^\top \bf{q}_i)$ for all $\bf{q}_i \in \widehat{\mathcal{Q}}$.

If some point $\bf{q}_j \in \mathcal{Q}$ lies on the baseline, then $\bf{q}_j$ images to the pair of epipoles $G^{-1} \bf{t}$ and $\bf{t}$ in the two cameras and  hence $(\bf{t} \times \bf{v}_j )^\top( \bf{t}  \times G \bf{u}_j) = 0$. By \Cref{lem:replace baseline point}, we may replace $\bf{q}_j$ by some world point $\bf{q}_j'$ on the baseline such that $\sign(\bf{n}_1^\top \bf{q}_j')(\bf{n}_2^\top \bf{q}_j') = \si$. Let $\mathcal{Q}'$ be the modification of $\mathcal{Q}$ obtained 
by replacing all world points on the baseline as above, but keeping all other world points intact.
By construction, $\sign(\bf{n}_1^\top \bf{q}_i')(\bf{n}_2^\top \bf{q}_i') = \si$ for all $\bf{q}_i' \in \mathcal{Q}'$. The transformed reconstruction $(A_1, A_2, \mathcal{Q}')$ is projectively equivalent to a chiral reconstruction by \Cref{thm:equivalentchiralconditions}.
\end{proof}

\Cref{lem:reconstruct from chiral joint image} implies that for a chiral reconstruction to exist, there must be $\mathcal{P}$-regular fundamental matrices that satisfy the 
inequalities \eqref{eq:chiral joint image nonstrict}. In the next section, we examine these inequalities to understand the regions of the epipolar variety in which fundamental matrices that lead to chiral reconstructions live.

\section{Chiral tools}
\label{sec:chiraltools}

In this section we develop tools to 
prove the existence of chiral reconstructions. In \Cref{subsec:ineq matrix space}, we describe the semialgebraic {\em chiral epipolar region} of fundamental matrices associated to chiral reconstructions of $\mathcal{P}$. In \Cref{subsec:ineq epipole space}, we show how inequalities defining the chiral epipolar region can be checked in epipole space. Even if not stated explicitly, we are working over $\P_\R$ 
when dealing with inequalities. We combine these tools in \Cref{subsec:three pairs} to prove that a set of three point pairs has a chiral reconstruction. In \Cref{subsec:chiral boundary}, we show how the walls and corners of the epipolar variety can be used to decide if a set of more than three point pairs has a chiral reconstruction.

\subsection{The chiral epipolar region}
\label{subsec:ineq matrix space}

By \Cref{lem:reconstruct from chiral joint image}, a fundamental matrix must satisfy \eqref{eq:chiral joint image nonstrict} to yield a chiral reconstruction. In this section, we describe the strict subset of the epipolar variety 
satisfying these constraints.

\begin{definition}
\label{def:gi}
Let $X \in \P(\R^{3\times 3})$ denote a real $3\times 3$-matrix up to scaling. For each point pair $(\bf{u}_i, \bf{v}_i)$, the \emph{$i$-th chiral polynomial} is
\[
g_i(X) := \bf{v}_i^\top [-\bf{t}]_\times X \bf{u}_i = (\bf{t}\times \bf{v}_i)^\top X \bf{u}_i
\]
where $\bf{t}^\top X = \bf{0}$. The set of all $g_ig_j (X) = g_i(X) g_j (X) \ge 0$ are called the \em chiral epipolar inequalities of $\mathcal{P}$. Here, the same representative $\bf{t}$ for the left-kernel of $X$ must be used in $g_i$ and $g_j$.
\end{definition}

The $i$-th chiral polynomial is, strictly speaking, not a polynomial because there is no way to write a generator of the left kernel of a matrix $X$ as a polynomial expression that works for every $3\times 3$ matrix of rank $2$. To be technically precise, it 
is a section of a line bundle on the quasi-projective variety of $3\times 3$ matrices of rank exactly two. We avoid these technicalities and argue that the chiral epipolar inequalities are well-defined in an elementary way using the adjoint: Writing $\bf{t}$ with $\bf{t}^\top X = \bf{0}$ in terms of $X$ is the composition of the adjoint map $\adj\colon \RR_2 \setminus \RR_1 \to \P^8$, whose image is $\P^2\times \P^2$ in its Segre embedding, with the projection $\P^2\times \P^2 \to \P^2$.  So locally, $\bf{t}$ is given as a row of the adjoint matrix $\adj(X)$ (but only on the open set where that row is non-zero). The entries of the matrix $[-\bf{t}]_\times X$ are polynomials of degree three in the entries of $X$. This shows that the inequalities $g_ig_j(X)\geq 0$ are locally of degree six in the entries of $X$. 
Also, if two rows $\bf{t}$ and $\bf{t}'$ of $\adj(X)$ are non-zero and differ by a negative multiple $\lambda\in \R_{<0}$, i.e.~$\bf{t} = \lambda \bf{t}'$, the sign of $g_ig_j(X)$ does not change because it essentially differs by $\lambda^2$.
Therefore the sign of $g_ig_j$ is well-defined for every real $3\times 3$ matrix of rank two up to scaling, i.e.~for every fundamental matrix.
The set of real matrices $X$ in $\RR_2$ for which $g_ig_j(X) \geq 0$ is a semi-algebraic subset of $\P(\R^{3\times 3})$ 
in the following sense: There is an open affine cover of $\RR_2\setminus\RR_1$ (by sets on which we can write $\bf{t}$ as a polynomial function of $X$), such that the inequalities $g_ig_j(X) \geq 0$ become polynomial and hence define a semi-algebraic set in each open subset of (the real points in) this cover. On the intersection of any two open sets in the cover, the regions cut out by these inequalities agree.

We now show that the chiral polynomial $g_i(X)$ records the quadruple product $(\bf{t} \times \bf{v}_i )^\top(\bf{t} \times G\bf{u}_i )$ from 
\Cref{lem:reconstruct from chiral joint image}. 

\begin{lemma}
\label{lem:gigj is quad product}
If $X = [\bf{t}]_\times G$, then  $g_i(X) = (\bf{t} \times \bf{v}_i )^\top( \bf{t}  \times G \bf{u}_i )$ for each $i$.
\end{lemma}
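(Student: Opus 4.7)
The plan is to unwind the definitions and apply the elementary identity $[\bf{a}]_\times \bf{b} = \bf{a}\times \bf{b}$, together with the fact that $\bf{t}$ is a left-kernel generator of $X=[\bf{t}]_\times G$.

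First I would check that the particular vector $\bf{t}$ appearing in the factorization $X=[\bf{t}]_\times G$ may legitimately be used as the kernel representative in Definition~\ref{def:gi}. Since $[\bf{t}]_\times$ is skew-symmetric, $\bf{t}^\top[\bf{t}]_\times = (\bf{t}\times\bf{t})^\top = \bf{0}^\top$, so $\bf{t}^\top X = \bf{t}^\top [\bf{t}]_\times G = \bf{0}$, and $\bf{t}\neq \bf{0}$ because $X$ has rank two. Thus this $\bf{t}$ generates the left kernel of $X$, and $g_i(X)$ may be computed using it. The well-definedness discussion preceding the lemma (invariance of the sign, and in fact of the whole bilinear expression after choosing a representative) ensures there is no ambiguity in equating the two sides.

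Next, I would expand $g_i(X)$ with this choice of $\bf{t}$. By definition,
\[
g_i(X) = (\bf{t}\times\bf{v}_i)^\top X \bf{u}_i.
\]
Substituting $X=[\bf{t}]_\times G$ and using $[\bf{t}]_\times(G\bf{u}_i) = \bf{t}\times (G\bf{u}_i)$, I obtain
\[
X\bf{u}_i = [\bf{t}]_\times G \bf{u}_i = \bf{t}\times (G\bf{u}_i),
\]
so that
\[
g_i(X) = (\bf{t}\times\bf{v}_i)^\top (\bf{t}\times G\bf{u}_i),
\]
which is exactly the claimed identity.

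There is essentially no obstacle here; the only thing one has to be slightly careful about is the sign convention, since $g_i(X)$ is defined via $[-\bf{t}]_\times$. Rewriting $\bf{v}_i^\top[-\bf{t}]_\times = -\bf{v}_i^\top[\bf{t}]_\times$ and using the skew-symmetry identity $\bf{v}_i^\top[\bf{t}]_\times = -(\bf{t}\times\bf{v}_i)^\top$ yields the clean form $(\bf{t}\times \bf{v}_i)^\top X \bf{u}_i$ used above, so the two minus signs cancel and the equality is exact, not merely up to sign.
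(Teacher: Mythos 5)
Your proof is correct and follows essentially the same one-line computation as the paper: substitute $X=[\mathbf{t}]_\times G$, use $[\mathbf{t}]_\times(G\mathbf{u}_i)=\mathbf{t}\times(G\mathbf{u}_i)$, and track the sign from $[-\mathbf{t}]_\times$ via skew-symmetry. Your additional check that the $\mathbf{t}$ in the factorization is a legitimate left-kernel representative is a sensible (if implicit in the paper) preliminary.
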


\begin{proof}
$g_i(X) = \bf{v}_i^\top [-\bf{t}]_\times X \bf{u}_i = ( [-\bf{t}]^\top_\times \bf{v}_i)^\top [\bf{t}]_\times G \bf{u}_i = ( [\bf{t}]_\times \bf{v}_i)^\top [\bf{t}]_\times G \bf{u}_i = (\bf{t}\times \bf{v}_i)^\top (\bf{t}\times G\bf{u}_i).
$
\end{proof}





The next theorem, which is analogous to Theorem 3 in \cite{WernerPajdla2001}, now follows from \Cref{lem:reconstruct from chiral joint image} and \Cref{lem:gigj is quad product}. 

\begin{theorem}
\label{thm:chiral F existence gigj}
 There exists a chiral reconstruction of $\mathcal{P}$ 
 if and only if there exists a $\mathcal{P}$-regular fundamental matrix $X$ such that $g_ig_j(X)  \ge  0 $ for all $1\le i< j \le k$ .
 \end{theorem}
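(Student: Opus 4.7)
The plan is to derive the theorem directly as a corollary of \Cref{lem:reconstruct from chiral joint image} and \Cref{lem:gigj is quad product}, with the only care needed being that the chiral polynomials $g_i(X)$ are defined in terms of the \emph{intrinsic} left kernel of $X$ (up to scale), whereas the inequalities in \Cref{lem:reconstruct from chiral joint image} reference a specific factorization $X = [\bf{t}]_\times G$. I will show that this ambiguity of sign in $\bf{t}$ is immaterial because every chiral epipolar inequality involves $\bf{t}$ in two factors.

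For the forward direction, I assume a chiral reconstruction of $\mathcal{P}$ exists. By \Cref{lem:reconstruct from chiral joint image} there exist $\bf{t} \in \R^3 \setminus \{\bf{0}\}$ and $G \in \GL_3$ such that $X := [\bf{t}]_\times G$ is a $\mathcal{P}$-regular fundamental matrix and \eqref{eq:chiral joint image nonstrict} holds. Because $\bf{t}^\top X = \bf{t}^\top [\bf{t}]_\times G = \bf{0}^\top G = \bf{0}^\top$, the vector $\bf{t}$ spans the left kernel of $X$ and is therefore a legitimate choice for evaluating $g_i(X)$. Applying \Cref{lem:gigj is quad product} gives $g_i(X) = (\bf{t}\times \bf{v}_i)^\top(\bf{t}\times G\bf{u}_i)$ for each $i$, so the inequalities \eqref{eq:chiral joint image nonstrict} immediately read as $g_i(X)g_j(X) \geq 0$ for all $1 \le i < j \le k$.

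For the reverse direction, suppose there exists a $\mathcal{P}$-regular fundamental matrix $X$ with $g_ig_j(X) \ge 0$ for all $i<j$. By the remark following \Cref{thm:equivalentprconditions}, any rank-two $X \in \P(\R^{3\times 3})$ admits a factorization $X = [\bf{t}]_\times G$ with $\bf{t} \in \R^3 \setminus \{\bf{0}\}$ spanning the left kernel of $X$ and $G \in \GL_3$; fix any such factorization. \Cref{lem:gigj is quad product} then identifies $g_i(X) = (\bf{t}\times \bf{v}_i)^\top(\bf{t}\times G\bf{u}_i)$, so the hypothesis $g_ig_j(X)\ge 0$ translates verbatim into the inequalities \eqref{eq:chiral joint image nonstrict}. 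Invoking \Cref{lem:reconstruct from chiral joint image} with this $(\bf{t},G)$ produces a chiral reconstruction of $\mathcal{P}$.

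The one place needing a brief remark is well-definedness: $g_i(X)$ depends on the choice of representative $\bf{t}$ for the one-dimensional left kernel of $X$, changing sign under $\bf{t}\mapsto -\bf{t}$; however, the sign of the product $g_i(X)g_j(X)$ is insensitive to this rescaling (and to rescaling of $X$), as noted in the discussion following \Cref{def:gi}. Thus the condition $g_ig_j(X) \ge 0$ is an intrinsic condition on the fundamental matrix $X$, and the two directions above together yield the equivalence. No further case analysis (e.g.\ regarding world points lying on the baseline) is needed, since all such considerations were absorbed into \Cref{lem:reconstruct from chiral joint image} via \Cref{lem:replace baseline point}. The main obstacle is purely bookkeeping about the $\bf{t}$-ambiguity; the algebraic content of the theorem is entirely contained in the two prior lemmas.
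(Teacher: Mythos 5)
Your proof is correct and follows exactly the route the paper takes: the theorem is stated there as an immediate consequence of \Cref{lem:reconstruct from chiral joint image} and \Cref{lem:gigj is quad product}, which is precisely the two-step argument you give in each direction. Your additional remarks on the sign-ambiguity of $\bf{t}$ and the well-definedness of $g_ig_j$ match the paper's discussion following \Cref{def:gi}, so nothing is missing.
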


\begin{definition}
 \label{def:chiral region}
The \emph{chiral epipolar region of $\mathcal{P}$} is the set of $\mathcal{P}$-regular fundamental matrices $X$ such that $g_ig_j(X)  \ge  0 $ for all $1\le i< j \le k$.
\end{definition}

The chiral epipolar region of $\mathcal{P}$ is contained in the semialgebraic subset of the real part of the epipolar variety $\RR_2 \cap \mathcal{L}_\mathcal{P}$ that is cut out by the chiral epipolar inequalities. It is not necessarily equal to this set because the chiral epipolar region additionally requires the fundamental matrices to be $\mathcal{P}$-regular. However, since $\PP$-regularity only fails on a proper algebraic subset, if the chiral epipolar region has non-empty interior, the boundary of the interior is determined by the points where the chiral epipolar inequalities change sign, which we study next.

\begin{lemma}
\label{lem:chiral zeros}
Let $X$ be a fundamental matrix of $\mathcal{P}$. Then $g_i(X) = 0$ if and only if $X \in W_{\bf{u}_i}$ or $X \in W^{\bf{v}_i}$. 
\end{lemma}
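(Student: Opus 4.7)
The plan is to unfold the definitions and reduce the statement to an elementary linear-algebra computation exploiting that $\bf{t}$ spans the left kernel of $X$. First, since $X$ is assumed to be a fundamental matrix of $\mathcal{P}$, i.e.\ $X \in \mathcal{L}_\mathcal{P}$, the condition $X \in W_{\bf{u}_i}$ collapses to $X \bf{u}_i = \bf{0}$, and $X \in W^{\bf{v}_i}$ collapses to $\bf{v}_i^\top X = \bf{0}$. So I just need to prove that $g_i(X) = 0$ iff $X\bf{u}_i = \bf{0}$ or $\bf{v}_i^\top X = \bf{0}$.

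For the ($\Leftarrow$) direction, each case is immediate from the formula $g_i(X) = (\bf{t}\times \bf{v}_i)^\top (X\bf{u}_i)$: if $X\bf{u}_i = \bf{0}$, the product clearly vanishes; and if $\bf{v}_i^\top X = \bf{0}$, then $\bf{v}_i$ lies in the (one-dimensional, since $\rank(X) = 2$) left kernel of $X$, so $\bf{v}_i \sim \bf{t}$, which gives $\bf{t}\times \bf{v}_i = \bf{0}$ and hence $g_i(X) = 0$.

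For the ($\Rightarrow$) direction, assume $g_i(X) = (\bf{t}\times \bf{v}_i)^\top (X\bf{u}_i) = 0$ and split on whether $\bf{t}$ and $\bf{v}_i$ are linearly dependent. If they are, then (both being nonzero) $\bf{v}_i \sim \bf{t}$, so $\bf{v}_i^\top X = 0$ and $X \in W^{\bf{v}_i}$. If they are linearly independent, then $\bf{t}\times \bf{v}_i \neq \bf{0}$ and in fact spans the orthogonal complement of $\Span(\bf{t}, \bf{v}_i)$ in $\R^3$. I now observe that the vector $X\bf{u}_i$ is orthogonal to both $\bf{t}$ (because $\bf{t}^\top X = \bf{0}$) and $\bf{v}_i$ (because $\bf{v}_i^\top X \bf{u}_i = 0$, as $X$ is a fundamental matrix of $\mathcal{P}$). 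Hence $X\bf{u}_i$ must be a scalar multiple of $\bf{t}\times \bf{v}_i$, and the hypothesis $(\bf{t}\times \bf{v}_i)^\top (X\bf{u}_i) = 0$ forces that scalar to vanish, giving $X\bf{u}_i = \bf{0}$ and therefore $X \in W_{\bf{u}_i}$.

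There is no real obstacle here; the only point that requires care is the clean splitting on whether $\bf{v}_i$ is proportional to the left-kernel generator $\bf{t}$, which is exactly what allows $g_i$ to vanish for two geometrically distinct reasons (a right-kernel coincidence on the $\bf{u}_i$ side, or a left-kernel coincidence on the $\bf{v}_i$ side).
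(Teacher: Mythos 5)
Your proof is correct. Both directions of the ``if'' part coincide with the paper's, and the ``only if'' part rests on the same key observations the paper uses: that $X\mathbf{u}_i$ is orthogonal to $\mathbf{t}$ (since $\mathbf{t}^\top X = 0$), to $\mathbf{v}_i$ (epipolar equation), and --- via the hypothesis $g_i(X)=0$ --- to $\mathbf{t}\times\mathbf{v}_i$. Where you differ is in how the conclusion is extracted. The paper fixes the contrapositive form (assume $X\mathbf{u}_i\neq 0$, deduce $\mathbf{v}_i^\top X=0$), identifies $X\mathbf{u}_i$ up to scale with the double cross product $\mathbf{v}_i\times(\mathbf{v}_i\times\mathbf{t})$, and then invokes the equality case of Cauchy--Schwarz to force $\mathbf{t}$ and $\mathbf{v}_i$ collinear. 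You instead split at the outset on whether $\mathbf{v}_i\sim\mathbf{t}$: in the dependent case you land immediately in $W^{\mathbf{v}_i}$, and in the independent case the three orthogonality constraints pin $X\mathbf{u}_i$ to the line spanned by $\mathbf{t}\times\mathbf{v}_i$, whereupon $g_i(X)=0$ kills the scalar and gives $X\mathbf{u}_i=0$. Your organization is slightly more elementary --- it avoids the Cauchy--Schwarz equality argument entirely and makes transparent why $g_i$ can vanish for exactly two geometric reasons --- while the paper's version packages the same computation into a single chain of implications. Either is a complete proof; there is no gap in yours.
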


\begin{proof}
Clearly, $g_i(X) = \vv_i^\top [-\vt]_\times X \vu_i = 0$ if $X\vu_i = 0$. If $\vv_i^\top X=0$, then $\vv_i$ and $\vt$ are collinear and therefore $\vv_i [-\vt]_\times = \vnull$, which implies $g_i(X) = 0$. For the other implication, we know that $\vv_i^\top X \vu_i = 0$ and $\vv_i^\top [-\vt]_\times X\vu_i = 0$, where the three vectors $\vv_i$, $\vu_i$, and $\vt$ are real and non-zero. We assume that $X\vu_i \neq \vnull$ and show that $\vv_i^\top X = \vnull$. We know $X\vu_i$ is orthogonal to $\vv_i$ and $-\vt\times \vv_i$. Therefore, it must be collinear with $\vv_i\times (\vv_i \times \vt)$, which is the same as $(\vv_i^\top \vt) \vv_i - (\vv_i^\top \vv_i) \vt$. We also know that $\vt^\top X = 0$, which implies that $\vt$ is also orthogonal to $X\vu_i$, hence also to $\vv_i\times (\vv_i\times \vt)$. The dot product $\vt^\top \left( (\vv_i^\top \vt) \vv_i - (\vv_i^\top \vv_i) \vt \right) = 0$, i.e.~$(\vv_i^\top \vt)^2 = (\vv_i^\top \vv_i) (\vt^\top \vt)$. The Cauchy-Schwarz inequality implies that $\vt$ and $\vv_i$ are collinear, which implies the claim $\vv_i^\top X = \vnull$.
\end{proof}

The goal of the paper is to understand when the chiral epipolar region of $\mathcal{P}$ is non-empty, or equivalently, when $\mathcal{P}$ has a chiral reconstruction. When $k=7$, generically $\RR_2 \cap \mathcal{L}_\mathcal{P}$ consists of three points and it is easy to check if the real points lie in the chiral epipolar region of $\mathcal{P}$. Therefore, our focus will be on values of $k < 7$.

\subsection{Translating to epipole space}
\label{subsec:ineq epipole space}
In this section, we show how we can check the validity of chiral epipolar inequalities in 
$\P_\R^2 \times \P_\R^2$, the space of epipoles. Consider the $(1,1)$-homogeneous quadratic polynomial 
\begin{align}
\label{eq:Dij}
    D_{ij}(\bf{u},\bf{v}) := \det\begin{bmatrix} \bf{u}_i & \bf{u}_j & \bf{u} \end{bmatrix}\det\begin{bmatrix} \bf{v}_i & \bf{v}_j & \bf{v} \end{bmatrix}
\end{align}
where 
$(\bf{u}, \bf{v}) \in \P_\R^2 \times \P_\R^2$. 
Note that 
$D_{ij}(\bf{u},\bf{v}) = 0$ if and only if either factor is zero which is if and only if $\bf{u}_i,\bf{u}_j, \bf{u}$ are 
collinear or $\bf{v}_i, \bf{v}_j, \bf{v}$ are collinear. Werner uses the quantities $D_{ij}(\bf{u}, \bf{v})$ to impose an orientation constraint on the epipolar line homography described in our \Cref{thm:epipolar homography}, see \cite[Section 5]{werner2003constraint} and 
\cite[Section 6.5]{werner2003combinatorial}. We will show that $D_{ij}(\bf{u}, \bf{v})$ is closely related to the products of chiral polynomials $g_ig_j(X)$ where $\bf{u}$ and $\bf{v}$ generate the right and left kernels of a fundamental matrix $X$. We rely on the following well known identity \cite{chum2003joint, werner2003combinatorial}.

\begin{lemma}
\label{lem:3det to 4det}
Suppose $\bf{q}_1, \bf{q}_2, \bf{q}_3 \in \R^4$. Let $A = \begin{bmatrix} G & \mathbf{t}\end{bmatrix}$ be a finite camera with Cramer's rule center $\bf{c}_A = \det(G)(-G^{-1} \bf{t}, 1)$. Then $ \det\begin{bmatrix} A \bf{q}_1 & A \bf{q}_2 & A \bf{q}_3 \end{bmatrix} = \det\begin{bmatrix} \bf{q}_1 & \bf{q}_2 & \bf{q}_3 & \bf{c}_A\end{bmatrix}$. 
\end{lemma}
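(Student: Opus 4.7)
The plan is to convert the $3\times 3$ left-hand determinant into the $4\times 4$ right-hand one by a single block-matrix computation that exploits the defining identity $A\bf{c}_A = \vnull$. First, I would extend $A$ to a square matrix by appending the row $\bf{n}_\infty^\top = (0,0,0,1)$,
\[
 N := \begin{bmatrix} A \\ \bf{n}_\infty^\top \end{bmatrix} = \begin{bmatrix} G & \vt \\ \vnull^\top & 1 \end{bmatrix},
\]
so that $\det(N) = \det(G)$ by block-triangularity, and this is nonzero because $A$ is a finite camera.

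Next, I would form the $4 \times 4$ product $N\cdot [\vq_1,\vq_2,\vq_3,\bf{c}_A]$. The first three columns of this product are obtained by stacking each $A\vq_i$ on top of the scalar $\bf{n}_\infty^\top \vq_i$, while the last column is obtained by stacking $A\bf{c}_A = \vnull$ on top of $\bf{n}_\infty^\top \bf{c}_A = (\bf{c}_A)_4 = \det(G)$, using both the camera-center identity and the Cramer's-rule formula for $\bf{c}_A$. Because the top-right $3\times 1$ block of this $4\times 4$ matrix vanishes, cofactor expansion along the last column collapses the determinant to
\[
\det\bigl(N\cdot [\vq_1,\vq_2,\vq_3,\bf{c}_A]\bigr) \;=\; \det(G)\cdot \det[A\vq_1,A\vq_2,A\vq_3].
\]
On the other hand, multiplicativity of the determinant rewrites the same quantity as $\det(N) \cdot \det[\vq_1,\vq_2,\vq_3,\bf{c}_A] = \det(G) \cdot \det[\vq_1,\vq_2,\vq_3,\bf{c}_A]$. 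Equating the two expressions and cancelling the common nonzero factor $\det(G)$ yields the claimed identity.

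There is no genuine obstacle here; the only piece of insight is the choice of $\bf{n}_\infty^\top$ as the row to append to $A$, which is forced by the requirement that $\bf{n}_\infty^\top \bf{c}_A$ produce exactly the scaling factor $\det(G)$ needed for the final cancellation. A longer but more pedestrian alternative, which I would fall back on only if this block-matrix setup proved awkward, is to expand the right-hand side by cofactors along its last column and the left-hand side via the Cauchy--Binet formula, and then to match the two resulting sums over $3\times 3$ minors indexed by $\{1,2,3,4\}\setminus \{i\}$ using the signed-minor expression for the coordinates of $\bf{c}_A$.
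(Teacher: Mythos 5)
Your proof is correct. The paper itself gives no proof of this lemma---it is cited as a well-known identity from the literature---so there is nothing to compare against; your block-matrix argument (bordering $A$ with the row $\mathbf{n}_\infty^\top$, using $A\mathbf{c}_A = \mathbf{0}$ and $\mathbf{n}_\infty^\top \mathbf{c}_A = \det(G)$ to make the last column collapse, then cancelling the nonzero factor $\det(G)$) is the standard derivation and fills the gap cleanly.
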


\begin{lemma}
\label{lem:wi to triple products}
Consider a projective 
reconstruction $(A_1, A_2,\mathcal{Q})$ of $\mathcal{P}$, i.e.,  
$A_1{\bf q}_i = w_{1i} \bf{u}_i$ and  
$A_2{\bf q}_i = w_{2i} \bf{v}_i$ where $w_{ij} \neq 0$. Suppose $ D_{ij}(-A_1\bf{c}_2 , A_2\bf{c}_1)  \neq 0$ where $\bf{c}_i$ is the Cramer's rule center of $A_i$. Then 
\[
\sign D_{ij}(-A_1\bf{c}_2 , A_2\bf{c}_1)  = \sign (w_{1i}w_{2i})(w_{1j}w_{2j}).
\]
\end{lemma}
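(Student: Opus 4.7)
The plan is a direct computation that reduces everything to one $4\times 4$ determinant via \Cref{lem:3det to 4det}. I would begin by using the reconstruction equations to rewrite $\mathbf{u}_\ell = A_1 \mathbf{q}_\ell / w_{1\ell}$ and $\mathbf{v}_\ell = A_2 \mathbf{q}_\ell / w_{2\ell}$ for $\ell \in \{i,j\}$, so that each of the two $3\times 3$ determinants inside $D_{ij}(-A_1\bf{c}_2, A_2\bf{c}_1)$ becomes a determinant whose columns are all images under a single camera:
\[
\det\begin{bmatrix} \bf{u}_i & \bf{u}_j & -A_1\bf{c}_2\end{bmatrix} = -\frac{1}{w_{1i} w_{1j}}\det\begin{bmatrix} A_1\bf{q}_i & A_1\bf{q}_j & A_1\bf{c}_2\end{bmatrix},
\]
and analogously for the second factor with $A_2$ and $\bf{c}_1$ in place of $A_1$ and $\bf{c}_2$.

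Next I would apply \Cref{lem:3det to 4det} to each of these (with $\bf{q}_3$ taken to be the other camera's center, which is a legitimate vector in $\R^4$). This converts the $3\times 3$ determinants into $4\times 4$ determinants in the world:
\[
\det\begin{bmatrix} A_1\bf{q}_i & A_1\bf{q}_j & A_1\bf{c}_2\end{bmatrix} = \det\begin{bmatrix} \bf{q}_i & \bf{q}_j & \bf{c}_2 & \bf{c}_1\end{bmatrix}, \quad \det\begin{bmatrix} A_2\bf{q}_i & A_2\bf{q}_j & A_2\bf{c}_1\end{bmatrix} = \det\begin{bmatrix} \bf{q}_i & \bf{q}_j & \bf{c}_1 & \bf{c}_2\end{bmatrix}.
\]
The two right-hand sides differ only by swapping the last two columns, so one is the negative of the other. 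Multiplying everything out, the overall expression becomes
\[
D_{ij}(-A_1\bf{c}_2, A_2\bf{c}_1) = \frac{\det\begin{bmatrix} \bf{q}_i & \bf{q}_j & \bf{c}_1 & \bf{c}_2\end{bmatrix}^2}{w_{1i}w_{1j}w_{2i}w_{2j}}.
\]

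Finally, I would invoke the hypothesis $D_{ij}(-A_1\bf{c}_2, A_2\bf{c}_1) \neq 0$, which forces the $4\times 4$ determinant in the numerator to be nonzero, hence its square is strictly positive. The sign of the left-hand side is therefore the sign of $w_{1i}w_{1j}w_{2i}w_{2j} = (w_{1i}w_{2i})(w_{1j}w_{2j})$, which is exactly the claim. I do not foresee any obstacle beyond bookkeeping of the sign introduced by the minus sign in $-A_1\bf{c}_2$ and the column swap; the key conceptual point, namely the lift to a single $4\times 4$ world determinant, is already packaged in \Cref{lem:3det to 4det}.
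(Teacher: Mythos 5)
Your proposal is correct and follows essentially the same route as the paper's proof: substitute $\mathbf{u}_\ell = A_1\mathbf{q}_\ell/w_{1\ell}$ and $\mathbf{v}_\ell = A_2\mathbf{q}_\ell/w_{2\ell}$, lift both $3\times 3$ determinants to $4\times 4$ world determinants via \Cref{lem:3det to 4det}, and observe that the column swap together with the sign from $-A_1\mathbf{c}_2$ yields $\frac{1}{w_{1i}w_{1j}w_{2i}w_{2j}}\bigl(\det\begin{bmatrix}\mathbf{q}_i & \mathbf{q}_j & \mathbf{c}_1 & \mathbf{c}_2\end{bmatrix}\bigr)^2$. The sign bookkeeping is handled exactly as in the paper, so there is nothing to add.
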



\begin{proof}
Expand $D_{ij}(-A_1\bf{c}_2 , A_2\bf{c}_1)$ as follows. 
\begin{align}
   D_{ij}(-A_1\bf{c}_2 , A_2\bf{c}_1) &= -\det\begin{bmatrix} \bf{u}_i & \bf{u}_j & A_1\bf{c}_2 \end{bmatrix} \det\begin{bmatrix} \bf{v}_i & \bf{v}_j & A_2\bf{c}_1 \end{bmatrix}\\
   \label{eq:before det identity} &= -\det\begin{bmatrix} \frac{1}{w_{1i}} A_1 \bf{q}_i & \frac{1}{w_{1j}} A_1 \bf{q}_j & A_1\bf{c}_2 \end{bmatrix} \det\begin{bmatrix} \frac{1}{w_{2i}} A_2\bf{q}_i & \frac{1}{w_{2j}} A_2 \bf{q}_j & A_2\bf{c}_1 \end{bmatrix} \\
         \label{eq:after det identity}   &= -\det\begin{bmatrix} \frac{1}{w_{1i}} \bf{q}_i & \frac{1}{w_{1j}} \bf{q}_j & \bf{c}_2 & \bf{c}_1 \end{bmatrix} \det\begin{bmatrix} \frac{1}{w_{2i}} \bf{q}_i & \frac{1}{w_{2j}} \bf{q}_j & \bf{c}_1 & \bf{c}_2 \end{bmatrix} \\
            &= \frac{1}{w_{1i}w_{1j}w_{2i}w_{2j}} \det\begin{bmatrix}  \bf{q}_i & \bf{q}_j & \bf{c}_1 & \bf{c}_2 \end{bmatrix} \det\begin{bmatrix}  \bf{q}_i & \bf{q}_j & \bf{c}_1 & \bf{c}_2 \end{bmatrix} \\
            &= \frac{1}{w_{1i}w_{1j}w_{2i}w_{2j}} (\det\begin{bmatrix}  \bf{q}_i & \bf{q}_j & \bf{c}_1 & \bf{c}_2 \end{bmatrix})^2.
\end{align}
\Cref{eq:after det identity} follows from \Cref{eq:before det identity} by applying \Cref{lem:3det to 4det} to both determinants in the product. Since \\ $D_{ij}(-A_1\bf{c}_2 , A_2\bf{c}_1)  \neq 0$ by assumption, we conclude that $\sign D_{ij}(-A_1\bf{c}_2 , A_2\bf{c}_1) = \sign (w_{1i}w_{2i})(w_{1j}w_{2j})$.
\end{proof}




\begin{lemma}
\label{lem:det det infer gigj}
Let $X$ be a fundamental matrix of $\mathcal{P}$. Suppose 
$ D_{ij}( \adj(X)\bf{t} , \bf{t}) \neq 0$ where $\bf{t}^\top X = \bf{0}$. Then \\ $\sign  D_{ij}( \adj(X)\bf{t} , \bf{t}) = \sign g_ig_j(X)$. 
\end{lemma}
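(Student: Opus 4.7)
The plan is a direct calculation exploiting the factorization $X = [\mathbf{t}]_\times G$ with $G \in \GL_3$, whose existence follows from $\mathbf{t}^\top X = \mathbf{0}$ and $\rank(X) = 2$.

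First I would compute $\adj(X)\mathbf{t}$ explicitly. A direct cofactor calculation gives $\adj([\mathbf{t}]_\times) = \mathbf{t}\mathbf{t}^\top$, and since the adjugate is contravariant with respect to matrix multiplication, $\adj(X) = \adj(G)\,\adj([\mathbf{t}]_\times) = \det(G)\,G^{-1}\mathbf{t}\mathbf{t}^\top$. Hence
\begin{equation*}
\adj(X)\,\mathbf{t} \;=\; \det(G)\,\|\mathbf{t}\|^2\,G^{-1}\mathbf{t},
\end{equation*}
a positive scalar multiple of $G^{-1}\mathbf{t}$. Substituting into $D_{ij}$ and pulling $G$ through the first determinant via $\det(G)\det[\mathbf{u}_i,\mathbf{u}_j,G^{-1}\mathbf{t}] = \det[G\mathbf{u}_i, G\mathbf{u}_j, \mathbf{t}]$ yields
\begin{equation*}
D_{ij}(\adj(X)\mathbf{t},\mathbf{t}) \;=\; \|\mathbf{t}\|^2\,\det[G\mathbf{u}_i, G\mathbf{u}_j, \mathbf{t}]\cdot\det[\mathbf{v}_i, \mathbf{v}_j, \mathbf{t}].
\end{equation*}

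Next I would exploit the epipolar constraint, which reads $\mathbf{v}_\ell^\top X\mathbf{u}_\ell = \det[\mathbf{v}_\ell, \mathbf{t}, G\mathbf{u}_\ell] = 0$, i.e.\ the vectors $\mathbf{t}, \mathbf{v}_\ell, G\mathbf{u}_\ell$ are coplanar. The hypothesis $D_{ij}(\adj(X)\mathbf{t},\mathbf{t})\neq 0$ forces $\det[\mathbf{v}_i, \mathbf{v}_j, \mathbf{t}]\neq 0$, and in particular $\mathbf{v}_\ell \not\sim \mathbf{t}$ for $\ell\in\{i,j\}$, so $\mathbf{t}$ and $\mathbf{v}_\ell$ are linearly independent and there are unique scalars $a_\ell, b_\ell \in \mathbb{R}$ with
\begin{equation*}
G\mathbf{u}_\ell \;=\; a_\ell\,\mathbf{t} + b_\ell\,\mathbf{v}_\ell.
\end{equation*}
Plugging this into the previous display and using that a $3\times 3$ determinant with a repeated column vanishes gives
\begin{equation*}
\det[G\mathbf{u}_i, G\mathbf{u}_j, \mathbf{t}] \;=\; b_i b_j\,\det[\mathbf{v}_i, \mathbf{v}_j, \mathbf{t}],
\end{equation*}
so that $D_{ij}(\adj(X)\mathbf{t},\mathbf{t}) = \|\mathbf{t}\|^2\,b_i b_j\,\det[\mathbf{v}_i,\mathbf{v}_j,\mathbf{t}]^2$, which has sign $\sign(b_i b_j)$.

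On the other hand, \Cref{lem:gigj is quad product} together with the same decomposition gives
\begin{equation*}
g_\ell(X) \;=\; (\mathbf{t}\times\mathbf{v}_\ell)^\top(\mathbf{t}\times G\mathbf{u}_\ell) \;=\; (\mathbf{t}\times\mathbf{v}_\ell)^\top\!\bigl(a_\ell(\mathbf{t}\times\mathbf{t}) + b_\ell(\mathbf{t}\times\mathbf{v}_\ell)\bigr) \;=\; b_\ell\,\|\mathbf{t}\times\mathbf{v}_\ell\|^2,
\end{equation*}
so $g_ig_j(X)$ also has sign $\sign(b_ib_j)$, and the identity of signs follows. The only subtlety is the availability of the decomposition of $G\mathbf{u}_\ell$ in terms of $\mathbf{t}$ and $\mathbf{v}_\ell$; the non-vanishing hypothesis on $D_{ij}$ neatly excludes the two degenerate configurations ($\mathbf{v}_\ell\sim\mathbf{t}$, or $\mathbf{u}_\ell\sim G^{-1}\mathbf{t}$) in which the decomposition would fail and both sides of the claim vanish anyway.
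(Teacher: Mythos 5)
Your proof is correct, but it takes a genuinely different and more self-contained route than the paper. The paper's proof is structural: it uses the hypothesis $D_{ij}(\adj(X)\mathbf{t},\mathbf{t})\neq 0$ to verify $(\mathbf{u}_\ell,\mathbf{v}_\ell)$-regularity for $\ell\in\{i,j\}$, invokes \Cref{thm:equivalentprconditions} to produce a projective reconstruction of the two pairs with world points off the baseline, passes through \Cref{lem:wi to triple products} (itself resting on the $3\times 3$-to-$4\times 4$ determinant identity of \Cref{lem:3det to 4det}) to identify $\sign D_{ij}$ with $\sign(w_{1i}w_{2i})(w_{1j}w_{2j})$, and then uses the equivalence of conditions (3) and (4) in \Cref{thm:equivalentchiralconditions} together with \Cref{lem:gigj is quad product} to reach $g_ig_j$. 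You bypass the reconstruction machinery entirely: after the shared computation $\adj(X)\mathbf{t}=\|\mathbf{t}\|^2\det(G)\,G^{-1}\mathbf{t}$ (which is \eqref{eq:adj(X)t not 0} in the paper), you read the epipolar equation as coplanarity of $\mathbf{t},\mathbf{v}_\ell,G\mathbf{u}_\ell$, write $G\mathbf{u}_\ell=a_\ell\mathbf{t}+b_\ell\mathbf{v}_\ell$, and show that both $D_{ij}(\adj(X)\mathbf{t},\mathbf{t})$ and $g_ig_j(X)$ equal $b_ib_j$ times explicitly positive quantities. This is shorter, purely linear-algebraic, and arguably more illuminating about \emph{why} the signs agree, whereas the paper's argument keeps the geometric content (world points, depth signs) in view and reuses lemmas needed elsewhere. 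One small imprecision in your closing remark: the decomposition of $G\mathbf{u}_\ell$ fails only when $\mathbf{t}\sim\mathbf{v}_\ell$; the other degenerate case $\mathbf{u}_\ell\sim G^{-1}\mathbf{t}$ does not obstruct the decomposition but merely forces $b_\ell=0$, and it is excluded by the nonvanishing of the first determinant factor of $D_{ij}$. This does not affect the validity of the argument.
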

\begin{proof}
Write $X = [\bf{t}]_\times G$ for $\bf{t} \in \R^3 \setminus\{\bf{0}\}$ and some $G \in \GL_3$. We know that $\bf{t}$ and $\adj(X)\bf{t}$ generate the one-dimensional left and right kernels of $X$, respectively. Since
$D_{ij}( \adj(X)\bf{t}, \bf{t}) \neq 0$,  we know that $\mathbf{t} \neq 0, \adj(X) \mathbf{t} \neq 0$, $\adj(X)\bf{t}$ is not collinear with $\bf{u}_i$ and $\bf{u}_j$, and $\bf{t}$ is not collinear with $\bf{v}_i$ and $\bf{v}_j$. In particular, this means that neither $\bf{u}_i$ nor $\bf{u}_j$ is the right kernel of $X$ and neither $\bf{v}_i$ nor $\bf{v}_j$ is the left kernel of $X$. It follows that $X$ is $(\bf{u}_i, \bf{v}_i)$ regular and $(\bf{u}_j, \bf{v}_j)$ regular and neither is the epipole pair. By \Cref{thm:equivalentprconditions}, there exists a finite projective reconstruction $(A_1 = \begin{bmatrix} I & \bf{0}\end{bmatrix}, A_2 = \begin{bmatrix} G & \bf{t}\end{bmatrix} , \{\bf{q}_i, \bf{q}_j\} )$ of $\{(\bf{u}_i, \bf{v}_i), (\bf{u}_j, \bf{v}_j)\}$  such that the world points $\bf{q}_i, \bf{q}_j$ are not on the baseline. 

\Cref{lem:wi to triple products} implies that $\sign D_{ij}( -A_1\bf{c}_2 , A_2\bf{c}_1) =  \sign (w_{1i}w_{1j})(w_{2i}w_{2j})$. By \Cref{thm:equivalentchiralconditions}, $(w_{1i}w_{1j})(w_{2i}w_{2j}) > 0 $ if and only if $\left[(\bf{t} \times \bf{v}_i)^\top(\bf{t} \times G\bf{u}_i)\right]\left[(\bf{t} \times \bf{v}_j)^\top(\bf{t} \times G\bf{u}_j)\right] > 0$. 
Combining this fact with \Cref{lem:gigj is quad product}, it follows that $\sign D_{ij}( -A_1\bf{c}_2 , A_2\bf{c}_1 ) = \sign g_ig_j(X)$. 
Finally note that $A_2 \bf{c}_1 = \bf{t}$ and $-A_1 \bf{c}_2 = \det(G) (G^{-1}\bf{t}) $ which is a positive multiple of $\adj(X)\bf{t}$. Indeed,
\begin{align} \label{eq:adj(X)t not 0}
\adj(X)\bf{t} = \adj([\bf{t}]_\times G)\bf{t}=  \adj(G)\adj([\bf{t}]_\times) \bf{t} = \det(G)G^{-1}(\bf{t}\bf{t}^\top)\bf{t} = \|\bf{t}\|^2 \det(G) (G^{-1}\bf{t}).
\end{align}
Substituting $\adj(X)\bf{t}$ for $-A_1 \bf{c}_2$ and $\bf{t}$ for $A_2 \bf{c}_1$, the result follows. 
\end{proof}

The computation in the previous proof, in particular \eqref{eq:adj(X)t not 0}, shows that if $\mathbf{t}$ is a non-zero generator of the 
left kernel of a fundamental matrix $X$ then $\adj(X) \mathbf{t}$ is a non-zero generator of the right 
kernel of $X$.

Note that $\det \begin{bmatrix} \bf{u}_i & \bf{u}_j & A_1 \bf{c}_2 \end{bmatrix}$ can be zero without $\bf{u}_i$ or $\bf{u}_j$ being the epipole $A_1 \bf{c}_2$. Indeed, by \Cref{lem:3det to 4det} this happens whenever $\bf{q}_i , \bf{q}_j, \bf{c}_1 , \bf{c}_2$ are coplanar. On the other hand, \Cref{lem:chiral zeros} implies that $g_i$ vanishes at $X$ if and only if $\bf{u}_i$ or $\bf{v}_i$ is an epipole of $X$. Therefore, $D_{ij}(\adj(X)\bf{t}, \bf{t})$ may vanish even when $g_ig_j(X) \neq 0$.

\Cref{lem:det det infer gigj} shows that knowing the specific generators of the kernels of $X$, i.e., $\bf{t}$ and $\adj(X)\bf{t}$, respectively, is enough to compute the sign of the chiral epipolar inequalities.
Note that a choice of generator $\bf{t}$ for the left kernel of $X$ determines a signed generator $\adj(X)\bf{t}$ for the right kernel.
When $D_{ij}(\adj(X)\bf{t}, \bf{t})$ does not vanish, we can use it to infer the validity of chiral epipolar inequalities via \Cref{lem:det det infer gigj}, and hence argue for the existence of a chiral reconstruction of $\mathcal{P}$.
We now identify a situation where we can use {\em any} generators of the kernels of $X$ in $D_{ij}$.

\begin{definition}
\label{def:I(X)}
Suppose $X$ is a fundamental matrix of $\PP$. Define $I(X)$ to be the set of indices $i$ such that $g_i(X) \neq 0$, i.e., the index set of inactive chiral polynomials at $X$. Let
$\mathcal{P}_{I(X)}$ be the subset of point pairs in $\mathcal{P}$ indexed by 
$I(X)$.
\end{definition}

\begin{theorem}
\label{thm:chiral inactive indices}
Let $X$ be a fundamental matrix of $\mathcal{P}$ where $\bf{e}_1$ and $\bf{e}_2$ generate the right and left kernels of $X$, respectively. Suppose 
$|I(X)| \ge 3$, and $D_{ij}(\bf{e}_1 , \bf{e}_2) \neq 0$ for all $i,j \in I(X)$. Then there exists a chiral reconstruction of $\mathcal{P}_{I(X)}$ associated to $X$ if and only if $D_{ij}( \bf{e}_1 , \bf{e}_2)$ has the same sign for all $i,j \in I(X)$.
\end{theorem}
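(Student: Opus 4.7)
The plan is to reduce the existence of a chiral reconstruction of $\mathcal{P}_{I(X)}$ associated to $X$ to the chiral epipolar inequalities $g_ig_j(X) \geq 0$ via \Cref{thm:chiral F existence gigj}, then translate those inequalities into epipole space using \Cref{lem:det det infer gigj}, and finally use the bilinearity of $D_{ij}$ to argue that the resulting sign conditions are unaffected by the choice of real representatives $\bf{e}_1, \bf{e}_2$ of the kernels.

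The first step is to verify that $X$ is $\mathcal{P}_{I(X)}$-regular. For every $i \in I(X)$ the hypothesis $g_i(X) \neq 0$ together with \Cref{lem:chiral zeros} gives $X \notin W_{\bf{u}_i} \cup W^{\bf{v}_i}$, so both $X\bf{u}_i \neq \bf{0}$ and $\bf{v}_i^\top X \neq \bf{0}$ and the $(\bf{u}_i, \bf{v}_i)$-regularity condition is vacuously satisfied. By \Cref{thm:chiral F existence gigj} applied to $\mathcal{P}_{I(X)}$, a chiral reconstruction associated to $X$ therefore exists if and only if $g_ig_j(X) \geq 0$ for every $i, j \in I(X)$; since no $g_i(X)$ vanishes for $i \in I(X)$, this is in turn equivalent to the numbers $g_i(X)$, $i \in I(X)$, all sharing a common sign.

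To bring this into epipole space, I would pick $\bf{t} \in \R^3 \setminus \{\bf{0}\}$ with $\bf{t}^\top X = \bf{0}$. By the computation in \eqref{eq:adj(X)t not 0}, $\adj(X)\bf{t}$ is a non-zero generator of the right kernel of $X$, so there exist non-zero $\lambda, \mu \in \R$ with $\bf{e}_1 = \lambda \, \adj(X)\bf{t}$ and $\bf{e}_2 = \mu \, \bf{t}$; the $(1,1)$-homogeneity of $D_{ij}$ then yields
\[
D_{ij}(\bf{e}_1, \bf{e}_2) \;=\; \lambda\mu \cdot D_{ij}(\adj(X)\bf{t}, \bf{t})
\]
for every $i, j$. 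In particular, the hypothesis $D_{ij}(\bf{e}_1, \bf{e}_2) \neq 0$ transfers to $D_{ij}(\adj(X)\bf{t}, \bf{t}) \neq 0$ for all $i, j \in I(X)$, so \Cref{lem:det det infer gigj} applies and delivers $\sign D_{ij}(\adj(X)\bf{t}, \bf{t}) = \sign g_ig_j(X)$. The delicate point, and the only place requiring care, is that \Cref{lem:det det infer gigj} is phrased for the canonical generators $(\adj(X)\bf{t}, \bf{t})$ whereas the theorem admits arbitrary real generators $\bf{e}_1, \bf{e}_2$; bilinearity of $D_{ij}$ absorbs the ambiguity into the single scalar $\lambda\mu$, which is independent of the index pair and hence either preserves or globally inverts the sign pattern. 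Either way, the property ``all signs agree'' is preserved, so the $D_{ij}(\bf{e}_1, \bf{e}_2)$ share a common sign over $i, j \in I(X)$ exactly when the $g_ig_j(X)$ do, completing the equivalence.
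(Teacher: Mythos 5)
Your overall strategy mirrors the paper's: reduce to the chiral epipolar inequalities via \Cref{thm:chiral F existence gigj}, pass to epipole space with \Cref{lem:det det infer gigj}, and absorb the choice of kernel generators into a single scalar $\lambda\mu$ using the $(1,1)$-homogeneity of $D_{ij}$. The upfront regularity check via \Cref{lem:chiral zeros} is fine (the paper instead deduces regularity from the strict inequalities in the converse direction). However, there is a genuine gap in your final step, and it is exactly the point where the hypothesis $|I(X)| \ge 3$ must enter --- a hypothesis your argument never uses.

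You correctly establish two equivalences: (a) a chiral reconstruction of $\mathcal{P}_{I(X)}$ associated to $X$ exists if and only if all $g_i(X)$, $i \in I(X)$, share a common sign, i.e.\ all products $g_ig_j(X)$ are \emph{positive}; and (b) the $D_{ij}(\mathbf{e}_1,\mathbf{e}_2)$ all share a common sign if and only if the $g_ig_j(X)$ all share a common sign. But these do not chain together: ``all $g_ig_j(X)$ share a common sign'' is weaker than ``all $g_ig_j(X)>0$,'' since the common sign could a priori be negative, in which case the $D_{ij}(\mathbf{e}_1,\mathbf{e}_2)$ would still all agree in sign (they differ from the $g_ig_j(X)$ by the global factor $\lambda\mu$) yet no chiral reconstruction would exist. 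This is not a vacuous worry: for $|I(X)|=2$ the single product $g_ig_j(X)$ can genuinely be negative, which is why the theorem carries the hypothesis $|I(X)|\ge 3$. The missing observation, which is the one the paper makes (``Since $|I(X)| \ge 3$, this common sign cannot be negative''), is that for three distinct indices $i,j,k \in I(X)$ one has $\bigl(g_ig_j(X)\bigr)\bigl(g_ig_k(X)\bigr)\bigl(g_jg_k(X)\bigr) = \bigl(g_ig_jg_k(X)\bigr)^2 > 0$, so the three pairwise products cannot all be negative; hence a common sign for all $g_ig_j(X)$ forces that sign to be positive. Inserting this one line after your equivalence (b) closes the gap and completes the proof.
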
 

\begin{proof} 
 Suppose there exists a chiral reconstruction of $\mathcal{P}_{I(X)}$ associated to $X$. Then by \Cref{thm:chiral F existence gigj}, 
 $g_ig_j(X) \geq 0$ for all $i,j \in I(X)$. In fact, $g_ig_j(X) > 0$ for all $i,j \in I(X)$ since if $g_ig_j(X) = 0$ for some $i,j$ while 
 $D_{ij}(\adj(X) \bf{t} , \bf{t} ) \neq 0$, we would contradict \Cref{lem:det det infer gigj}. Indeed, if $D_{ij}(\bf{e}_1 , \bf{e}_2) \neq 0$ for some 
 kernel generators $\bf{e}_1, \bf{e}_2$, it remains non-zero for any other pair of kernel generators. By \Cref{thm:chiral F existence gigj}, $D_{ij}(\adj(X) \bf{t} , \bf{t} ) $ has the same sign for all $i,j$, and since 
 $\adj(X) \bf{t}$ and $\bf{t}$ are (non-zero) generators of the right and left kernels of $X$, the result follows.

Conversely, suppose $D_{ij}( \bf{e}_1, \bf{e}_2 )$ has the same non-zero sign for all $i,j \in I(X)$ where $\bf{e}_1$ and $\bf{e}_2$ generate the right and left kernels of $X$, respectively. 
Then $\bf{e}_1 = \lambda \adj(X) \bf{e}_2$ for some non-zero $\lambda$ by \eqref{eq:adj(X)t not 0}. 
By \Cref{lem:det det infer gigj}, we know 
$$\sign D_{ij}( \bf{e}_1 , \bf{e}_2) = \lambda  \sign D_{ij}( \adj(X)  \bf{e}_2 , \bf{e}_2) = \lambda \sign g_ig_j(X)$$
 for all $i,j$. This shows that $ g_ig_j(X)$ has the same sign for all $i,j \in I(X)$. Since $|I(X)| \ge 3$, this common sign cannot be negative and hence 
 $g_ig_j(X) > 0$ for all $i,j \in I(X)$. These strict inequalities also imply that $X$ is $\mathcal{P}_{I(X)}$-regular. 
 Then by  \Cref{thm:chiral F existence gigj} there is a chiral reconstruction of $\mathcal{P}_{I(X)}$ associated to $X$. 
\end{proof}

We remark that $D_{ij}(\bf{e}_1, \bf{e}_2)$ does not have a well-defined sign on $\P_\R^2\times \P_\R^2$ because it is linear in $\bf{e}_1$ and $\bf{e}_2$. To get an inequality description of chirality in epipole space, we can take pairwise products $D_{ij}(\bf{e}_1,\bf{e}_2) D_{ik}(\bf{e}_1, \bf{e}_2)$ which are quadratic in each $\P_\R^2$ factor. If $D_{ij}(\bf{e}_1,\bf{e}_2) D_{ik}(\bf{e}_1, \bf{e}_2) > 0$, then $g_jg_k(X)>0$ for any fundamental matrix $X$ with epipoles $\bf{e}_1$ and $\bf{e}_2$. However, since $D_{ij}(\bf{e}_1, \bf{e}_2)$ may vanish even when $g_ig_j(X)$ does not, we observe that $D_{ij}(\bf{e}_1,\bf{e}_2) D_{ik}(\bf{e}_1, \bf{e}_2) \ge 0$ for all triples $i,j,k$ is not equivalent to $g_ig_j(X) \ge 0, g_ig_k(X) \ge 0 $ and $g_ig_k(X) \ge 0$. Due to this subtlety, we primarily study chirality using $g_ig_j \ge 0 $ in $\P^8_\R$ as opposed to $D_{ij}D_{ik} \ge 0$ in $\P^2_\R \times \P^2_\R$. 

\subsection{Three point pairs always have a chiral reconstruction}
\label{subsec:three pairs}
In this section, we apply the tools developed so far to show that there is always a chiral reconstruction when $|\mathcal{P}| = 3$, and hence also when $|\mathcal{P}| \leq 3$ since $\mathcal{P}$ can have a chiral reconstruction only if all its 
subsets have one. We begin with two technical lemmas.

\begin{lemma} \label{lem:match chirotope}
Suppose $\bf{a}_1, \bf{a}_2, \bf{a}_3$ are three non-collinear points in 
$\R^3$. Then for each of the eight elements in $\sigma \in \{+,-\}^3$, there is an $\bf{e} \in \R^3$ such that 
$\bf{a}_1, \bf{a}_2, \bf{a}_3, \bf{e}$ are in general position (no three in a line) and 
$$\sigma = (\sign(\det[\bf{a}_1 \, \bf{a}_2 \, \bf{e}]),\,
\sign(\det[\bf{a}_1 \, \bf{a}_3 \, \bf{e}]),\,
\sign(\det[\bf{a}_2 \, \bf{a}_3 \, \bf{e}])).$$
Further, for each $\sigma \in \{+,-\}^3$, the corresponding choices of $\bf{e}$ 
come from an open polyhedral cone in $\R^3$.
\end{lemma}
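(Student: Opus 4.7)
The plan is to reduce the claim to a single linear-algebra statement. Define $\Phi\colon \R^3 \to \R^3$ by
\[
\Phi(\bf{e}) = \bigl(\det[\bf{a}_1\ \bf{a}_2\ \bf{e}],\ \det[\bf{a}_1\ \bf{a}_3\ \bf{e}],\ \det[\bf{a}_2\ \bf{a}_3\ \bf{e}]\bigr).
\]
Using the identity $\det[\bf{u}\ \bf{v}\ \bf{w}] = (\bf{u}\times \bf{v})^\top \bf{w}$, the map $\Phi$ is left multiplication by the $3\times 3$ matrix $M$ whose rows are $\bf{a}_1\times\bf{a}_2$, $\bf{a}_1\times\bf{a}_3$, and $\bf{a}_2\times\bf{a}_3$. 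Once $\Phi$ is shown to be a linear isomorphism, for each $\sigma \in \{+,-\}^3$ the preimage $\Phi^{-1}(C_\sigma)$ of the open orthant $C_\sigma = \{\bf{y}\in \R^3 : \sign(y_k)=\sigma_k\}$ is an open polyhedral cone in $\R^3$, and every $\bf{e}$ in that cone realizes the desired sign pattern.

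The one substantive step is to verify that the rows of $M$ are linearly independent, which is the main (and really only) calculation. Suppose $\alpha(\bf{a}_1\times\bf{a}_2) + \beta(\bf{a}_1\times\bf{a}_3) + \gamma(\bf{a}_2\times \bf{a}_3) = \vnull$. Taking the dot product successively with $\bf{a}_1,\bf{a}_2,\bf{a}_3$, and using that $(\bf{a}_i\times \bf{a}_j)^\top \bf{a}_k$ equals $\pm \det[\bf{a}_1\ \bf{a}_2\ \bf{a}_3]$ when $\{i,j,k\}=\{1,2,3\}$ and vanishes otherwise, the non-collinearity hypothesis $\det[\bf{a}_1\ \bf{a}_2\ \bf{a}_3] \neq 0$ forces $\gamma = 0$, then $\beta = 0$, then $\alpha = 0$. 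Hence $M$ is invertible and $\Phi$ is a linear isomorphism carrying open polyhedral cones to open polyhedral cones.

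To conclude, fix $\sigma$ and pick any $\bf{e} \in \Phi^{-1}(C_\sigma)$. By construction the three signed determinants have the prescribed signs, and all three are non-zero, so no triple from $\{\bf{a}_1, \bf{a}_2, \bf{a}_3, \bf{e}\}$ involving $\bf{e}$ can be collinear; combined with the assumed non-collinearity of $\bf{a}_1,\bf{a}_2,\bf{a}_3$, this yields the general position condition. There is no real obstacle here beyond the cross-product independence check just sketched.
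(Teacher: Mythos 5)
Your proof is correct and follows essentially the same route as the paper: both arguments amount to observing that the three linear forms $\ell_{ij}(\mathbf{e}) = \det[\mathbf{a}_i\ \mathbf{a}_j\ \mathbf{e}]$ are linearly independent, so their zero hyperplanes partition $\R^3$ into eight open polyhedral cones realizing all sign patterns. The only difference is that you explicitly verify the independence of the normals $\mathbf{a}_i \times \mathbf{a}_j$ (via the dot products with $\mathbf{a}_k$), a step the paper leaves implicit in its claim that the three hyperplanes cut $\R^3$ into eight regions.
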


\begin{proof}

The expression $\det[\bf{a}_i \, \bf{a}_j \, \bf{e}] = l_{ij}(\bf{e})$ is 
the linear form whose kernel is the span of $\bf{a}_i$ and $\bf{a}_j$. 
Since $\bf{a}_1, \bf{a}_2, \bf{a}_3$ are non-collinear, the hyperplanes 
cut out by $l_{12}(\bf{e}), l_{13}(\bf{e}), l_{23}(\bf{e})$ 
divide $\R^3$ into eight regions, each of which is a polyhedral cone.
The interiors of these cones correspond to the eight sign patterns $\sigma$.
\end{proof}

For $\mathbf{v}_1,\mathbf{v}_2 \in \R^3$, let $\cone(\mathbf{v}_1 , \mathbf{v}_2) := \{\la_1\mathbf{v}_1 + \la_2 \mathbf{v}_2: \la_1,\la_2 \ge 0\}$ be the convex cone spanned by $\mathbf{v}_1$ and $\mathbf{v}_2$.

\begin{lemma}
\label{lem:t cross quad product}
Suppose $\bf{v}_l, \bf{v}_r, \bf{t}$ are points in $\R^3$ on an affine line $L$.
 If $\bf{t} \notin \cone(\bf{v}_l, \bf{v}_r) $, then for all $\bf{w}_1, \bf{w}_2 \in \cone(\bf{v}_l, \bf{v}_r)$, $(\bf{t}\times \bf{w}_1)^\top (\bf{t} \times \bf{w}_2) > 0$.
\end{lemma}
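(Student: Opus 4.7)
The plan is to exploit the fact that every cross product $\mathbf{t}\times\mathbf{w}$ with $\mathbf{w}\in\cone(\mathbf{v}_l,\mathbf{v}_r)$ is a scalar multiple of a single fixed vector. Because $\mathbf{v}_l,\mathbf{v}_r,\mathbf{t}$ are collinear on the affine line $L$, they all lie in a single two-dimensional linear subspace $\Pi\subset\R^3$ through the origin; under the mild nondegeneracy that $L$ does not pass through the origin (equivalently, $\mathbf{v}_l$ and $\mathbf{v}_r$ are linearly independent), $\Pi=\Span(\mathbf{v}_l,\mathbf{v}_r)$ is genuinely two-dimensional. Any vector in $\Pi$ then has cross product with $\mathbf{t}\in\Pi$ pointing along the normal direction $\mathbf{v}_l\times\mathbf{v}_r\neq\mathbf{0}$, so the two vectors $\mathbf{t}\times\mathbf{w}_1$ and $\mathbf{t}\times\mathbf{w}_2$ are automatically collinear and the whole problem reduces to checking that the scalar coefficients have the same nonzero sign.

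Next, I would parameterize $\mathbf{t}=(1-s)\mathbf{v}_l+s\,\mathbf{v}_r$ for a unique $s\in\R$. Since the expansion of $\mathbf{t}$ in the basis $\{\mathbf{v}_l,\mathbf{v}_r\}$ is unique and has coefficients $1-s$ and $s$, membership $\mathbf{t}\in\cone(\mathbf{v}_l,\mathbf{v}_r)$ is equivalent to $1-s\geq 0$ and $s\geq 0$, i.e.~to $s\in[0,1]$. The hypothesis $\mathbf{t}\notin\cone(\mathbf{v}_l,\mathbf{v}_r)$ therefore forces either $s>1$ or $s<0$. Writing $\mathbf{w}_i=\alpha_i\mathbf{v}_l+\beta_i\mathbf{v}_r$ with $\alpha_i,\beta_i\geq 0$ and $(\alpha_i,\beta_i)\neq(0,0)$, a direct bilinear expansion of the cross product gives
\[
\mathbf{t}\times\mathbf{w}_i \;=\; c_i\,(\mathbf{v}_l\times\mathbf{v}_r), \qquad c_i:=(1-s)\beta_i - s\,\alpha_i.
\]

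The concluding step is a one-line sign analysis. When $s>1$, both $(1-s)$ and $-s$ are negative, so $c_i$ is a negative combination of nonnegative numbers, and $c_i<0$ because $(\alpha_i,\beta_i)\neq(0,0)$. When $s<0$, both $(1-s)$ and $-s$ are positive, and by the same token $c_i>0$. In either case $c_1$ and $c_2$ share a common nonzero sign, so
\[
(\mathbf{t}\times\mathbf{w}_1)^\top(\mathbf{t}\times\mathbf{w}_2) \;=\; c_1 c_2\,\|\mathbf{v}_l\times\mathbf{v}_r\|^2 \;>\; 0.
\]

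The only real subtlety—rather than a difficulty—is the degenerate situation in which $L$ passes through the origin, so that $\mathbf{v}_l,\mathbf{v}_r,\mathbf{t}$ span only a line. Then $\mathbf{v}_l\times\mathbf{v}_r=\mathbf{0}$ and every cross product $\mathbf{t}\times\mathbf{w}_i$ is zero, so the stated conclusion would fail; this case is implicitly outside the scope of the lemma, and indeed does not arise in the intended application, where $\mathbf{v}_l,\mathbf{v}_r,\mathbf{t}$ are the standard affine representatives (with third coordinate one) of projective points lying on a common line in $\P^2$.
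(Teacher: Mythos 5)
Your proof is correct, and it reaches the conclusion by a slightly different decomposition than the paper. The paper first observes that $\mathbf{t}\notin\cone(\mathbf{v}_l,\mathbf{v}_r)$ forces $\cone(\mathbf{v}_l,\mathbf{v}_r)$ to be contained in $\cone(\mathbf{t},\mathbf{v}_r)$ (or symmetrically in $\cone(\mathbf{v}_l,\mathbf{t})$), and then expands each $\mathbf{w}_i$ in the basis $\{\mathbf{t},\mathbf{v}_r\}$ with nonnegative coefficients; the cross product with $\mathbf{t}$ annihilates the $\mathbf{t}$-component, so the product collapses to $\lambda_2\mu_2\,(\mathbf{t}\times\mathbf{v}_r)^\top(\mathbf{t}\times\mathbf{v}_r)>0$ with no sign analysis needed. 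You instead keep the basis $\{\mathbf{v}_l,\mathbf{v}_r\}$, parameterize $\mathbf{t}=(1-s)\mathbf{v}_l+s\mathbf{v}_r$, and translate $\mathbf{t}\notin\cone(\mathbf{v}_l,\mathbf{v}_r)$ into $s<0$ or $s>1$, after which the scalar $c_i=(1-s)\beta_i-s\alpha_i$ has a definite sign in each case. The paper's choice of basis buys a one-line computation; yours buys an explicit, checkable criterion for when $\mathbf{t}$ leaves the cone and makes the ``same sign'' mechanism visible. Your closing remark about the degenerate case where $L$ passes through the origin is also a genuine (if minor) improvement in precision: the paper's proof concludes $(\mathbf{t}\times\mathbf{v}_r)^\top(\mathbf{t}\times\mathbf{v}_r)>0$ ``using that $\mathbf{t}\neq\mathbf{v}_r$,'' which strictly speaking requires linear independence rather than mere distinctness; as you note, this is automatic in the intended application where all points carry last coordinate one.
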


\begin{proof} 
If $\bf{t} \notin  \cone(\bf{v}_l, \bf{v}_r) $, then either $\bf{v}_l \in \cone(\bf{t}, \bf{v}_r)$ or $\bf{v}_r \in \cone(\bf{v}_l , \bf{t})$. Suppose $\bf{v}_l \in \cone(\bf{t}, \bf{v}_r)$. Since $\bf{w}_1, \bf{w}_2 \in \cone(\bf{v}_l, \bf{v}_r)$ and $\cone(\bf{v}_l, \bf{v}_r)\subseteq \cone(\bf{t}, \bf{v}_r)$, we know $\bf{w}_1, \bf{w}_2 \in \cone(\bf{t}, \bf{v}_r)$. Write $\bf{w}_1 = \la_1 \bf{t} + \la_2 \bf{v}_r $ and $\bf{w}_2 = \mu_1 \bf{t} + \mu_2 \bf{v}_r$ where $\lambda_i, \mu_j \geq 0$. Since 
$\bf{w}_i \neq \bf{t}$, $\lambda_2, \mu_2 > 0$. The result follows from direct computation using that $\bf{t} \neq \bf{v}_r$:
\begin{align}
    (\bf{t}\times \bf{w}_1)^\top (\bf{t} \times \bf{w}_2) &= (\bf{t}\times (\la_1 \bf{t} + \la_2 \bf{v}_r) )^\top (\bf{t} \times (\mu_1 \bf{t} + \mu_2 \bf{v}_r )) = \la_2\mu_2(\bf{t}\times\bf{v}_r)^\top (\bf{t} \times \bf{v}_r ) > 0.
\end{align}
Similar reasoning applies if $\bf{v}_r \in \cone(\bf{v}_l , \bf{t})$.
\end{proof}

\begin{theorem}
If $|\mathcal{P}|=3$ then $\mathcal{P}$ has a chiral reconstruction.
\end{theorem}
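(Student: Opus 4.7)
My plan is to produce a $\mathcal{P}$-regular fundamental matrix $X$ satisfying $g_ig_j(X)\ge 0$ for all $1\le i<j\le 3$ and then invoke \Cref{thm:chiral F existence gigj}. The strategy is to first pick the epipoles $\mathbf{e}_1,\mathbf{e}_2\in \P_\R^2$, then build $X$ from them via the epipolar line homography (\Cref{thm:epipolar homography}), and finally verify the chirality through \Cref{thm:chiral inactive indices}. Concretely, I want $\mathbf{e}_1,\mathbf{e}_2$ to satisfy:
(i) $\mathbf{e}_1\notin\{\mathbf{u}_1,\mathbf{u}_2,\mathbf{u}_3\}$ and $\mathbf{e}_2\notin\{\mathbf{v}_1,\mathbf{v}_2,\mathbf{v}_3\}$,
(ii) no three of $\{\mathbf{u}_1,\mathbf{u}_2,\mathbf{u}_3,\mathbf{e}_1\}$ and of $\{\mathbf{v}_1,\mathbf{v}_2,\mathbf{v}_3,\mathbf{e}_2\}$ collinear, and
(iii) all three $D_{ij}(\mathbf{e}_1,\mathbf{e}_2)$ strictly positive.

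When both triples $\{\mathbf{u}_i\}$ and $\{\mathbf{v}_i\}$ are non-collinear in $\R^3$, condition (iii) is immediate from applying \Cref{lem:match chirotope} twice, once to pick $\mathbf{e}_1$ with $\det[\mathbf{u}_i\,\mathbf{u}_j\,\mathbf{e}_1]>0$ for all pairs and once for $\mathbf{e}_2$; (i) and (ii) come along for free by the ``general position'' clause of the lemma. Then the three lines $\mathbf{e}_1\mathbf{u}_i$ are distinct, similarly for $\mathbf{e}_2\mathbf{v}_i$, and \Cref{thm:epipolar homography} produces $G\in\GL_3$ with $G\mathbf{e}_1\sim\mathbf{e}_2$ sending $\mathbf{e}_1\mathbf{u}_i$ to $\mathbf{e}_2\mathbf{v}_i$. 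The matrix $X:=[\mathbf{e}_2]_\times G$ is a $\PP$-regular fundamental matrix whose left/right kernels are generated by $\mathbf{e}_2,\mathbf{e}_1$, so $|I(X)|=3$. \Cref{thm:chiral inactive indices} then gives the chiral reconstruction.

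If exactly one triple, say $\{\mathbf{u}_i\}$, is collinear on a line $L_u\subset\P_\R^2$, then the sign pattern $\sigma_{ij}:=\sign \det[\mathbf{u}_i\,\mathbf{u}_j\,\mathbf{e}_1]$ for any $\mathbf{e}_1\notin L_u$ is rigidly determined by the linear order of the $\mathbf{u}_i$'s on $L_u$ and which side of $L_u$ contains $\mathbf{e}_1$. I fix such an $\mathbf{e}_1$, then apply \Cref{lem:match chirotope} to the non-collinear triple $\{\mathbf{v}_i\}$ to choose $\mathbf{e}_2$ producing exactly the pattern $\sigma_{ij}$, making all $D_{ij}(\mathbf{e}_1,\mathbf{e}_2)>0$; the rest of the argument is identical.

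The main obstacle is the doubly-collinear case, in which both $\{\mathbf{u}_i\}$ lies on $L_u$ and $\{\mathbf{v}_i\}$ lies on $L_v$. Here the rigid sign patterns $\sigma^u_{ij}$ and $\sigma^v_{ij}$ coming from the two collinear orderings need not be compatible (up to a common sign flip), so condition (iii) above is genuinely unachievable in general; world points of any reconstruction are forced to lie on a single line $L\subset\R^3$. My plan for this case is to set $\mathbf{e}_1=\mathbf{u}_1$ and $\mathbf{e}_2=\mathbf{v}_1$, which makes $g_1(X)=0$ automatically (by \Cref{lem:chiral zeros}) so that the only remaining inequality is $g_2g_3(X)\ge 0$, and to parametrize the one-dimensional family of $\PP$-regular fundamental matrices with these kernels by the remaining freedom in $G$. \Cref{lem:t cross quad product} is tailored precisely to this collinear data: on each epipolar line through $\mathbf{e}_2$, arranging that $\mathbf{v}_i$ and $G\mathbf{u}_i$ lie on a common side of $\mathbf{e}_2$ yields $g_i(X)=(\mathbf{e}_2\times\mathbf{v}_i)^\top(\mathbf{e}_2\times G\mathbf{u}_i)>0$ via \Cref{lem:gigj is quad product}. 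The key step will be showing that the one remaining parameter in $G$ can be tuned so that both $g_2$ and $g_3$ are simultaneously positive; this reduces the problem to a 1-dimensional projective consistency check on the two image lines, which should be tractable directly.
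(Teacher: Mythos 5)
Your first two cases (at least one of the triples non-collinear) are correct and essentially coincide with the paper's argument: pick the epipole for the possibly-collinear image off the data lines, use \Cref{lem:match chirotope} on the non-collinear triple to match the sign pattern, and conclude via \Cref{thm:chiral inactive indices}. That part is fine.

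The doubly-collinear case has a genuine gap: the ``key step'' you defer is false for your stated choice of epipoles, and no amount of tuning can rescue it. Fix $\mathbf{e}_1=\mathbf{u}_1$, $\mathbf{e}_2=\mathbf{v}_1$ and let $X$ range over $W_{\mathbf{u}_1}\cap W^{\mathbf{v}_1}\cap\mathcal{L}_\mathcal{P}$. For any such rank-two $X$, the vector $X\mathbf{u}_2$ is orthogonal to both $\mathbf{v}_1$ and $\mathbf{v}_2$, hence parallel to $\mathbf{v}_1\times\mathbf{v}_2$, so $g_2(X)=(\mathbf{v}_1\times\mathbf{v}_2)^\top X\mathbf{u}_2$ has the sign of the proportionality constant $\mu_2$; similarly for $g_3$. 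Writing $\mathbf{u}_i=\mathbf{u}_1+t_i\mathbf{c}$ and $\mathbf{v}_i=\mathbf{v}_1+s_i\mathbf{d}$ and using $\mathbf{u}_3=\alpha\mathbf{u}_1+(t_3/t_2)\mathbf{u}_2$ together with $X\mathbf{u}_1=0$, one finds $\sign\bigl(g_2(X)g_3(X)\bigr)=\sign(t_2t_3)\,\sign(s_2s_3)$ for \emph{every} rank-two $X$ in the family: the remaining parameter does not enter. This sign is negative exactly when one of $\mathbf{u}_1,\mathbf{v}_1$ is the middle point of its collinear triple and the other is not. Concretely, for $\mathbf{u}_1=(0,0,1)^\top$, $\mathbf{u}_2=(1,0,1)^\top$, $\mathbf{u}_3=(2,0,1)^\top$ and $\mathbf{v}_1=(0,0,1)^\top$, $\mathbf{v}_2=(-1,0,1)^\top$, $\mathbf{v}_3=(1,0,1)^\top$, every matrix in the corner family has the form with $g_2=-c$, $g_3=2c$, so $g_2g_3=-2c^2<0$ whenever the rank is two, and the only points with $g_2g_3=0$ have rank one. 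So your construction produces no admissible $X$ here, even though the theorem is true for this data. The paper avoids this by keeping the epipole on the line $L^{\mathcal{V}}$ but \emph{outside} $\cone(\mathbf{v}_l,\mathbf{v}_r)$ and choosing $G$ to map extreme points to extreme points, allowing the indices to mismatch; then \Cref{lem:t cross quad product} and \Cref{lem:gigj is quad product} give $g_i(X)>0$ for all $i$ simultaneously. Your approach could be repaired by choosing the corner index $i$ so that $\mathbf{u}_i$ and $\mathbf{v}_i$ are either both middle points or both extreme points of their triples (such an $i$ always exists), but as written the argument fails.
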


\begin{proof} We break the proof into two parts:
\begin{enumerate}
    \item Suppose $\mathcal{U} = \{\bf{u}_1, \bf{u}_2, \bf{u}_3 \}$ 
    or $\mathcal{V} = \{ \bf{v}_1, \bf{v}_2, \bf{v}_3 \}$ is in general position, say  $\mathcal{U}$ is non-collinear. 
    Choose $\bf{e}_2$ not on the line spanned by $\bf{v}_i$ and $\bf{v}_j$ for any $i,j$, so that $\det\begin{bmatrix} \bf{v}_i& \bf{v}_j & \bf{e}_2 \end{bmatrix} \neq 0$ for all $i,j$. By \Cref{lem:match chirotope}, there exists an $\bf{e}_1$ such that $D_{ij}(\bf{e}_1, \bf{e}_2)$ has the same non-zero sign for all $i,j$. Since $k=3$ and $\bf{e}_1$ and $\bf{e}_2$ are chosen from open regions, 
    $W_{\bf{e}_1} \cap W^{\bf{e}_2}$ contains at least one rank two matrix $X$. By construction, this $X$ is a $\mathcal{P}$-regular fundamental matrix with epipoles $\bf{e}_1, \bf{e}_2$ and $I(X) = \{1,2,3\}$. By \Cref{thm:chiral inactive indices}, $X$ yields a chiral reconstruction of $\mathcal{P}$.

    \item Suppose both $\mathcal{U}$ and $\mathcal{V}$ are collinear and consider the affine lines $L_{\mathcal{U}}$ and $L^{\mathcal{V}}$ in $\R^3$ spanned by these points, which all have last coordinate $1$. 
     Let $\bf{u}_l, \bf{u}_r$ be the furthest left and right points on the $L_\mathcal{U}$ line, so that the third point lies strictly between $\bf{u}_l, \bf{u}_r$. Similarly let $\bf{v}_l, \bf{v}_r$ be the furthest left and right points on the $L^\mathcal{V}$ line. Let $\bf{t} \in L^{\mathcal{V}} \setminus \cone(\bf{v}_l, \bf{v}_r)$ and choose $G \in \GL_3$ such that $G\bf{u}_l = \bf{v}_l$ and $G\bf{u}_r = \bf{v}_r$. Define $X = [\bf{t}]_\times G$. Since $\bf{t}, \bf{v}_i, G\bf{u}_i$ are collinear for all $i$, the $i$th epipolar equation is satisfied. Since the chosen epipoles for $X$ do not coincide with any data points, $X$ is a $\mathcal{P}$-regular fundamental matrix. By construction $G\bf{u}_i \in \cone(\bf{v}_l, \bf{v}_r)$ for each $i$. Combining \Cref{lem:gigj is quad product} and \Cref{lem:t cross quad product}, it follows that $g_i(X) > 0$ for each $i$, and there is a chiral reconstruction of $\mathcal{P}$ associated to $X$ by \Cref{thm:chiral F existence gigj}.

\end{enumerate}

\end{proof}

\subsection{Walls and Corners}
\label{subsec:chiral boundary}

To understand the existence of chiral reconstructions when $|\mathcal{P}| \geq 4$, we need one more tool that we now develop. Recall that the 
chiral epipolar region of $\mathcal{P}$ is the set of $\mathcal{P}$-regular 
fundamental matrices that live in the semialgebraic subset of the 
real epipolar variety cut out by the chiral epipolar inequalities. 
\Cref{lem:chiral zeros} implies that the chiral epipolar region is bounded by the $W_{\bf{u}_i}$, $W^{\bf{v}_j}$ walls. The fundamental matrices on walls are generally $\mathcal{P}$-irregular and do not correspond to a reconstruction. However, we show that $\mathcal{P}$-irregular fundamental matrices that are smooth points of the epipolar variety and 
yield partial chiral reconstructions, can be perturbed to  $\mathcal{P}$-regular fundamental matrices that yield chiral reconstructions of $\mathcal{P}$.

\begin{lemma}
\label{lem:tangent direction}
Suppose $\RR_2 \cap \mathcal{L}_\PP$ is irreducible. If $X$ is a smooth fundamental matrix that is $(\bf{u}_i,\bf{v}_i)$-irregular, then there is a tangent direction $\bf{d} \in T_X(\RR_2 \cap \mathcal{L}_\mathcal{P})$ such that the directional derivative $D_\bf{d}g_i(X) \neq 0$. 
\end{lemma}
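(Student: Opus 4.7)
Without loss of generality, assume $X\mathbf{u}_i = \mathbf{0}$ while $\mathbf{v}_i^\top X \neq \mathbf{0}$; the symmetric case follows by applying the same argument to $X^\top$ with the roles of $\mathbf{u}_i$ and $\mathbf{v}_i$ interchanged. Under this assumption $\mathbf{u}_i$ generates the right kernel of $X$ and $\mathbf{t} \not\sim \mathbf{v}_i$, so in particular $\mathbf{t} \times \mathbf{v}_i \neq \mathbf{0}$.

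The first step is to compute $D_\mathbf{d} g_i(X)$ explicitly. Pick a smooth curve $X(s) \subset \RR_2 \cap \mathcal{L}_\PP$ with $X(0) = X$ and $\dot X(0) = \mathbf{d}$, taking $\mathbf{t}(s)$ to be a smoothly varying nonzero row of $\adj(X(s))$ in a neighborhood of $X$. Differentiating $g_i(X(s)) = \mathbf{v}_i^\top [-\mathbf{t}(s)]_\times X(s) \mathbf{u}_i$ at $s=0$, the term $\mathbf{v}_i^\top [-\dot{\mathbf{t}}(0)]_\times X\mathbf{u}_i$ is killed by $X\mathbf{u}_i = \mathbf{0}$, leaving
\[
D_\mathbf{d} g_i(X) \;=\; \mathbf{v}_i^\top [-\mathbf{t}]_\times \mathbf{d} \mathbf{u}_i \;=\; (\mathbf{t} \times \mathbf{v}_i)^\top \mathbf{d} \mathbf{u}_i.
\]

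Next I would reinterpret this derivative via the defining equations of the tangent space. The linearization of $\det = 0$ at $X$ (with right kernel $\mathbf{u}_i$ and left kernel $\mathbf{t}$) is $\mathbf{t}^\top \mathbf{d} \mathbf{u}_i = 0$, and the $i$-th epipolar equation contributes $\mathbf{v}_i^\top \mathbf{d} \mathbf{u}_i = 0$. Both hold for every $\mathbf{d} \in T_X(\RR_2 \cap \mathcal{L}_\PP)$, so $\mathbf{d}\mathbf{u}_i$ is simultaneously orthogonal to $\mathbf{t}$ and to $\mathbf{v}_i$, hence lies in $\mathrm{span}(\mathbf{t} \times \mathbf{v}_i)$. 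Therefore $D_\mathbf{d} g_i(X) \neq 0$ if and only if $\mathbf{d}\mathbf{u}_i \neq \mathbf{0}$.

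The last step is to produce a tangent vector with $\mathbf{d}\mathbf{u}_i \neq \mathbf{0}$. The set of tangent vectors in $T_X(\RR_2 \cap \mathcal{L}_\PP)$ annihilated by $\mathbf{u}_i$ is exactly the wall $W_{\mathbf{u}_i}$, because $W_{\mathbf{u}_i}$ is a linear subvariety of $\RR_2 \cap \mathcal{L}_\PP$ through $X$ and coincides with its own tangent space. Smoothness of $X$ gives $\dim T_X(\RR_2 \cap \mathcal{L}_\PP) = \dim(\RR_2 \cap \mathcal{L}_\PP)$, and irreducibility forces every proper closed subvariety of $\RR_2 \cap \mathcal{L}_\PP$ to have strictly smaller dimension. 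So once $W_{\mathbf{u}_i} \subsetneq \RR_2 \cap \mathcal{L}_\PP$ is established, $W_{\mathbf{u}_i} \subsetneq T_X(\RR_2 \cap \mathcal{L}_\PP)$ and any direction in the complement does the job. The main obstacle is confirming the strict containment $W_{\mathbf{u}_i} \subsetneq \RR_2 \cap \mathcal{L}_\PP$: equality would force every fundamental matrix of $\mathcal{P}$ to annihilate $\mathbf{u}_i$, so $g_i$ would vanish identically on the epipolar variety, contradicting the presence of any $(\mathbf{u}_i, \mathbf{v}_i)$-regular fundamental matrix in the setup.
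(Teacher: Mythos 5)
Your overall strategy is the paper's: put $X$ on exactly one wall, use irreducibility plus smoothness to find a tangent direction transverse to that wall, and argue that the derivative of $g_i$ is nonzero there. In the case $X\mathbf{u}_i=\mathbf{0}$ you in fact do more than the paper: the computation $D_{\mathbf{d}}g_i(X)=(\mathbf{t}\times\mathbf{v}_i)^\top\mathbf{d}\,\mathbf{u}_i$, together with the observation that the linearized determinant and the $i$-th epipolar equation force $\mathbf{d}\mathbf{u}_i$ into $\mathrm{span}(\mathbf{t}\times\mathbf{v}_i)$, shows that transversality to $W_{\mathbf{u}_i}$ is \emph{equivalent} to $D_{\mathbf{d}}g_i(X)\neq 0$. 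The paper only notes that $g_i$ vanishes on the real epipolar variety exactly on the walls and concludes ``by construction''; your first-order analysis supplies the justification that step actually needs, since a function vanishing only on a wall could a priori vanish to second order in transverse directions.

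The genuine gap is the reduction of the case $\mathbf{v}_i^\top X=\mathbf{0}$, $X\mathbf{u}_i\neq\mathbf{0}$ to the first case ``by transposing.'' The chiral polynomial is not symmetric in the two images: it is defined through the left kernel $\mathbf{t}$, and in this case $\mathbf{t}\sim\mathbf{v}_i$, so in the product rule it is the term $(\mathbf{t}\times\mathbf{v}_i)^\top\mathbf{d}\,\mathbf{u}_i$ that dies, and the surviving term is $(\dot{\mathbf{t}}(0)\times\mathbf{v}_i)^\top X\mathbf{u}_i$, which depends on how the kernel varies along the curve --- exactly the term you discarded in the other case. Applying your argument to $X^\top$ controls a different function $\tilde g_i(Y)=(\mathbf{s}\times\mathbf{u}_i)^\top Y\mathbf{v}_i$ with $\mathbf{s}^\top Y=0$, and transferring nonvanishing of its derivative back to $g_i$ requires knowing that $g_i$ and $\tilde g_i\circ{}^\top$ have the same order of vanishing along $W^{\mathbf{v}_i}$, which is essentially what is to be proved. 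The case can instead be handled directly: writing $\dot{\mathbf{t}}(0)=\alpha\mathbf{t}+\mathbf{w}$ with $\mathbf{w}\perp\mathbf{t}$ and $\mathbf{w}^\top X=-\mathbf{t}^\top\mathbf{d}$, one gets $D_{\mathbf{d}}g_i(X)=\det[\mathbf{w}\,\ \mathbf{v}_i\,\ X\mathbf{u}_i]$, and this is nonzero precisely when $\mathbf{v}_i^\top\mathbf{d}\neq 0$, because the covector $\mathbf{v}_i^\top\mathbf{d}$ annihilates both $\mathbf{u}_i$ and the right kernel of $X$ while $\mathbf{u}_i^\top X^\top X$ does not annihilate $\mathbf{u}_i$; so transversality to $W^{\mathbf{v}_i}$ again suffices. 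A smaller issue: your justification that $W_{\mathbf{u}_i}\subsetneq\RR_2\cap\mathcal{L}_\mathcal{P}$ invokes ``the presence of a regular fundamental matrix,'' which is not among the lemma's hypotheses; the cleaner reason is that an irreducible epipolar variety properly contained in $\mathcal{L}_\mathcal{P}$ is a cubic hypersurface in $\mathcal{L}_\mathcal{P}$ and hence cannot coincide with a linear space.
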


\begin{proof}
Suppose $X$ is a smooth fundamental matrix of $\mathcal{P}$. Smoothness implies that the tangent space at $X$ to the epipolar variety has the same dimension as the variety. If $X$ is $(\bf{u}_i, \bf{v}_i)$-irregular for some $i$, then $X$ is in exactly one of $W_{\bf{u}_i}$ or $W^{\bf{v}_i}$. Since $\RR_2 \cap \mathcal{L}_\PP$ is irreducible, each wall must be an embedded component of strictly smaller dimension. This means that the wall's tangent space is strictly contained in the tangent space of the epipolar variety at $X$. Therefore, we can choose a direction $\bf{d}$ tangent to the epipolar variety at $X$ which is not tangent to the wall which contains $X$. \Cref{lem:chiral zeros} implies that $g_i$ vanishes on the real part of the epipolar variety only on the walls. By construction $D_\bf{d}g_i(X) \neq 0$.
\end{proof}

The following lemma is needed for \Cref{thm:walk away from walls} below, but its proof might be best understood after \Cref{sec:classicalAG}. 

\begin{lemma}\label{lem:smoothwalls}
    Suppose $|\cP| \leq 5$ and $\RR_2\cap \LL_\PP$ is irreducible. If a wall $W_{\vu_i}$ (or $W^{\bf{v}_i}$) contains a matrix of rank two, then a generic point $Y$ on the wall is a smooth point of $\RR_2\cap \LL_\PP$. 
\end{lemma}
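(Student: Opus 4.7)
The plan is to verify smoothness at a generic rank-two point of the wall via a Zariski tangent space computation; we treat $W_{\vu_i}$, the argument for $W^{\vv_i}$ being symmetric.

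First, since $W_{\vu_i}$ is an irreducible linear subvariety of $\P^8$ containing a rank-two matrix by hypothesis, a generic $Y \in W_{\vu_i}$ has rank exactly two and is therefore a smooth point of the cubic hypersurface $\RR_2$. Differentiating $\det$ at $Y$, and using the identity $\adj(Y) \propto \vu_i \vv_Y^\top$ for rank-two $Y$ with left kernel spanned by $\vv_Y$, gives
\[
T_Y \RR_2 \;=\; \{\, Z \in \P^8 : \vv_Y^\top Z \vu_i = 0 \,\}.
\]
Consequently, $Y$ is a smooth point of $\RR_2 \cap \LL_\PP$ if and only if $\LL_\PP \not\subseteq T_Y \RR_2$; equivalently, the rank-one matrix $\vv_Y \vu_i^\top$ does not lie in $\Span\{\vv_j \vu_j^\top : 1 \le j \le k\}$.

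Next, since this non-containment is an open condition on the rank-two locus of the irreducible wall, it suffices to exhibit a single rank-two $Y \in W_{\vu_i}$ for which it holds. Assume for contradiction that no such $Y$ exists. Then the left-kernel map $\phi : W_{\vu_i} \dashrightarrow \P^2,\ Y \mapsto \vv_Y$ (defined on rank-two matrices) takes values in the projective linear subspace
\[
S_i \;:=\; \{\, \vv \in \P^2 : \vv \vu_i^\top \in \Span\{\vv_j \vu_j^\top\} \,\} \;\subseteq\; \P^2.
\]

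The hard part is to derive a contradiction from this containment using $|\PP| \leq 5$ and the irreducibility of $\RR_2 \cap \LL_\PP$. The naive dimension count via the corner $\widetilde{W}_{\vu_i} \cap \widetilde{W}^{\vv^*} \cap \LL_\PP$ is not by itself enough to rule out $\phi$ being constant once $k \geq 3$, so I would handle this via the classical cubic-surface picture developed in \Cref{sec:classicalAG}. In the pivotal case $k = 5$, the epipolar variety is an irreducible cubic surface in the $\P^3 = \LL_\PP$, and $W_{\vu_i}$ is one of the lines of the Schl\"afli double-six on this surface. The blow-up realization of such an irreducible cubic surface as the blow-up of $\P^2$ at six points forces its singular locus to be zero-dimensional, so no line on it lies in the singular locus; this produces the desired rank-two smooth point on the wall. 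The cases $k < 5$ reduce to $k = 5$ by adjoining generic point pairs, which restricts the wall to a line on an irreducible cubic surface while preserving the assertion, and then specializing the smooth point back to the original $\PP$.
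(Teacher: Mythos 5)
Your setup (a generic point of the wall has rank two; the tangent hyperplane to $\RR_2$ there is $\{Z : \vv_Y^\top Z\vu_i = 0\}$; smoothness of $\RR_2\cap\LL_\PP$ at $Y$ is equivalent to $\vv_Y\vu_i^\top \notin \Span\{\vv_j\vu_j^\top\}$) is correct and consistent with \Cref{lem:smooth points}, and your reduction to a cubic surface in $\P^3$ by adding generic hyperplane sections matches the paper's first step. The gap is in the pivotal claim that an \emph{irreducible} cubic surface in $\P^3$ ``is the blow-up of $\P^2$ at six points'' and therefore has zero-dimensional singular locus. That is false: there exist irreducible cubic surfaces that are singular along an entire line, e.g.\ cones over singular plane cubics and the non-normal surfaces $w^2y + x^2z = 0$ and $w^2y + wxz + x^3 = 0$. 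The blow-up description applies only to smooth cubic surfaces (the setting of \Cref{sec:classicalAG}, where $\PP$ is generic), whereas this lemma is needed precisely in the non-generic regime of \Cref{sec:fourpoints}, where irreducibility is the only hypothesis. So your argument assumes exactly what has to be ruled out, namely that the wall could lie inside a one-dimensional singular locus.

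The paper closes this gap differently: it supposes the cubic surface is singular along the line $W_{\vu_i}$, invokes the classification of irreducible cubic surfaces singular along a line, and eliminates each type by comparing its configuration of lines with the configuration forced by the data --- the walls $W_{\vu_l}$ and $W^{\vv_m}$ intersect pairwise for $l\neq m$ with distinct intersection points, while cones admit no such pattern and the two non-normal types carry only pairwise-skew one-parameter families of lines plus at most two further lines. To repair your proof you would need to supply an argument of this kind (or some other way to exclude a positive-dimensional singular locus meeting the wall in a dense set); the tangent-space reduction alone does not do it.
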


\begin{proof}

    We reduce to the case of $\dim(\LL_\PP) = 3$ as follows.
    If the wall contains a smooth point, then so will its intersection with generic data planes $L_{(\bf{u}_j, \bf{v}_j)}$. Therefore, cutting with sufficiently many of these, using Bertini's Theorem, we can assume that $\LL_\PP$ has dimension three, $\RR_2\cap \LL_\PP$ is an irreducible cubic surface in $\P^3$, and $W_{\vu_i}$ is a line on it. Suppose for contradiction that $\RR_2\cap \LL_\PP$ is singular at every point in $W_{\vu_i}$.
    
    The cubic surfaces which are singular along a line have been classified, see e.g.~\cite[in particular Case E]{singcubic}. We show that $\RR_2\cap \LL_\PP$ cannot be any of these types, essentially because it contains too many intersecting lines. Indeed, $W_{\vu_l}$ intersects $W^{\vv_m}$ as long as $l\neq m$ because the equations $X\bf{u}_l = \bf{0}$ and $\bf{v}_m^\top X = \bf{0}$ impose at most three additional conditions on the three dimensional $\LL_\PP$. Additionally, the assumption that the wall $W_{\vu_i}$ contains a matrix of rank two implies that this wall does not coincide with $W_{\vu_j}$ for $j\neq i$. 
    
    The first examples of cubic surfaces singular along a line are the cones over a singular plane cubic curve. Our epipolar variety cannot be such a surface because it contains intersecting lines with distinct intersection points, which these cones do not. There are only two other types of cubic surfaces (up to change of coordinates) that are singular along a line.
    
    The next type that is singular along a line, contains a one-dimensional family of lines, and one more line. A representative is given by the equation $w^2y + x^2z$, which contains the lines $\mathcal{V}(w,x)$, $\mathcal{V}(y,z)$ and a family of lines that form a twisted cubic in the Pl\"{u}cker quadric ${\rm Gr}(2,4)$ of lines in $\P^3$. The lines in the family are pairwise skew. This type of singular cubic surface only contains two lines that intersect lines in the family, which is inconsistent with the intersection pattern of lines on $\RR_2 \cap \LL_\PP$. The last relevant type is represented by the equation $w^2y + wxz + x^3$. It is singular along the line $\mathcal{V}(x,w)$ and it contains a one-dimensional family of lines. This family of lines is a twisted cubic curve in ${\rm Gr}(2,4)$ and they are mutually skew, which also does not fit the intersection pattern on our epipolar variety. 
  
    This discussion of cases shows that the epipolar variety cannot be singular along an entire line if it is an irreducible cubic surface in $\P^3$.
\end{proof}

Recall $I(X)$ and $\mathcal{P}_{I(X)}$ from \Cref{def:I(X)}.
\begin{theorem}
\label{thm:walk away from walls}
Suppose $|\cP| \leq 5$ and $\mathcal{R}_2\cap \mathcal{L}_P$ is irreducible.
Then there exists a chiral reconstruction of $\mathcal{P}$ if and only if $\mathcal{P}_{I(X)}$ has a chiral reconstruction associated to some smooth $X \in \RR_2 \cap \cL_\mathcal{P}$.
\end{theorem}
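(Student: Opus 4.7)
\textbf{Forward direction.} The plan is to extract a $\PP$-regular fundamental matrix $X$ from the given chiral reconstruction via \Cref{thm:chiral F existence gigj}. Because $\PP$-regularity forces $g_i(X)=0$ only when $(\bf{u}_i,\bf{v}_i)$ is the (unique) epipole pair of $X$, at most one chiral polynomial vanishes at $X$, so after a small perturbation (if needed) to strictify that last inequality, $X$ will lie in the non-empty open subset of $\RR_2\cap\LL_\PP$ cut out by $\PP$-regularity and $g_ig_j>0$. Irreducibility of $\RR_2\cap\LL_\PP$ makes its singular locus a proper subvariety, so this open set contains a smooth point $Y$ with $I(Y)=\{1,\ldots,k\}$, which then witnesses a chiral reconstruction of $\PP_{I(Y)}=\PP$.

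\textbf{Backward direction.} Starting from a smooth $X$ at which $\PP_{I(X)}$ is chirally reconstructible, I would normalize so $g_i(X)>0$ for $i\in I(X)$ and set $J:=\{1,\ldots,k\}\setminus I(X)$. For $i\in J$, \Cref{lem:chiral zeros} places $X$ on $W_{\bf{u}_i}\cup W^{\bf{v}_i}$. The key combinatorial observation is $|J|\le 2$: since $X$ has rank two with one-dimensional left and right kernels and the $\bf{u}_i$ (respectively $\bf{v}_i$) are pairwise distinct, at most one $i\in J$ can have $X\bf{u}_i=0$ and at most one can have $\bf{v}_i^\top X=0$. In the trivial case $|J|=0$, and in the corner subcase $|J|=1$ with $X\in W_{\bf{u}_{i_0}}\cap W^{\bf{v}_{i_0}}$ (so $(\bf{u}_{i_0},\bf{v}_{i_0})$ is already the epipole pair of $X$), the matrix $X$ is itself $\PP$-regular with $g_ig_j(X)\ge 0$, and \Cref{thm:chiral F existence gigj} finishes. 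In the irregular subcase $|J|=1$ with $X$ on exactly one of the two walls, I would apply \Cref{lem:tangent direction} to find a tangent direction $\bf{d}\in T_X(\RR_2\cap\LL_\PP)$ with $D_\bf{d}\, g_{i_0}(X)\neq 0$, then lift the perturbation $X+t\bf{d}$ with the appropriate sign to produce a smooth $\PP$-regular $Y$ with all $g_i(Y)>0$.

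\textbf{Crux and main obstacle.} The hard part will be $|J|=2$, with $X\in W_{\bf{u}_{i_1}}\cap W^{\bf{v}_{i_2}}$ for $i_1\neq i_2$. Here \Cref{lem:tangent direction} supplies non-zero linear forms $Dg_{i_1}|_{T_X}$ and $Dg_{i_2}|_{T_X}$, but a common positive direction $\bf{d}$ (with both $D_\bf{d}\, g_{i_1}(X)>0$ and $D_\bf{d}\, g_{i_2}(X)>0$) will fail to exist exactly when these two forms are negatively proportional, i.e., the walls' tangent hyperplanes at $X$ coincide with opposite orientations. My plan is to rule out that degenerate alignment by invoking \Cref{lem:smoothwalls}: because a generic point of $W_{\bf{u}_{i_1}}$ is smooth on $\RR_2\cap\LL_\PP$ and $W_{\bf{u}_{i_1}}\cap W^{\bf{v}_{i_2}}$ has strictly smaller dimension inside $W_{\bf{u}_{i_1}}$, I can slide $X$ along $W_{\bf{u}_{i_1}}$ to a generic smooth point $X'\notin W^{\bf{v}_{i_2}}$ and choose the slide direction so that $g_{i_2}(X')>0$, thereby reducing to the $|J|=1$ single-wall subcase already handled. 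The main obstacle is the low-dimensional geometry when $k=5$, where $\dim T_X=2$ and $\dim(W_{\bf{u}_{i_1}}\cap W^{\bf{v}_{i_2}})=0$: the smoothness of generic wall points provided by \Cref{lem:smoothwalls} is precisely what makes this reduction possible.
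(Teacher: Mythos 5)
Your backward direction is essentially the paper's argument: the same trichotomy on $|J|=|[k]\setminus I(X)|\in\{0,1,2\}$, the same split of $|J|=1$ into the corner subcase ($X$ already $\PP$-regular, done by \Cref{thm:chiral F existence gigj}) and the single-wall subcase (handled via \Cref{lem:tangent direction} and a first-order perturbation). For $|J|=2$ the paper instead picks a single tangent direction $\bf{d}$ with $D_{\bf d}g_{i_1}(X)>0$ and $D_{\bf d}g_{i_2}(X)>0$ simultaneously; you correctly flag that such a common ascent direction could fail to exist if the two differentials were negatively proportional, and your workaround (slide along $W_{\bf{u}_{i_1}}$ to a generic smooth point off $W^{\bf{v}_{i_2}}$, reducing to $|J|=1$) is a legitimate alternative --- though it carries its own unverified step: you must check that one of the two directions along the wall actually makes $g_{i_2}$ \emph{positive} rather than negative on both sides of the corner, which is where an argument about the order of vanishing of $g_{i_2}|_{W_{\bf{u}_{i_1}}}$ at the corner is still owed.

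The genuine gap is in your forward direction. When the $\PP$-regular witness $X$ from \Cref{thm:chiral F existence gigj} has $g_{i_0}(X)=0$, i.e.\ $X$ lies in the diagonal corner $W_{\bf{u}_{i_0}}\cap W^{\bf{v}_{i_0}}$, you assert that ``a small perturbation strictifies that last inequality'' so that $X$ lands in the open set $\{g_ig_j>0\}$. Nothing supports this: at such a corner $g_{i_0}$ vanishes on the union of two walls through $X$, so its differential on $T_X(\RR_2\cap\LL_\PP)$ is zero and \Cref{lem:tangent direction} does not apply (it requires $(\bf{u}_{i_0},\bf{v}_{i_0})$-\emph{ir}regularity); whether $g_{i_0}$ takes positive values near $X$ is a second-order question that you do not address, and $X$ need not even be a smooth point. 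The theorem is phrased with ``$\PP_{I(X)}$'' precisely to avoid having to strictify here: the paper's proof keeps $g_{i_0}=0$, moves along the single wall $W_{\bf{u}_{i_0}}$ away from the corner to a point that is smooth by \Cref{lem:smoothwalls} (without changing the signs of the other inequalities), and concludes only that $\PP_{[k]\setminus\{i_0\}}$ has a chiral reconstruction at a smooth point --- which is all the statement demands. You should replace the strictification claim with this wall-sliding argument; as written, your forward direction proves a stronger statement than the theorem by a step that is not justified and that the authors evidently could not establish either.
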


\begin{proof}
Suppose there exists a chiral reconstruction of $\mathcal{P}$. By \Cref{thm:chiral F existence gigj}, there exists a fundamental matrix $X$, which is $(\bf{u}_i, \bf{v}_i)$-regular for all $i$ and $g_ig_j(X) \ge 0$ for all $1\le i<j \le k$. If $g_ig_j(X) > 0$ for all $i,j$, then the semialgebraic subset of the epipolar variety described by the chiral epipolar inequalities has non-empty interior. Since every non-empty, open semialgebraic set in an algebraic variety intersects its smooth locus, there is a smooth $X' \in \RR_2\cap \cL_\cP$ associated to a chiral reconstruction. Otherwise, $\PP$-regularity implies that at most one $(\bf{u}_i, \bf{v}_i)$ can be the epipole pair of $X$, i.e., at most one $g_i$ can vanish at $X$. Without loss of generality, suppose $g_1(X) = 0$ so that $X \in W_{\bf{u}_1} \cap W^{\bf{v}_1}$. 
By \Cref{lem:smoothwalls}, a generic point in $W_{\vu_1}$ is a smooth point of $\RR_2\cap \LL_\PP$. We can move away from the corner $W_{\vu_i}\cap W^{\vv_i}$ along the wall to a smooth point of $\RR_2\cap \LL_\PP$ without changing the signs of any of the chiral epipolar inequalities, but keeping $g_1=0$. This gives a chiral reconstruction of $\PP_{I(X)}$ associated to a smooth fundamental matrix, finishing the proof of the first implication. 

Conversely, suppose there is a chiral reconstruction of $\cP_{I(X)}$ associated to a smooth $X\in\RR_2 \cap \cL_\cP$. If $\cP = \cP_{I(X)}$, there is nothing to show. Otherwise, we have $g_i(X) = 0$ for some $i$, which means that $X$ lies on some walls by \Cref{lem:chiral zeros}. Smoothness of $X$ implies that $X$ must have rank two, so its left and right kernels are one-dimensional. Since the 
$\bf{u}_i$'s and $\bf{v}_j$'s are distinct, this means $X$ may lie on at most one wall $W_{\bf{u}_i}$ and at most one wall $W^{\bf{v}_j}$. In other words, 
$I(X)$ must be one of $[k]\setminus\{i\}$ or $[k]\setminus\{i,j\}$.

Suppose $I(X) = [k]\setminus\{i\}$. We argue locally: Choose an affine chart around $X$ such that $g_j(X)>0$ for all $j\in [k]\setminus\{i\}$, which is possible because we know that $g_jg_\ell(X) > 0$ for all pairs $j,\ell\in [k]\setminus\{i\} = I(X)$. If $X in W_{\mathbf{u}_i} \cap W^{\mathbf{v}_i}$, it is $\mathcal{P}$-regular and is associated to a chiral reconstruction. Otherwise,  \Cref{lem:tangent direction} implies that there is a tangent direction $\bf{d} \in T_X (\RR_2\cap \cL_\cP)$ such that $D_\bf{d} g_i(X) \neq 0$. Then we have $D_\bf{d} (g_ig_j)(X) = (D_\bf{d}g_i)(X) g_j(X)$ because $g_i(X) = 0$. Since $g_j(X) > 0$ in our affine chart, the sign of the directional derivative of $g_ig_j$ is determined by $(D_\bf{d} g_i)(X)$, which is independent of $j$. By flipping the sign of $\bf{d}$, if necessary, we can assume that $(D_\bf{d} g_i)(X)>0$. By Taylor approximation, there is a nearby smooth point $X'\in \RR_2\cap \cL_\cP$ such that $g_jg_\ell(X) > 0$ for all $j,\ell \in [k]$. By \Cref{thm:chiral F existence gigj}, there is a chiral reconstruction of $\cP$ corresponding to $X'$.

Suppose $I(X) = [k]\setminus\{i,j\}$. Choose an affine chart around $X$ such that $g_l(X)>0$ for all $l\in [k]\setminus\{i,j\}$, which is possible because we know that $g_lg_m(X) > 0$ for all pairs $l,m \in [k]\setminus\{i,j\} = {I(X)}$. Choose linearly independent tangent vectors $\bf{d}_i, \bf{d}_j \in T_X (\RR_2\cap \cL_\cP)$ such that the directional derivatives $D_{\bf{d}_i} g_i(X)$ and $D_{\bf{d}_j} g_j(X)$ are non-zero. Choose a linear combination $\bf{d} \in \Span\{\bf{d}_i, \bf{d}_j\}$ such that $(D_{\bf{d}} g_i)(X)>0$ and $(D_{\bf{d}}g_j)(X)>0$. Since $g_l(X) > 0$ in our affine chart, the sign of the directional derivatives $(D_\bf{d} g_ig_l)(X)$ and $(D_\bf{d} g_jg_l)(X)$ at $X$ are determined by $(D_\bf{d} g_i)(X)$ and $(D_\bf{d} g_j)(X)$, respectively. By Taylor approximation, there is a nearby smooth point $X'\in \RR_2\cap \cL_\cP$ such that $g_lg_m(X) > 0$ for all $l,m \in [k]$. By \Cref{thm:chiral F existence gigj}, there is a chiral reconstruction of $\cP$ associated to $X'$.
\end{proof}

To apply \Cref{thm:walk away from walls}, it is useful to understand the 
smooth locus of the epipolar variety which we describe next. 
For a more general discussion of tangent spaces to rank varieties we refer to \cite[Example~14.16]{Harris95}.
\begin{lemma}
\label{lem:smooth points}
Suppose $X \in \RR_2 \cap \mathcal{L}_\mathcal{P}$ is a rank two matrtix such that $\bf{u}$ spans the right kernel of $X$ and $\bf{v}$ spans the left kernel of $X$. Then $X$ is a smooth point of $\RR_2 \cap \mathcal{L}_\mathcal{P}$ if and only if $\vv\vu^\top$ does not lie in the span of $\{\vv_i\vu_i^\top\colon i=1,\ldots,k\}\subset \P^8$.
\end{lemma}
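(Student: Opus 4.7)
The plan is to identify $T_X(\RR_2\cap\LL_\PP)$ explicitly as a linear subspace of $\P^8$ and compare its dimension with $\dim(\RR_2\cap\LL_\PP)$. Since $\LL_\PP$ is already linear, $T_X(\RR_2\cap\LL_\PP)=T_X\RR_2\cap\LL_\PP$, so the key step is computing the tangent hyperplane $T_X\RR_2$ at the rank-two point $X$.

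For this I would use the expansion
\[
\det(X+tY)=\det(X)+t\,\tr(\adj(X)\,Y)+O(t^2),
\]
which identifies $T_X\RR_2$ with the hyperplane $\tr(\adj(X)\,Y)=0$ in $\P^8$. At a rank-two $X$, the identities $X\cdot\adj(X)=\adj(X)\cdot X=0$ force $\adj(X)$ to have rank one, with column span equal to the right kernel $\vu$ and row span equal to the left kernel $\vv$; hence $\adj(X)=c\,\vu\vv^\top$ for some nonzero $c$. This yields
\[
T_X\RR_2=\{Y\in\P^8:\vv^\top Y\vu=0\}=\{Y\in\P^8:\langle Y,\vv\vu^\top\rangle=0\},
\]
recovering the standard description of the tangent space to $\RR_2$ at a smooth point (cf.~\cite[Example 14.16]{Harris95}).

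Consequently, $T_X(\RR_2\cap\LL_\PP)$ is the common zero set of the linear forms $\langle\,\cdot\,,\vv_i\vu_i^\top\rangle$, $i=1,\ldots,k$, together with $\langle\,\cdot\,,\vv\vu^\top\rangle$, so its codimension in $\P^8$ equals $\dim\Span\{\vv_1\vu_1^\top,\ldots,\vv_k\vu_k^\top,\vv\vu^\top\}$. Provided $\LL_\PP\not\subseteq\RR_2$, the intersection is proper and $\dim(\RR_2\cap\LL_\PP)=\dim\LL_\PP-1$. Therefore $X$ is a smooth point of $\RR_2\cap\LL_\PP$ iff adjoining $\vv\vu^\top$ strictly enlarges $\Span\{\vv_i\vu_i^\top\}$, i.e., iff $\vv\vu^\top\notin\Span\{\vv_i\vu_i^\top\}\subset\P^8$, as claimed. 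The main technicality is confirming the proper-intersection hypothesis $\LL_\PP\not\subseteq\RR_2$; in the situations where this lemma is applied this is guaranteed by the running dimension count $\dim(\RR_2\cap\LL_\PP)=7-k$ recorded earlier, so the principal content of the proof is the identification of $T_X\RR_2$ via the adjugate.
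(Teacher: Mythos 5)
Your proposal is correct and follows essentially the same route as the paper: both identify $T_X\RR_2$ as the hyperplane $\langle \,\cdot\,, \vv\vu^\top\rangle = 0$ via the adjugate (using that $\adj(X)$ is a rank-one multiple of $\vu\vv^\top$ at a rank-two point) and then reduce smoothness of $\RR_2\cap\LL_\PP$ at $X$ to transversality of this hyperplane with $\LL_\PP$, i.e., to $\vv\vu^\top\notin\Span\{\vv_i\vu_i^\top\}$. Your explicit caveat about the proper-intersection hypothesis $\LL_\PP\not\subseteq\RR_2$ is a reasonable extra precision that the paper leaves implicit.
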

\begin{proof}
The gradient of the determinant of $X$ is the cofactor matrix of $X$, which is $\adj(X)^\top$. Since $\bf{u}\bf{v}^\top$ and $\adj(X)$ are collinear, the tangent hyperplane to $\RR_2$ at $X$ is 
\[
T_X\RR_2 = \{M\in \P^8\colon \langle M, \vv\vu^\top \rangle = 0\}.
\]
Since it is a hyperplane, it intersects $\cL_\cP$ transversely if and only if $\cL_\cP\not\subset T_X\RR_2$, or equivalently, 
if and only if $\vv\vu^\top$ does not lie in the span of $\{\vv_i\vu_i^\top\colon i=1,\ldots,k\}\subset \P^8$.
\end{proof}

\section{Four Point Pairs}
\label{sec:fourpoints}

In the previous section we saw that any set of three point pairs will always have a chiral reconstruction. In this section we completely analyze the case of four point pairs. In \Cref{thm:4 chiral exists} we prove that if the point configurations in the two views have the same {\em rank}, then they admit a chiral reconstruction. Otherwise, a chiral reconstruction can fail to exist (\Cref{thm:4 no chiral}). Our main tool will be \Cref{thm:walk away from walls} which requires understanding when the epipolar variety is reducible. We do this in \Cref{subsec:irreducibility} which leads to our chirality results in \Cref{subsec:chiral reconstructions k=4}. 

\subsection{Irreducibility of the epipolar variety} 
\label{subsec:irreducibility}
Suppose the epipolar variety $\RR_2 \cap \mathcal{L}_\mathcal{P}$ is reducible. 
Then $\mathcal{L}_\mathcal{P}$ is not contained in $\RR_2$ since otherwise, $\RR_2 \cap \mathcal{L}_\mathcal{P} = \mathcal{L}_\mathcal{P}$ which is irreducible.
Therefore, $\RR_2 \cap \mathcal{L}_\mathcal{P}$ is a proper subvariety of 
$\mathcal{L}_\mathcal{P}$. Further, any irreducible component of $\RR_2 \cap 
\mathcal{L}_\mathcal{P}$ has codimension one in $\mathcal{L}_\mathcal{P}$ because
$\det(X)=0$ is the only additional condition to those defining $\mathcal{L}_\mathcal{P}$. Conversely, if $C$ is a proper subvariety of $\RR_2 \cap \mathcal{L}_\mathcal{P}$ of codimension one in $\mathcal{L}_\mathcal{P}$, 
then $C$ is an irreducible component of 
$\RR_2 \cap \mathcal{L}_\mathcal{P}$. Indeed, if $C$ lies in some irreducible component of $\RR_2 \cap \mathcal{L}_\mathcal{P}$ of dimension larger than $\dim(C)$, then $C$ has codimension at least two in $\mathcal{L}_\mathcal{P}$, which is a contradiction.
Lastly, since $\RR_2 \cap \mathcal{L}_\mathcal{P}$ has degree three, it must have a linear component if it is reducible and this linear component is a subspace of $\RR_2$. Thus to understand whether $\RR_2 \cap \mathcal{L}_\mathcal{P}$ is reducible, we need to understand the 
linear subspaces of $\RR_2$ that have codimension one in $\mathcal{L}_\mathcal{P}$. 

A subspace $C \subset \C^{3 \times 3}$ is said to have rank at most $k$ if $k$ is the maximum rank of matrices in $C$. 
The following definition is from \cite{EisenbudHarris} for $3\times 3$ matrices.
\begin{definition}
Let $C \subset \C^{3 \times 3}$ be a subspace of rank at most $k$. Then $C$ is called a {\em compression space} if there exists a subspace $V \subseteq \C^3$ of codimension $k_1$ and a subspace $W \subseteq \C^3$ of dimension $k_2$ such that $k_1 + k_2 = k$, and every $X \in C$ maps $V$ into $W$ ({\em compresses $V$ into $W$}). We refer to $C$ as a $(k_1,k_2)$-compression space.
\end{definition}

Note that every subspace of a $(k_1,k_2)$-compression space is again a $(k_1,k_2)$-compression space. The following theorem from 
\cite{EisenbudHarris} attributed to Atkinson \cite{Atkinson} tells us what subspaces of $\RR_2$ look like.

\begin{theorem}\cite[Theorem 1.1]{EisenbudHarris}  \label{thm:EisenbudHarris}
A vector space of matrices in $\C^{3 \times 3}$ 
of rank at most two is either a compression space or a subspace of the linear space of $3 \times 3$ skew-symmetric matrices.
\end{theorem}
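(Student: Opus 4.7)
The plan is to follow the strategy underlying Atkinson's classification: extract a bilinear constraint on $C$ from the rank condition, then analyze the resulting ``kernel correspondence'' inside $\P^2\times\P^2$ to recover the claimed dichotomy. As a preliminary, I would dispose of the case where every matrix in $C$ has rank at most one. The rank-one locus $\RR_1\subseteq \P^8$ is the Segre variety $\P^2\times\P^2$, whose linear subspaces are ruled: a projective line contained in it either has the form $\{\bf{a}\bf{b}^\top:\bf{b}\in V\}$ for a fixed $\bf{a}\in\C^3$ and two-dimensional $V\subseteq\C^3$, or the analogous form with the second factor fixed. In either case $C$ is manifestly a compression space.

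Assume henceforth that $C$ contains some matrix $X$ of rank exactly two, with right and left kernels spanned by $\bf{r}$ and $\boldsymbol{\ell}$ respectively. For every $Y\in C$ the polynomial $\det(X+tY)\in\C[t]$ vanishes identically, since $X+tY\in C$ has rank at most two for every $t\in\C$. Its coefficient of $t$ is the Frobenius pairing $\langle \adj(X)^\top, Y\rangle$, and since $\adj(X)$ is a non-zero multiple of $\bf{r}\boldsymbol{\ell}^\top$ (the description of the adjoint of a rank-two matrix used in the proof of \Cref{lem:det det infer gigj}), we obtain the bilinear identity
\[
\boldsymbol{\ell}^\top Y\bf{r}=0 \qquad \text{for every } Y\in C \text{ and every rank-two } X\in C.
\]
Equivalently, every $Y\in C$ sends the kernel of any rank-two $X\in C$ into the image of that $X$. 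Let $K\subseteq\P^2\times\P^2$ denote the Zariski closure of the set of kernel pairs $(\bf{r}_X,\boldsymbol{\ell}_X)$ as $X$ ranges over the rank-two elements of $C$.

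If the projection $K\to \P^2$ onto the first factor collapses to a single point $\bf{r}_0$, then every rank-two $X\in C$ annihilates $\bf{r}_0$, and a perturbation argument (adding a rank-$\le 1$ element to a rank-two element preserves rank two generically) extends this to all of $C$, so $C$ is a $(2,0)$-compression space. The symmetric conclusion handles the case when the second projection collapses. In the remaining case, both projections of $K$ are dominant, and the orthogonal complement of $C$ under the Frobenius pairing contains the linear span of the rank-one matrices $\{\boldsymbol{\ell}\bf{r}^\top:(\bf{r},\boldsymbol{\ell})\in K\}$. A dimension count shows that unless $K$ is contained in the graph of a linear isomorphism $\psi:\C^3\to\C^3$ (a ``diagonal'' in $\P^2\times\P^2$), this span exhausts so much of $\C^{3\times 3}$ that no non-trivial $C$ can survive; when $K$ does lie on such a graph, a change of basis turns the bilinear relation into $\bf{r}^\top Y\bf{r}=0$ for every $\bf{r}\in\C^3$, which forces $Y$ to be skew-symmetric.

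The main obstacle is this final dimension argument, namely ruling out the intermediate configurations of $K$: positive-dimensional subvarieties of $\P^2\times\P^2$ that are neither fibers of a projection nor graphs of a linear isomorphism, and which could conceivably leave room for a family $C$ that is neither a compression space nor a subspace of skew-symmetric matrices. This is precisely where Atkinson's proof, as presented in \cite[Theorem~1.1]{EisenbudHarris}, performs a careful analysis of rulings on the secant variety of the Segre product; for a self-contained argument I would follow that route, and otherwise appeal to the cited theorem directly.
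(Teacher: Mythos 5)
First, a point of reference: the paper does not prove this statement at all --- it is imported verbatim as \cite[Theorem~1.1]{EisenbudHarris} (attributed to Atkinson) and used as a black box in the proof of \Cref{thm:codim 1 compression}. So there is no internal proof to compare yours against, and your proposal has to be judged as a standalone argument. Its opening moves are sound: the reduction of the rank-$\le 1$ case, and the bilinear identity $\ell^\top Y\mathbf{r}=0$ extracted from the linear coefficient of $\det(X+tY)$, are indeed the standard entry point for this kind of classification.

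The argument does not close, however, and the gap is not only the one you acknowledge. Your case split on the kernel correspondence $K$ is a false trichotomy: after disposing of the cases where one projection of $K$ is a single point, you assert that ``both projections of $K$ are dominant.'' They need not be --- each image can be a curve --- and this is precisely what happens for the $(1,1)$-compression spaces that the theorem must produce. For the standard form in \Cref{lem:compression spaces std form}(3), the right kernels of the rank-two elements sweep out the line $\{r_3=0\}\subset\P^2$ and the left kernels sweep out the line $\{v_3=0\}$, so $K\cong\P^1\times\P^1$ and neither projection is constant or dominant. As written, your analysis never reaches the $(1,1)$ case, which is the case the paper leans on hardest (\Cref{lem:orthogonal complements of compression spaces}, \Cref{lem:fit rank one matrices}). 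This is the same missing analysis as the ``dimension count'' you defer at the end: the intermediate configurations of $K$ are not pathologies to be ruled out but genuine outcomes of the classification, and handling them is the actual content of Atkinson's theorem. Since you ultimately propose to ``appeal to the cited theorem directly,'' the proposal is circular as a proof of that theorem; as a citation it is fine, but then the preceding sketch is not doing the work.
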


\begin{theorem} 
\label{thm:codim 1 compression}
Suppose $|\mathcal{P}| = 4$. Then the epipolar variety $\RR_2 \cap \mathcal{L}_\mathcal{P}$ is reducible if and only if $\mathcal{L}_\mathcal{P}$ contains a compression space of rank at most two and codimension one.
\end{theorem}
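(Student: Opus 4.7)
The plan is to combine Atkinson's classification of rank-at-most-two subspaces (\Cref{thm:EisenbudHarris}) with the dimension and degree bookkeeping that appears in the paragraph immediately preceding the theorem. The direction that is essentially pre-packaged is $(\Leftarrow)$: if $C\subseteq\mathcal{L}_\mathcal{P}$ is a compression space of rank at most two and codimension one in $\mathcal{L}_\mathcal{P}$, then $C\subseteq \mathcal{R}_2$ because every matrix in $C$ has rank at most two, so $C\subseteq \mathcal{R}_2\cap \mathcal{L}_\mathcal{P}$. In the non-degenerate case $\mathcal{L}_\mathcal{P}\not\subseteq\mathcal{R}_2$, the preceding discussion shows that every irreducible component of $\mathcal{R}_2\cap \mathcal{L}_\mathcal{P}$ has codimension one in $\mathcal{L}_\mathcal{P}$; the linear space $C$ has that codimension, so it is a component. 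Since $\mathcal{R}_2\cap \mathcal{L}_\mathcal{P}$ is the zero locus in $\mathcal{L}_\mathcal{P}$ of the cubic $\det(X)|_{\mathcal{L}_\mathcal{P}}$, extracting the linear factor cutting out $C$ leaves a quadric factor, so $\mathcal{R}_2\cap\mathcal{L}_\mathcal{P}$ is reducible.

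For $(\Rightarrow)$, assume that $\mathcal{R}_2\cap\mathcal{L}_\mathcal{P}$ is reducible. The discussion before the theorem already extracts a linear irreducible component $C$ of codimension one in $\mathcal{L}_\mathcal{P}$ with $C\subseteq \mathcal{R}_2$, so $C$ is a linear subspace of $3\times 3$ matrices of rank at most two. Atkinson's theorem then offers exactly two possibilities: $C$ is a compression space, or $C$ is contained in the space of $3\times 3$ skew-symmetric matrices.

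The main obstacle is ruling out the skew-symmetric alternative, and this is the only place where the hypothesis $|\mathcal{P}|=4$ really bites. Since $\mathcal{L}_\mathcal{P}$ is cut out in $\P^8$ by at most four hyperplanes, $\dim \mathcal{L}_\mathcal{P}\ge 4$ in $\P^8$, whence $\dim C\ge 3$. But the $3\times 3$ skew-symmetric matrices form a linear subspace of $\C^{3\times 3}$ of affine dimension $3$, so after projectivizing they live in a $\P^2$. No projective subspace of dimension at least $3$ can be contained in a $\P^2$, so the skew-symmetric case is impossible. Hence $C$ is a compression space, which is the desired conclusion. The heart of the argument is thus this dimension count; everything else is provided by the preparatory lemmas and the paragraph preceding the theorem.
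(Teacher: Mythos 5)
Your proposal is correct and follows essentially the same route as the paper: both directions lean on the preparatory discussion to reduce to a linear component of codimension one, invoke Atkinson's classification (\Cref{thm:EisenbudHarris}), and eliminate the skew-symmetric alternative by the same dimension count ($\dim C \ge 3$ projectively versus the $\P^2$ of skew-symmetric matrices). The extra remark about factoring out a linear form from $\det(X)|_{\mathcal{L}_\mathcal{P}}$ just makes explicit what the paper leaves implicit in the converse.
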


\begin{proof}
By the above discussion, if $\RR_2 \cap \mathcal{L}_\mathcal{P}$ is reducible, then it has a 
linear component $C$ that is a subspace of $\RR_2$ and has codimension one in $\mathcal{L}_\mathcal{P}$. Since $\dim(\mathcal{L}_\mathcal{P}) \geq 4$ when $|\mathcal{P}| = 4$, and the space of $3 \times 3$ skew-symmetric matrices 
has dimension two, by \Cref{thm:EisenbudHarris},
$C$ must be a compression space of rank at most two. 
Conversely, if $C$ is a compression space in $\mathcal{L}_\mathcal{P}$ of rank at most two and codimension one, then  
$C \subseteq \RR_2 \cap \mathcal{L}_\mathcal{P}$, and since $C$ has codimension 
one in $\mathcal{L}_\mathcal{P}$, $C$ must be an irreducible component of $\RR_2 \cap \mathcal{L}_\mathcal{P}$.
\end{proof}

\begin{example}  \label{ex:compression space ex1}
Consider the three point pairs given by the columns of the following matrices:

$$U = \begin{pmatrix}1&2&3\\
                    0&0&0\\
                    1&1&1\\
      \end{pmatrix}, \,\,\,\,\, V = \begin{pmatrix}0&0&0\\
                    1&2&3\\
                    1&1&1\\
      \end{pmatrix}.$$
      Using Macaulay2 one can compute that $\mathcal{R}_2 \cap \mathcal{L}_{\mathcal{P}} = Q \cup C$ where $Q$ has degree two and $C$ has degree one. The ideal of $C$ is $\langle x_{21}, x_{23}, x_{31},x_{33}\rangle $, so every $X \in C$ has the form
        \[ 
            \begin{pmatrix} x_{11} & x_{12} & x_{13} \\ 0 & x_{22} & 0 \\ 
            0 & x_{32} & 0 \end{pmatrix}.
      \]
      Since $\dim(C) = 4$ and $\dim(\mathcal{L}_\mathcal{P}) = 5$, 
      $C$ has codimension one in $\mathcal{L}_\mathcal{P}$.
      The linear space $C$ is a $(1,1)$-compression space of rank at most two; 
      each $X \in C$ compresses the line spanned by the columns of $U$ 
      into the orthogonal complement of the line spanned by the columns of $V$. 

Suppose we now enlarge the set of point pairs to 
$$U = \begin{pmatrix}1&
      2&
      3 & 5 
      \\
      0&
      0&
      0 & 1 
    \\
      1&
      1&
      1 & 1\\
      \end{pmatrix}, \,\,\,\,\, V = \begin{pmatrix}0&
      0&
      0 & 1\\
      1&
      2&
      3 & 7 \\
      1&
      1&
      1 & 1\\
      \end{pmatrix}$$
      Now $\RR_2 \cap \mathcal{L}_\mathcal{P}$ has a linear component $C'$ consisting of matrices of the form 
      $$ \begin{pmatrix} x_{11} & x_{12} & x_{13} \\ 0 & x_{22} & 0 \\ 
            0 & x_{32} & 0 \end{pmatrix} \,\,\,\textup{ such that } \,\,\, 5x_{11} + x_{12} + x_{13} + 7x_{22} + x_{32}=0.$$
       This subspace $C'$ is clearly a subspace of the compression space $C$ and has codimension one in the new $\mathcal{L}_\mathcal{P}$.
       The linear condition arises from the epipolar equation $\langle X, \bf{v}_4 \bf{u}_4^\top \rangle = 0$ imposed by the new point pair. 
\end{example}

The main result of this subsection is the following 
which will allow us to use \Cref{thm:walk away from walls} to establish chirality results for four point pairs in the next subsection. A set of points in $\R^2$ is said to be in {\em general (linear) position} if no three of them are collinear.

\begin{theorem}
\label{thm:triples general implies irreducible}
Suppose $|\mathcal{P}| =  4$. If for all triples of distinct indices 
$i,j,k$, $\bf{u}_i,\bf{u}_j,\bf{u}_k$ 
or $\bf{v}_i,\bf{v}_j,\bf{v}_k$ are in general position, then $\RR_2 \cap \mathcal{L}_\mathcal{P}$ is irreducible. 
\end{theorem}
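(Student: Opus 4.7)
I would argue by contradiction using \Cref{thm:codim 1 compression}. Assuming $\RR_2 \cap \mathcal{L}_\mathcal{P}$ is reducible, $\mathcal{L}_\mathcal{P}$ contains a rank-at-most-two compression space $C$ of codimension one, and by \Cref{thm:EisenbudHarris} $C$ must be a $(2,0)$-, $(0,2)$-, or $(1,1)$-compression space. The skew-symmetric alternative is excluded since the $3\times 3$ skew-symmetric matrices form a projective subspace of dimension two, while $\dim C \geq 3$. For each remaining type, my plan is to extract a triple $(i,j,k)$ for which both $\bf{u}_i,\bf{u}_j,\bf{u}_k$ and $\bf{v}_i,\bf{v}_j,\bf{v}_k$ are collinear, contradicting the hypothesis.

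For the $(2,0)$ case, $C\subseteq \widetilde{W}_{\bf{u}^*}$ for some $\bf{u}^*\in \P^2$. I would parameterize this $6$-dimensional space and observe that each epipolar equation restricts to a rank-one tensor $\bf{v}_i\otimes \bf{u}_i' \in \C^3\otimes \C^2$, where $\bf{u}_i'$ is the reduction of $\bf{u}_i$ modulo $\bf{u}^*$. The codimension condition forces these four tensors to span a subspace of dimension at most two, i.e., to lie on a projective line in $\P^5$. Since the $\bf{u}_i$ are distinct, at most one $\bf{u}_i'$ vanishes; the remaining (at least three) tensors are nonzero and, by distinctness of the $\bf{v}_i$, pairwise distinct in $\P^5$. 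Since the Segre variety $\P^2\times \P^1\subseteq \P^5$ has degree two, a projective line through three or more distinct Segre points must lie in the Segre. Its two line classes then yield either all $\bf{v}_i$ coinciding (impossible) or all $\bf{u}_i$ on a common line through $\bf{u}^*$ and all $\bf{v}_i$ on a common line in $\P^2$, which produces the forbidden triple. The $(0,2)$ case reduces to this by transposition.

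The $(1,1)$ case is the main obstacle, since neither a right nor a left kernel is fixed along $C$. Fixing a $2$-plane $V$ and a line $\Span(\bf{w})$ in $\C^3$, I would parameterize the compression space $\{X : XV\subseteq \Span(\bf{w})\}$ by five parameters $(\alpha, \beta, \bf{x}_3)\in \C^5$. Each epipolar equation then produces the coefficient vector
\[
\bf{c}_i = \bigl((\bf{v}_i^\top \bf{w})\,\bf{u}_i',\; (u_i)_3\,\bf{v}_i\bigr) \in \C^2\oplus \C^3,
\]
and the codimension-one condition forces these four vectors to have rank at most one. A case analysis based on whether $(u_i)_3$ vanishes shows that no two nonzero $\bf{c}_i$'s can be proportional without forcing $\bf{u}_i \sim \bf{u}_j$ or $\bf{v}_i \sim \bf{v}_j$, contradicting distinctness. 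Hence at most one $\bf{c}_i$ is nonzero, and the remaining vanishings $\bf{c}_j = 0$ force both $\bf{u}_j\in V$ and $\bf{v}_j\in \bf{w}^\perp$. This yields three indices whose $\bf{u}$-points lie on the line $\P(V)$ and whose $\bf{v}$-points lie on the line $\P(\bf{w}^\perp)$, again contradicting the hypothesis and completing the proof.
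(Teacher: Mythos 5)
Your proposal is correct in substance, and it reaches the contradiction by a route that is dual to the paper's. Both arguments reduce, via \Cref{thm:codim 1 compression} and \Cref{thm:EisenbudHarris}, to ruling out a codimension-one compression space of rank at most two in $\mathcal{L}_\mathcal{P}$; the difference is in how that is done. The paper works with the orthogonal complement $C^\perp$ of the maximal compression space: \Cref{lem:fit rank one matrices} shows the general position hypothesis allows at most one (types $(2,0)$, $(0,2)$) or two (type $(1,1)$) data matrices $\mathbf{v}_i\mathbf{u}_i^\top$ to lie in $C^\perp$, while \Cref{lem:orthogonal complement overlap} shows the codimension-one condition forces at least two, respectively three, to lie there, via a case analysis on $\dim \mathcal{L}_\mathcal{P}$ and on which coefficients of a putative linear combination vanish. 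You instead restrict the four epipolar functionals to the maximal compression space and bound the rank of the restricted coefficient vectors; your observation that no two nonzero restricted vectors can be proportional (forced by distinctness of the $\mathbf{u}_i$ and of the $\mathbf{v}_i$) replaces the paper's linear-combination analysis, and the vanishing of three restricted vectors produces the forbidden doubly-collinear triple exactly where the paper concludes that three data matrices lie in $C^\perp$. The information content is the same, but your primal organization is more uniform, and the Segre-line argument in the $(2,0)$ case is a clean substitute for the paper's case split. For completeness you should also record the one-line fact that licenses your dimension counts: the reducible component $C$ must equal the intersection of the maximal compression space with $\mathcal{L}_\mathcal{P}$, since an irreducible component of $\RR_2\cap\mathcal{L}_\mathcal{P}$ cannot be properly contained in a linear (hence irreducible) subvariety of $\RR_2\cap\mathcal{L}_\mathcal{P}$.

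One slip to fix: the Segre variety $\P^2\times\P^1\subset\P^5$ has degree $\binom{3}{2}=3$, not two. The conclusion you need --- a line through three distinct points of the Segre is contained in it --- is still true, but for a different reason: the Segre is cut out by the $2\times 2$ minors of the corresponding $3\times 2$ matrix, which are quadrics, and a quadric vanishing at three distinct points of a line vanishes identically on that line. State the argument that way and the $(2,0)$ case is complete; the rest of your outline, including the proportionality analysis of the vectors $\mathbf{c}_i$ in the $(1,1)$ case, checks out.
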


\Cref{thm:triples general implies irreducible} will follow from 
\Cref{thm:codim 1 compression} if under the assumptions of the theorem, there are no compression spaces of rank at most two and codimension one in $\mathcal{L}_\mathcal{P}$. We will now prove that this is indeed the case by understanding the orthogonal complements of compression spaces of rank at most two.

 Two vector spaces $C_1$ and $C_2$ in $\C^{3 \times 3}$ are {\em equivalent} if they have the same linear transformations up to a change of bases in the domain $\C^3$ and codomain $\C^3$, i.e., if there exists $A,B \in \textup{GL}_3$ such that $C_2 = \{ BXA \,:\, X \in C_1\}$. The following classification is straightforward.

\begin{lemma} \label{lem:compression spaces std form}
Let $C$ be a $(k_1,k_2)$-compression space of rank at most two in $\C^{3 \times 3}$.
\begin{enumerate}
    \item If $(k_1,k_2)=(2,0)$ then $C$ is equivalent to 
    a vector space of matrices with a zero last column.
    \item If $(k_1,k_2)=(0,2)$  then $C$ is equivalent to 
    a vector space of matrices with a zero last row.
    \item If $(k_1,k_2)=(1,1)$ then $C$ is equivalent to a vector space of matrices of the form 
    $$ \begin{pmatrix} 0 & 0 & \ast \\ 0 & 0 & \ast \\ \ast & \ast & \ast\end{pmatrix}.$$ 
\end{enumerate}
\end{lemma}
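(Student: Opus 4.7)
The plan is to exploit the freedom in choosing bases of the domain and codomain $\C^3$, since the equivalence relation in the lemma is exactly $C \mapsto \{BXA : X \in C\}$ for $A,B \in \GL_3$. A change of basis $A$ in the domain acts on columns of matrices in $C$ (or rather changes the subspace $V$ to a standard one), while a change of basis $B$ in the codomain acts on rows (changing $W$). With this in mind, the lemma is essentially a bookkeeping exercise once we enumerate the possible $(k_1,k_2)$.

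First I would observe that since $k_1+k_2=2$ with $k_1,k_2\geq 0$, the only possibilities are $(2,0)$, $(0,2)$ and $(1,1)$, which correspond to the three cases in the lemma. In each case I would move $V$ and $W$ into standard position by an appropriate choice of bases:

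\smallskip
In case $(2,0)$, $V\subseteq \C^3$ is a line and $W=\{0\}$. Choose a basis of the domain whose third vector spans $V$; the corresponding matrix $A\in \GL_3$ then satisfies that every $X\in C$ has $XA$ with vanishing third column, since $XA e_3\in W=\{0\}$.

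\smallskip
In case $(0,2)$, $V=\C^3$ and $W\subseteq \C^3$ is a plane. Choose a basis of the codomain whose first two vectors span $W$; the corresponding matrix $B\in \GL_3$ then satisfies that $B^{-1}X$ has vanishing third row for every $X\in C$, since the image of $X$ lies in $W=\Span(B e_1,Be_2)$.

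\smallskip
In case $(1,1)$, $V$ is a plane and $W$ is a line. Choose a basis of the domain whose first two vectors span $V$, giving a matrix $A\in \GL_3$, and a basis of the codomain whose third vector spans $W$, giving $B\in \GL_3$. Then for every $X\in C$, the first two columns of $B^{-1}XA$ lie in $W=\Span(Be_3)$ expressed in the new codomain basis, and hence have zeros in their first two entries, while the last column is unconstrained. This yields exactly the block form stated in (3).

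\smallskip
The main obstacle, such as it is, is purely notational: being careful that left multiplication by $B^{-1}$ corresponds to a change of codomain basis and right multiplication by $A$ to a change of domain basis, and that the rank-at-most-two condition is automatic since every matrix in a $(k_1,k_2)$-compression space has rank at most $k_1+k_2=2$ (the restriction to $V$ has rank at most $k_2=\dim W$, and the image of any complement of $V$ contributes at most $k_1=\operatorname{codim} V$ further to the rank). No further genuine content is required; the three matrix forms in the lemma are then immediate from the choices of bases above.
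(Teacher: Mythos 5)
Your proof is correct: the paper itself omits the argument, merely asserting that ``the following classification is straightforward,'' and your basis-change argument (putting $V$ and $W$ into standard position and tracking that right multiplication by $A$ acts on the domain and left multiplication by $B^{-1}$ on the codomain) is exactly the intended routine verification. Your side remark that the rank bound $\rank(X)\le \dim W + \operatorname{codim} V = k_1+k_2$ is automatic is also correct, though it is already part of the paper's definition of a compression space.
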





Call a $(k_1,k_2)$-compression space {\em maximal} if it is not a proper subspace of another 
$(k_1,k_2)$-compression space. In particular, 
a $(k_1,k_2)$-compression space is maximal if the equations cutting it out are precisely the conditions that impose the zeros guaranteed in their standard forms.
Non-maximal compression spaces satisfy further conditions on their potentially non-zero coordinates.
The following lemmas help us understand maximal compression spaces.

\begin{lemma} \label{lem:compression spaces and spans}
Suppose $\bf{u}, \bf{v} \in \C^3 \setminus \{0\}$. Then  $\bf{v}\bf{u}^\top \in S := \Span\{\bf{b}_1\bf{a}_1^\top, \bf{b}_1\bf{a}_2^\top,\dots, \bf{b}_n\bf{a}_{m-1}^\top,\bf{b}_{n} \bf{a}_{m}^\top\}$ if and only if $\bf{u} \in \Span \{\va_i\}_{i=1}^{m}$ and $\bf{v} \in \Span\{ \vb_i \}_{i=1}^{n}$.
\end{lemma}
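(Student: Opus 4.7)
The plan is to prove both implications directly by analyzing the column and row spaces of rank-one matrices; no deep tools are needed.

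For the ``if'' direction, I would simply expand. Writing $\vu = \sum_{i=1}^m \alpha_i \va_i$ and $\vv = \sum_{j=1}^n \beta_j \vb_j$, distributivity of the outer product gives $\vv\vu^\top = \sum_{i,j} \alpha_i \beta_j \, \vb_j \va_i^\top$, which is manifestly a linear combination of the spanning matrices of $S$, hence lies in $S$.

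For the ``only if'' direction, the key observation is that every matrix in $S$ has column space contained in $\Span\{\vb_j\}_{j=1}^n$ and row space contained in $\Span\{\va_i\}_{i=1}^m$. This holds on each generator $\vb_j\va_i^\top$, whose column space is $\Span(\vb_j)$ and whose row space is $\Span(\va_i)$, and it is preserved under taking linear combinations (the column space of a sum of matrices is contained in the sum of the column spaces, and similarly for row spaces). Now if $\vv\vu^\top \in S$ with $\vu,\vv \neq \vnull$, the matrix $\vv\vu^\top$ has rank one, with column space exactly $\Span(\vv)$ and row space exactly $\Span(\vu)$. The containments above therefore force $\vv \in \Span\{\vb_j\}_{j=1}^n$ and $\vu \in \Span\{\va_i\}_{i=1}^m$, as required.

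There is no real obstacle here; the only point that needs a sentence of care is the elementary fact that the column (resp.\ row) space of a linear combination is contained in the sum of the individual column (resp.\ row) spaces, and the use of $\vu,\vv \neq \vnull$ to ensure that $\vv\vu^\top$ actually has full rank one so that its column and row spaces are the single lines $\Span(\vv)$ and $\Span(\vu)$.
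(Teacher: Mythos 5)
Your proof is correct. The ``if'' direction is identical to the paper's: expand $\vv\vu^\top$ bilinearly over the chosen spanning set, using that \emph{all} products $\vb_j\va_i^\top$ are present in $S$. For the ``only if'' direction you take a slightly different route: the paper multiplies the membership relation on the right by $\vu$ and on the left by $\vv^\top$, concluding from $\vv\vu^\top\vu = (\vu^\top\vu)\vv$ and $\vv^\top\vv\vu^\top = (\vv^\top\vv)\vu^\top$ that $\vv$ and $\vu$ lie in the respective spans, whereas you argue via containment of column and row spaces. Your version is in fact a bit more robust: over $\C^3$ a nonzero vector can be isotropic ($\vu^\top\vu=0$), in which case the paper's right-multiplication by $\vu$ yields only $0\in\Span\{\vb_j\}$ and gives no information; the column-space argument needs no such nondegeneracy, only $\vu,\vv\neq\vnull$ so that the column and row spaces of $\vv\vu^\top$ are exactly $\Span(\vv)$ and $\Span(\vu)$. (In the paper's applications the vectors are real with last coordinate $1$, so the issue never arises there, but as the lemma is stated over $\C$ your argument closes that small gap.)
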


\begin{proof}
Suppose $\bf{v}\bf{u}^\top \in S$. Then multiplying $\bf{v}\bf{u}^\top$ on the right by $\bf{u}$, we get that $\bf{v} \in \Span\{\bf{b}_1, \dots, \bf{b}_n\}$. 
Similarly, 
multiplying $\bf{v}\bf{u}^\top$ on the left by $\bf{v}^\top$,  we get that $\bf{u} \in \Span\{\bf{a}_1, \dots, \bf{a}_m\}$. Conversely, 
if $\bf{u} = \la_1 \bf{a}_1 + \dots, \la_m \bf{a}_m$ and $\bf{v} = \mu_1 \bf{v}_1 +  \dots, \mu_n \bf{v}_n$, then it is immediate that $\bf{v}\bf{u}^\top \in S$. Note that for this last statement to hold, it was important that all possible outerproducts of the form $\vb_i \va_j^\top$ are present in $S$. 
\end{proof}


\begin{lemma} \label{lem:orthogonal complements of compression spaces}
Let $C \subseteq \C^{3 \times 3}$ be a maximal $(k_1,k_2)$-compression space of rank at most two and let $C^\perp \subseteq \C^{3 \times 3}$ be its orthogonal complement. 
\begin{enumerate}
    \item If $(k_1,k_2)=(2,0)$, then $C^{\perp} = \textup{Span}\{\vb_1 \va^\top, 
    \vb_2 \va^\top, \vb_3\va^\top\}$ where $\va \in \C^3 \setminus \{0\}$ and $\vb_1, \vb_2, \vb_3$ are linearly independent vectors in $\C^3$. 
     Similarly if $(k_1,k_2) = (0,2)$, then $C^{\perp} = \textup{Span}\{\vb \va_1^\top, 
    \vb \va_2^\top, \vb \va_3^\top\}$ where $\vb \in \C^3 \setminus \{0\}$ and $\va_1, \va_2, \va_3$ are linearly independent vectors in $\C^3$. In both cases, as a projective space,  
    $\dim(C^\perp) = 2$ and all matrices in $C^\perp$ have 
    rank one. 
    \item If $(k_1,k_2)=(1,1)$, then $C^\perp = \textup{Span}\{\vb_1\va_1^\top$, $\vb_1\va_2^\top$, $\vb_2 \va_1^\top$, $\vb_2\va_2^\top \}$ where 
    $\vb_1, \vb_2$ are linearly independent and $\va_1, \va_2$ are linearly independent vectors in $\C^3$. As a 
    projective space, $\dim(C^\perp) = 3$ and the rank one matrices in it cut out a variety isomorphic to $\P^1 \times \P^1$.
\end{enumerate}
\end{lemma}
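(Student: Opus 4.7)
The plan is to reduce the computation of $C^\perp$ to the three standard forms given in Lemma~\ref{lem:compression spaces std form}, perform the computation in each standard form, and transport the answer back. The key enabling observation is that orthogonal complements with respect to the Frobenius inner product are equivariant under the left--right multiplication equivalence: from the identity $\langle BXA,\, Y\rangle = \langle X,\, B^\top Y A^\top\rangle$, one sees that whenever $C_2 = \{BXA : X \in C_1\}$ with $A,B \in \GL_3$, one has $C_2^\perp = B^{-\top}\, C_1^\perp\, A^{-\top}$. Since every property claimed in the lemma---the outer-product form of the spanning sets, linear independence of the listed vectors, the rank of matrices in $C^\perp$, and the identification of the rank-one locus with a Segre variety $\P^1 \times \P^1$---is preserved under left--right multiplication by invertible matrices, it suffices to verify the lemma in each standard form.

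For $(k_1,k_2) = (2,0)$ the standard form of $C$ is the $6$-dimensional space of matrices with zero last column. Its Frobenius orthogonal complement is visibly the $3$-dimensional space of matrices supported on the last column, i.e.\ $\Span\{\ve_1\ve_3^\top,\,\ve_2\ve_3^\top,\,\ve_3\ve_3^\top\}$; every element factors as $\vw\,\ve_3^\top$ and hence has rank at most one, so the projective dimension is $2$ and nonzero elements have rank exactly one. Applying $B^{-\top}(\cdot)A^{-\top}$ sends this to $\Span\{\vb_1\va^\top,\,\vb_2\va^\top,\,\vb_3\va^\top\}$ with $\va := A^{-1}\ve_3 \neq 0$ and $\vb_i := B^{-\top}\ve_i$ a basis of $\C^3$. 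The case $(0,2)$ is identical after transposing.

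For $(k_1,k_2) = (1,1)$ the standard form of $C$ has the upper-left $2\times 2$ block zero, so its orthogonal complement is the $4$-dimensional space $\Span\{\ve_i\ve_j^\top : i,j \in \{1,2\}\}$ of matrices supported on the upper-left $2\times 2$ block. After transport, the spanning set becomes $\{\vb_i\va_j^\top : i,j \in \{1,2\}\}$ with $\vb_1,\vb_2$ and $\va_1,\va_2$ each linearly independent. To identify the rank-one locus, I would write a general element as $[\vb_1\,\,\vb_2]\,\Lambda\,[\va_1\,\,\va_2]^\top$ for some $\Lambda \in \C^{2\times 2}$; since $[\vb_1\,\,\vb_2]$ has full column rank and $[\va_1\,\,\va_2]^\top$ has full row rank, the rank of the element equals the rank of $\Lambda$. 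Hence the rank-one locus corresponds to rank-one $\Lambda$, parametrized by outer products of pairs of lines in $\C^2$, which is precisely the Segre embedding of $\P^1\times\P^1$ inside $\P^3 = \P(C^\perp)$.

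The proof is essentially bookkeeping once the equivariance of orthogonal complements is recorded. I do not foresee any substantive obstacle; the only slightly non-routine point is identifying the rank-one locus in the $(1,1)$ case with a Segre $\P^1\times\P^1$, but this is immediate from the factorization $[\vb_1\,\,\vb_2]\,\Lambda\,[\va_1\,\,\va_2]^\top$.
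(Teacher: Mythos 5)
Your proof is correct and follows essentially the same route as the paper: reduce to the standard forms of \Cref{lem:compression spaces std form} and transport the orthogonal complement back through the adjoint action of $(A,B)$ on the Frobenius pairing, the inverse-versus-direct convention for the change of basis being immaterial. The only (minor) divergence is in the $(1,1)$ case, where the paper invokes \Cref{lem:compression spaces and spans} to rule out additional rank-one matrices, while your factorization of a general element as $[\vb_1\,\,\vb_2]\,\Lambda\,[\va_1\,\,\va_2]^\top$ with $\rank = \rank \Lambda$ identifies the rank-one locus with the Segre $\P^1\times\P^1$ directly; both arguments are complete.
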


\begin{proof}
\begin{enumerate}
    \item
    Let $\overline{C}$ be the maximal 
    $(2,0)$-compression space in standard coordinates, consisting of all matrices with a zero last column. Then $(\overline{C})^\perp$ is 
    is spanned by $\ve_1 \ve_3^\top$, $\ve_2 \ve_3^\top, \ve_3 \ve_3^\top$. By \Cref{lem:compression spaces std form}, a general 
    $(2,0)$-compression space $C = \{X \in \C^{3 \times 3} \,: \,  BXA \in \overline{C}\}$ for a pair of fixed invertible matrices $A,B$. Therefore, $0 = \langle BXA,\ve_1 \ve_3^\top \rangle = \langle BXA,\ve_2 \ve_3^\top \rangle  = \langle BXA,\ve_3 \ve_3^\top \rangle$ for all $X \in C$. Taking the first dot product:
    \[
    0 = \langle BXA,\ve_1 \ve_3^\top \rangle = \textup{Tr}(A^\top X^\top B^\top \ve_1 \ve_3^\top) = 
    \textup{Tr}(X^\top (B^\top \ve_1) (A \ve_3)^\top) = \langle X,(B^\top \ve_1) (A \ve_3)^\top \rangle. 
    \]
    Setting $A\ve_i = \va_i$ and  $B^\top\ve_j = \vb_j$, we get that $C^{\perp} = \textup{Span}\{\vb_1 \va_3^\top, \vb_2 \va_3^\top, \vb_3 \va_3^\top\}$.  Since $B$ is invertible, $\vb_1, \vb_2, \vb_3$ are independent, and no row of $A$ is all zero. Hence,  $C^\perp$ is spanned by $3$ independent rank one matrices and as a projective space, $\dim(C^\perp) = 2$. 
    Further, any linear combination of these three rank one matrices looks like $ \lambda \vb_1 \va_3^\top + \mu \vb_2 \va_3^\top + \nu \vb_3 \va_3^\top = ( \lambda \vb_1 + \mu \vb_2 + \nu \vb_3) \va_3^\top$ which is a rank one matrix. Therefore, all matrices in $C^\perp$ have rank one. The proof for $(0,2)$-compression spaces is analogous.
        
    \item By the same reasoning as in the previous case, if $C$ is a $(1,1)$-compression space, then 
    $C^\perp = \textup{Span} \{ \vb_1\va_1^\top, $ $\vb_1\va_2^\top, \vb_2 \va_1^\top, \vb_2\va_2^\top \}$. 
    Since $C$ is maximal, $\dim(C^\perp) = 3$ as a projective space, and hence $C^\perp \cong \P^3$. Since the linear combinations 
    $$
    \lambda \gamma  \vb_1 \va_1^\top + \lambda \nu  \vb_1 \va_2^\top + \mu \gamma \vb_2 \va_1^\top + \mu \nu  \vb_2 \va_2^\top = ( \lambda \vb_1 +  \mu \vb_2) (\gamma \va_1 + \nu  \va_2)^\top
    $$ 
    are rank one matrices, there is a 
    $\P^1 \times \P^1$ worth of rank $1$ matrices in $C^\perp$ which forms a subvariety of dimension two. By  \Cref{lem:compression spaces and spans}, 
    there are no more rank one matrices in $C^\perp$.
\end{enumerate}
\end{proof}

From now on we will denote a maximal compression space of rank at most two  
by $C$ and a subspace of it by $C'$. 
If a non-maximal compression space $C'$ appears as an irreducible component of $\RR_2 \cap \mathcal{L}_\mathcal{P}$, 
then $(C')^\perp$ will be spanned by the generators of $C^\perp$ along with some 
{\em data matrices} $\vv_i \vu_i^\perp$ as in \Cref{ex:compression space ex1}.
In particular a basis of $(C')^\perp$ is the union of the basis for $C^\perp$ 
from \Cref{lem:orthogonal complements of compression spaces} and some data matrices.

\begin{lemma}
\label{lem:fit rank one matrices}
Suppose for all distinct indices $i,j,k$,   $\bf{u}_i, \bf{u}_j, \bf{u}_k$ or 
$\bf{v}_i, \bf{v}_j, \bf{v}_k$ are in general position. Let $C'$ be a $(k_1,k_2)$-compression space contained in a maximal $(k_1,k_2)$-compression space $C$. 
\begin{enumerate}
\item If $C'$ is of type $(2,0)$ or $(0,2)$, then $C^\perp$ contains at most one data matrix. 
\item If $C'$ is of type $(1,1)$, then $C^\perp$ contains at most two data matrices. 
\end{enumerate}

\end{lemma}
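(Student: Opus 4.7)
The plan is to translate ``$\vv_i\vu_i^\top\in C^\perp$'' into an incidence condition on the individual vectors $\vu_i$ and $\vv_i$ using the explicit bases of $C^\perp$ provided in \Cref{lem:orthogonal complements of compression spaces}. Once that translation is in place, each part reduces to a one-line pigeonhole argument invoking either the distinctness of the point pairs or the general-position hypothesis on triples. Note that it is enough to control data matrices in $C^\perp$ (rather than in $(C')^\perp$), since we are free to pass up to the maximal compression space $C \supseteq C'$ whose orthogonal complement is the smaller and more rigid object.

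For part (1), I will treat the $(2,0)$ case; the $(0,2)$ case is the mirror image with the roles of $\vu$ and $\vv$ swapped. \Cref{lem:orthogonal complements of compression spaces} gives $C^\perp=\Span\{\vb_1\va^\top,\vb_2\va^\top,\vb_3\va^\top\}$ with $\vb_1,\vb_2,\vb_3$ a basis of $\C^3$, so by \Cref{lem:compression spaces and spans} a rank-one matrix $\vv\vu^\top$ belongs to $C^\perp$ iff $\vu\sim\va$ in $\P^2$ (the condition on $\vv$ being automatic). Suppose two data matrices $\vv_i\vu_i^\top$ and $\vv_j\vu_j^\top$ with $i\neq j$ both lie in $C^\perp$. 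Then $\vu_i\sim\va\sim\vu_j$, contradicting the standing hypothesis that the $\vu_\ell$'s are pairwise distinct in $\P_\R^2$.

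For part (2) I will run the same strategy: \Cref{lem:orthogonal complements of compression spaces} gives $C^\perp=\Span\{\vb_s\va_t^\top:s,t\in\{1,2\}\}$, and \Cref{lem:compression spaces and spans} shows $\vv\vu^\top\in C^\perp$ precisely when $\vu\in\Span\{\va_1,\va_2\}$ and $\vv\in\Span\{\vb_1,\vb_2\}$. These two spans are two-dimensional subspaces of $\C^3$, i.e.\ projective lines in $\P^2$. If three data matrices sat in $C^\perp$ simultaneously, then $\vu_i,\vu_j,\vu_k$ would all lie on one projective line and $\vv_i,\vv_j,\vv_k$ on another, making both triples collinear and contradicting the standing general-position hypothesis.

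I do not foresee a serious obstacle: the structural content has been packaged into \Cref{lem:compression spaces and spans} and \Cref{lem:orthogonal complements of compression spaces}, and the remaining argument is pigeonhole. The only mild subtlety is that $C^\perp$ is described over $\C$ while the data vectors $\vu_i,\vv_i$ are real, but the derived conclusions (``$\vu_i\sim\vu_j$'' in~(1) and collinearity of a triple in~(2)) are genuine projective statements on $\P^2$, unaffected by whether coefficients are real or complex.
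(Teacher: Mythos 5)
Your proof is correct and follows essentially the same route as the paper: both arguments use \Cref{lem:orthogonal complements of compression spaces} to write down $C^\perp$ explicitly and \Cref{lem:compression spaces and spans} to reduce membership of a data matrix in $C^\perp$ to an incidence condition on $\vu_i$ and $\vv_i$, then conclude by distinctness of the $\vu_i$ in part (1) and the general-position hypothesis in part (2). The remark about real versus complex coefficients is a nice touch but does not change the substance.
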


\begin{proof}
\begin{enumerate}
    \item If $C'$ is a (2,0) compression space, then by \Cref{lem:orthogonal complements of compression spaces}, $C^\perp = \Span \{\bf{b}_1\bf{a}^\top, \bf{b}_2\bf{a}^\top, \bf{b}_3\bf{a}^\top\}$ for some $\bf{a} \neq 0$ and $\bf{b}_1, \bf{b}_2, \bf{b}_3$ which are linearly independent. By \Cref{lem:compression spaces and spans}, $\bf{v}_i\bf{u}_i^\top \in C^\perp$ if and only if $\bf{u}_i \in \Span\{\bf{a}\}$. Since point pairs are distinct and have last coordinate equal to one, at most one $\bf{u}_i \in \Span\{\bf{a}\}$, hence at most one $\bf{v}_i\bf{u}_i^\top \in C^\perp$. The case $(0,2)$ is argued similarly.
    \item If $C'$ is a (1,1) compression space, then by \Cref{lem:orthogonal complements of compression spaces} $C^\perp =\Span\{\vb_1\va_1^\top, \vb_1\va_2^\top, \vb_2 \va_1^\top, \vb_2\va_2^\top\}$ for some linearly independent $\bf{a}_1, \bf{a}_2$ and $\bf{b}_1, \bf{b}_2$.
    By \Cref{lem:compression spaces and spans}, $\bf{v}_i\bf{u}_i^\top \in C^\perp$ if and only if $\bf{u}_i \in \Span\{\bf{a}_1, \bf{a}_2\}$ and $\bf{v}_i \in \Span\{\bf{b}_1, \bf{b}_2\}$. If there are three data matrices in $C^\perp$, say corresponding to indices $i,j,k$, then $\bf{u}_i, \bf{u}_j, \bf{u}_k \in \Span\{\bf{a}_1, \bf{a}_2\}$ and $\bf{v}_i, \bf{v}_j, \bf{v}_k \in \Span\{\bf{b}_1, \bf{b}_2\}$. However this contradicts our general position assumption, so we conclude that $C^\perp$ may contain at most two data matrices. 
     
\end{enumerate}

\end{proof}



We need one final lemma to show that the general position assumption in \Cref{thm:triples general implies irreducible} prevents the existence of 
codimension one compression spaces of rank at most two in $\mathcal{L}_\mathcal{P}$ when $|\mathcal{P}| = 4$. 
The following simple fact about our input point pairs will be useful.

\begin{lemma} \label{lem:simple fact}
If for $i \neq j$, $\vw \vu_i^\top = \vv_j \vu_j^\top$ then $\vw = \vv_j$ and $\vu_i = \vu_j$. Similarly, if for $i \neq j$, $\vv_i \vu_i^\top = \vv_j \vw^\top$ then 
$\vw = \vu_i$ and $\vv_i = \vv_j$.
\end{lemma}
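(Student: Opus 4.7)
The argument is essentially a direct computation exploiting the standing convention that every $\vu_i$ and $\vv_j$ is the homogenization of a point in $\R^2$, so its third coordinate equals $1$. Under this normalization, for any rank-one matrix of the form $\vx\vy_i^\top$ with $y_{i,3}=1$, the third column is exactly $\vx$; and for any matrix $\vx_i\vy^\top$ with $x_{i,3}=1$, the third row is exactly $\vy^\top$. This single observation drives both halves of the lemma.

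For the first assertion, I would take the equation $\vw\vu_i^\top = \vv_j\vu_j^\top$ and read off its third column on each side. Since $\vu_i$ and $\vu_j$ both have third coordinate $1$, the third column of the left-hand side is $\vw$ and that of the right-hand side is $\vv_j$, giving $\vw=\vv_j$. Substituting back yields $\vv_j\vu_i^\top = \vv_j\vu_j^\top$, and since $\vv_j\neq\vnull$ (its last coordinate is $1$), picking any index $k$ with $v_{j,k}\neq 0$ and comparing the $k$-th rows produces $\vu_i^\top=\vu_j^\top$, i.e.\ $\vu_i=\vu_j$.

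The second assertion is symmetric: from $\vv_i\vu_i^\top = \vv_j\vw^\top$, reading off the third row of each side (using that $\vv_i$ and $\vv_j$ have third coordinate $1$) yields $\vu_i^\top = \vw^\top$, hence $\vw=\vu_i$. Substituting gives $\vv_i\vu_i^\top = \vv_j\vu_i^\top$, and since $\vu_i\neq\vnull$, comparing a column where $\vu_i$ has nonzero entry forces $\vv_i=\vv_j$.

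There is really no obstacle here; the only thing to keep in mind is not to confuse projective equality with coordinate-wise equality. The entire content of the lemma is that, because affine representatives with last coordinate $1$ are fixed throughout the paper, the outer product $\vv\vu^\top$ uniquely determines both factors, so the cancellation rules for rank-one matrices hold on the nose rather than merely up to scalars. Combined with the standing assumption that the $\vu_i$ (resp.\ $\vv_i$) are pairwise distinct, this lemma will be applied in the sequel to rule out coincidences between data matrices $\vv_i\vu_i^\top$ of the form that would otherwise arise when analyzing when multiple data matrices can sit in the orthogonal complement of a compression space.
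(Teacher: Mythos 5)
Your proof is correct and follows essentially the same route as the paper: read off the last column (resp.\ last row) of the rank-one matrices, using that the fixed affine representatives have last coordinate $1$, to identify the left (resp.\ right) factors, and then cancel the nonzero common factor to match the remaining ones. The paper's proof is just a terser version of exactly this argument.
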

\begin{proof}
Since the $\vu_i$'s have last coordinate 1, if $\vw \vu_i^\top = \vv_j \vu_j^\top$, 
the last columns of the two matrices are $\vw$ and $\vv_j$. Therefore, $\vw = \vv_j$ and hence $\vu_i = \vu_j$. The second statement is proved similarly.
\end{proof}

\begin{lemma}
\label{lem:orthogonal complement overlap}
 Suppose $|\mathcal{P}| = 4$  and 
 for all distinct indices $i,j,k$,   $\bf{u}_i, \bf{u}_j, \bf{u}_k$ or 
 $\bf{v}_i, \bf{v}_j, \bf{v}_k$ are in general position. Let $C'$ be a  $(k_1,k_2)$-compression space of rank at most two and codimension one in $\mathcal{L}_\mathcal{P}$, and let $C$ be a maximal $(k_1,k_2)$-compression space containing $C'$. 
\begin{enumerate}
    \item If $C'$ is of type $(2,0)$ or $(0,2)$, then $C^\perp$ contains at least two data matrices. 
    \item If $C'$ is of type $(1,1)$, then $C^\perp$ contains at least three data matrices.
\end{enumerate}

\end{lemma}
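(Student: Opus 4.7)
The plan is to convert the codimension-one hypothesis into a dimension statement about $C^\perp \cap (\mathcal{L}_\mathcal{P})^\perp$ and then analyze this intersection through rank conditions on an auxiliary matrix whose columns live on the Segre variety. Under the ambient assumption $\mathcal{L}_\mathcal{P} \not\subseteq \RR_2$, one has $\mathcal{L}_\mathcal{P} \not\subseteq C$, and a dimension count using $\dim C' = \dim \mathcal{L}_\mathcal{P} - 1$ gives $C' = C \cap \mathcal{L}_\mathcal{P}$. Hence $(C')^\perp = C^\perp + (\mathcal{L}_\mathcal{P})^\perp$, and inclusion--exclusion yields $\dim\!\left(C^\perp \cap (\mathcal{L}_\mathcal{P})^\perp\right) = \dim C^\perp - 1$, which equals $2$ in case (1) and $3$ in case (2). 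Parameterizing $(\mathcal{L}_\mathcal{P})^\perp$ by $\lambda \in \C^4 \mapsto \sum_i \lambda_i \vv_i \vu_i^\top$, set $S := \{\lambda \in \C^4 : \sum_i \lambda_i \vv_i \vu_i^\top \in C^\perp\}$; the goal is to show that at least two (resp.\ three) of the standard basis vectors $\ve_i$ lie in $S$, since $\ve_i \in S$ is equivalent to $\vv_i \vu_i^\top \in C^\perp$.

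For case (2), type $(1,1)$ with $C^\perp = \Span\{\vb_r \va_s^\top : r, s \in \{1,2\}\}$, the condition $\sum_i \lambda_i \vv_i \vu_i^\top \in C^\perp$ is equivalent to the two vector equations $\sum_i \lambda_i (\vu_i^\top \va^*) \vv_i = \vnull$ and $\sum_i \lambda_i ((\vb^*)^\top \vv_i) \vu_i = \vnull$, where $\va^*$ and $\vb^*$ span $(\Span\{\va_1,\va_2\})^\perp$ and $(\Span\{\vb_1,\vb_2\})^\perp$, respectively. Stacking these produces a $6 \times 4$ matrix $\tau$ whose kernel equals $S$ and has dimension at least three, so $\tau$ has rank at most one and all four columns are scalar multiples of a single common vector. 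A non-vanishing column would force $\vv_i$ to be proportional to a fixed vector and $\vu_i$ to be proportional to another, which by pairwise distinctness of the $\vu_j$ and of the $\vv_j$ can occur for at most one index. Hence at least three columns vanish, and each vanishing column yields $\vu_i \in \Span\{\va_1,\va_2\}$ and $\vv_i \in \Span\{\vb_1,\vb_2\}$, i.e.\ $\vv_i \vu_i^\top \in C^\perp$.

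For case (1), say type $(2,0)$ with $C = \{X : X\va = \vnull\}$ (the $(0,2)$ case is symmetric), the condition becomes $\sum_i \lambda_i (\vu_i^\top \vy) \vv_i = \vnull$ for every $\vy \in \va^\perp$. Choosing a basis $\vy_1, \vy_2$ of $\va^\perp$ and stacking produces a $6 \times 4$ matrix whose $i$-th column is the decomposable tensor $\restrict{\vu_i}{\va^\perp} \otimes \vv_i \in \C^2 \otimes \C^3$, and whose kernel $S$ has dimension two, so these four columns span a two-dimensional subspace. Projectively they lie on a line in $\P^5$, and the non-zero ones are points of the Segre variety $\P^1 \times \P^2$. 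By pairwise distinctness of the $\vv_j$, no two non-zero columns are projectively equal. If fewer than two columns vanished, the line would contain at least three distinct points of the Segre and hence be entirely contained in it. The two families of lines on $\P^1 \times \P^2$ are $\P^1 \times \{\vq\}$ and $\{\vp\} \times \ell$: the first forces the non-zero $\vv_i$'s to be projectively equal, contradicting distinctness, while the second places all four $\vu_i$'s in a common two-dimensional subspace and at least three $\vv_i$'s in another, violating the general-position hypothesis on the corresponding triple. Hence at least two columns vanish, and each such index yields $\vu_i \in \Span(\va)$, producing at least two data matrices $\vv_i \vu_i^\top$ in $C^\perp$.

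The main obstacle is case (1): unlike case (2), the rank bound on the stacked matrix does not force all columns to be multiples of a single vector, and one must invoke the classification of linear subspaces on the Segre variety $\P^1 \times \P^2$ to complete the argument. In both cases the pairwise distinctness of the input points together with the triple-wise general-position hypothesis are what convert the dimension bound on $S$ into the stated lower bound on the number of data matrices contained in $C^\perp$.
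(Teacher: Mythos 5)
Your proof is correct, but it takes a genuinely different route from the paper's. The paper argues by explicit case analysis: it splits on $\dim(\mathcal{L}_\mathcal{P}) \in \{4,5\}$ and on whether $C'$ is maximal, writes out the candidate linear combinations expressing a data matrix in terms of a basis of $(C')^\perp$, and shows via factored forms and \Cref{lem:simple fact} that any such combination with a nonzero data-matrix coefficient has rank at least two. Your proof replaces all of this with a single uniform dimension count, $\dim(C^\perp \cap \mathcal{L}_\mathcal{P}^\perp) = \dim C^\perp - 1$, followed by a rank analysis of the stacked matrix $\tau$: in the $(1,1)$ case the rank-one bound plus pairwise distinctness of the $\vu_i$ and $\vv_i$ immediately forces three vanishing columns (notably, you never need the triple general-position hypothesis there), while in the $(2,0)/(0,2)$ case you invoke the classification of lines on the Segre variety $\P^1\times\P^2$ and use distinctness against the ruling $\P^1\times\{\bf{q}\}$ and general position against the ruling $\{\bf{p}\}\times\ell$. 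What your approach buys is the elimination of the case split and of the ``without loss of generality'' choices of which data matrices extend a basis of $C^\perp$; what the paper's approach buys is that it stays entirely elementary (rank of products of small matrices) and avoids appealing to the geometry of linear subspaces of Segre varieties.

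One caveat: your step $C' = C \cap \mathcal{L}_\mathcal{P}$, and hence $(C')^\perp = C^\perp + \mathcal{L}_\mathcal{P}^\perp$, requires $\mathcal{L}_\mathcal{P} \not\subseteq C$, which you justify by the ``ambient assumption'' $\mathcal{L}_\mathcal{P} \not\subseteq \RR_2$. That assumption is not literally part of the lemma's hypotheses, though it does hold in the only place the lemma is used (the reducibility argument for \Cref{thm:triples general implies irreducible}, whose opening paragraph establishes it), and the paper's own proof makes the same implicit assumption when it asserts that a basis of $(C')^\perp$ is a basis of $C^\perp$ together with some data matrices. If you want the lemma as a free-standing statement, you should either add this hypothesis or dispose of the degenerate case $\mathcal{L}_\mathcal{P} \subseteq C$ directly (it contradicts distinctness or general position by an easy argument). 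This is a shared loose end rather than a defect of your argument.
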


\begin{proof} 
By the general position assumption, at least three of the data matrices are linearly independent, and hence $\dim(\mathcal{L}_\mathcal{P}) = 4 $ or $5$.
\begin{enumerate}
    \item Suppose $C'$ is a maximal $(2,0)$-compression space of rank at most two and codimension one in $\mathcal{L}_\mathcal{P}$. Then by \Cref{lem:orthogonal complements of compression spaces}, 
    $(C')^\perp = \textup{Span}\{ \vb_1 \va^\top, \vb_2 \va^\top, \vb_3 \va^\top \}$ and $\dim(C') = 5$. This implies that $\dim(\mathcal{L}_\mathcal{P}) = 6$ which is a contradiction.

    So suppose $C'$ is a non-maximal $(2,0)$-compression space of rank at most two and codimension one in $\mathcal{L}_\mathcal{P}$, and $C$ is a maximal $(2,0)$-compression space containing $C'$.
    Define 
    $$M_C = \begin{pmatrix} \vb_1 \va^\top \\  \vb_2 \va^\top \\ \vb_3 \va^\top \end{pmatrix}, \,\,\, M_D = \begin{pmatrix} \vv_1 \vu_1^\top \\  \vv_2 \vu_2^\top \\ \vv_3 \vu_3^\top \\ \vv_4 \vu_4^\top \end{pmatrix}, \,\,\, \textup{ and } \,\,\, M = \begin{pmatrix} M_C \\ M_D \end{pmatrix}.$$
     \begin{enumerate}
        \item $\dim(\mathcal{L}_\mathcal{P}) = 5$: Then 
        $\dim(C') = 4$ and $(C')^\perp$ has a basis of cardinality four. 
      We may assume without loss of generality that 
      $(C')^\perp 
        = \textup{Span}\{ \vb_1 \va^\top, \vb_2 \va^\top, \vb_3 \va^\top, \vv_1 \vu_1^\top\}$. 
        By \Cref{lem:compression spaces and spans}, $\vu_1$ is not a multiple of $\va$. 
        Since $\dim(\mathcal{L}_\mathcal{P}) = 5$, $\rank(M_D)=3$, and since $C'$ has codimension one in $\mathcal{L}_\mathcal{P}$, 
        $\rank(M) = 4$ and so $\vv_2 \vu_2^\top, \vv_3 \vu_3^\top, \vv_4 \vu_4^\top \in (C')^\perp.$  
        We need to argue that for $i > 1$, $\vv_i \vu_i^\top \in 
        C^\perp = \textup{Span}\{ \vb_1 \va^\top, \vb_2 \va^\top, \vb_3 \va^\top \}$.
        Suppose for $i > 1$, $\vv_i \vu_i^\top$ equals the linear combination
            \begin{align} \label{eq:lincomb}
            \alpha_1 \vb_1 \va^\top + \alpha_2 \vb_2 \va^\top  + \alpha_3 \vb_3 \va^\top + \beta \vv_1 \vu_1^\top = \begin{pmatrix} \sum \alpha_j \vb_j & \beta \vv_1\end{pmatrix} \begin{pmatrix} \va^\top \\ \vu_1^\top \end{pmatrix}. 
        \end{align}
        If $\beta =0$ then we are done, so suppose $\beta \neq 0$. If all the $\alpha_j$'s are zero, then 
    $\vv_i \vu_i^\top$ is a multiple of $\vv_1 \vu_1^\top$ which is impossible by 
    \Cref{lem:simple fact}. 
    If at least one $\alpha_j \neq 0$, then \eqref{eq:lincomb} has rank two 
    unless $\vv_1 \sim \sum \alpha_j \vb_j$. In this case, \eqref{eq:lincomb} looks like 
    $\vv_1 (\gamma \va^\top + \delta \vu_1^\top)$. 
    Again, by \Cref{lem:simple fact}, 
    this cannot equal $\vv_i \vu_i^\top$ for any $i=2,3,4$. Therefore, it must be 
    that for $i > 1$, $\vv_i \vu_i^\top \in 
        C^\perp$.

    \item $\dim(\mathcal{L}_\mathcal{P}) = 4$: In this case,
    $\dim(C') = 3$, $\rank(M_D) = 4$ and 
        $\rank(M) = 5$. Without loss of generality, 
        $\vv_3 \vu_3^\top, \vv_4 \vu_4^\top \in (C')^\perp 
        = \textup{Span}\{ \vb_1 \va^\top, \vb_2 \va^\top, \vb_3 \va^\top, \vv_1 \vu_1^\top, \vv_2 \vu_2^\top\}$. We need to argue that 
        $\vv_3 \vu_3^\top, \vv_4 \vu_4^\top \in \textup{Span}\{\vb_1 \va^\top, \vb_2 \va^\top, \vb_3 \va^\top \}$. Suppose for $i\in \{3,4\}$, 
        $\vv_i \vu_i^\top$ is a linear combination of the form 
        \begin{align} \label{eq:lincomb2}
            \alpha_1 \vb_1 \va^\top + \alpha_2 \vb_2 \va^\top  + \alpha_3 \vb_3 \va^\top + \beta_1 \vv_1 \vu_1^\top + \beta_2 \vv_2 \vu_2^\top = \begin{pmatrix} 
            \sum \alpha_j \vb_j & \beta_1\vv_1 & \beta_2\vv_2 \end{pmatrix} \begin{pmatrix}
            \va^\top \\ \vu_1^\top \\ \vu_2^\top \end{pmatrix}.
        \end{align}
        If all the $\alpha_i$'s are 0 then $\vv_i  \vu_i^\top$ lies in the span of the other two data matrices which contradicts that $\rank(M_D) = 4$. If all $\beta$'s are $0$ then we are done. So assume that at least one 
        $\alpha_j$ and one $\beta_j$ are non-zero. 
        If only one $\beta_j$ is non-zero, the claim follows from the same argument as in the previous case. So suppose $\beta_1, \beta_2 \neq 0$. By \Cref{lem:compression spaces and spans}, neither $\vu_1$ nor $\vu_2$ are multiples of $\va$ which means that the second matrix in the product has rank at least two. The first matrix also has rank at least two since 
        $\vv_1$ and $\vv_2$ are linearly independent and hence \eqref{eq:lincomb2} has rank at least two and cannot equal $\vv_i \vu_i^\top$ for $i=3,4$.

    \end{enumerate}
     The $(0,2)$ case is argued similarly.
    
  \item Suppose $\RR_2 \cap \mathcal{L}_\mathcal{P}$ has an irreducible component $C'$ which is a $(1,1)$-compression space.
  \begin{enumerate}
  \item $\dim(\mathcal{L}_\mathcal{P}) = 5$: In this case, 
  $\dim(C') = 4$ and $C'=C$ is a maximal compression space. Therefore, 
  $(C')^\perp = C^\perp = 
  \textup{Span}\{\vb_1 \va_1^\top, \vb_1 \va_2^\top, \vb_2 \va_1^\top, \vb_2 \va_2^\top \}$ and $\rank(M) =4$. 
  The matrix $M_C$ now has rows $\vb_1 \va_1^\top, \vb_1 \va_2^\top, \vb_2 \va_1^\top, \vb_2 \va_2^\top $. Since it already has rank four, it must be that all the data matrices are in $C^\perp$. They are in fact in the 
  $\P^1 \times \P^1$ subvariety of $C^\perp$ containing all the rank one matrices.
    
  \item $\dim(\mathcal{L}_\mathcal{P}) = 4$: 
  In this case $\dim(C') = 3$ which means that without loss of generality, $(C')^\perp = \textup{Span}\{ 
  \vb_1 \va_1^\top, \vb_1 \va_2^\top, \vb_2 \va_1^\top, \vb_2 \va_2^\top, \vv_1 \vu_1^\top\}$.
  Also, $\rank(M) =5$ and hence 
  $\vv_2 \vu_2^\top, \vv_3 \vu_3^\top, \vv_4 \vu_4^\top 
  \in (C')^\perp$. We need to argue that 
  these matrices in fact lie in 
  $C^\perp = \textup{Span}\{\vb_1 \va_1^\top, \vb_1 \va_2^\top, \vb_2 \va_1^\top, \vb_2 \va_2^\top\}$. 
  Suppose for $i \in \{2,3,4\}$, $\vv_i \vu_i^\top$ is a linear combination of the form
  \begin{align} \label{eq:lincomb3}
      \alpha_{11}\vb_1 \va_1^\top + \alpha_{12} \vb_1 \va_2^\top 
      + \alpha_{21}\vb_2 \va_1^\top + \alpha_{22}\vb_2 \va_2^\top + \beta \vv_1 \vu_1^\top = \begin{pmatrix} \vb_1 &  \vb_2 &  \vv_1 \end{pmatrix} 
      \begin{pmatrix} \alpha_{11} \va_1^\top + \alpha_{12} \va_2^\top \\ 
      \alpha_{21} \va_1^\top + \alpha_{22} \va_2^\top \\ \beta \vu_1^\top \end{pmatrix}
  \end{align}
  with $\beta \neq 0$. As before, some $\alpha_{ij}$ must also be non-zero. 
  By \Cref{lem:compression spaces and spans}, $\vv_1 \not \in \textup{Span}\{\vb_1, \vb_2\}$ or $\vu_1 \not \in \textup{Span}\{ \va_1, \va_2\}$. Suppose $\vv_1 \not \in \textup{Span}\{\vb_1, \vb_2\}$. Then the first matrix in \eqref{eq:lincomb3} 
  has rank two and hence \eqref{eq:lincomb3} has rank two unless 
  all rows of the second matrix are multiples of $\vu_1$. Then as in the previous cases, \eqref{eq:lincomb3} cannot equal 
  $\vv_i \vu_i^\top$ for any $i=2,3,4$. If $\vu_1 \not \in \textup{Span}\{ \va_1, \va_2\}$ then the second matrix 
  in \eqref{eq:lincomb3} has rank two and hence the linear combination has rank two unless all columns of the first matrix are multiples of $\vv_1$. Again by \Cref{lem:simple fact}, 
  \eqref{eq:lincomb3} cannot be $\vv_i \vu_i^\top$ for any $i=2,3,4$.
  \end{enumerate}

\end{enumerate}

\end{proof}



\begin{proof}[Proof of \Cref{thm:triples general implies irreducible}]
If $\RR_2 \cap \mathcal{L}_\mathcal{P}$ is reducible, then by \Cref{thm:codim 1 compression} there is a codimension one compression space of rank at most two in $L_\mathcal{P}$. However, if for all distinct indices $i,j,k$,   $\bf{u}_i, \bf{u}_j, \bf{u}_k$ or $\bf{v}_i, \bf{v}_j, \bf{v}_k$ are in general position, 
then \Cref{lem:fit rank one matrices} and \Cref{lem:orthogonal complement overlap} 
together prohibit the existence of a codimension one compression space of rank at most two in $\mathcal{L}_\mathcal{P}$, proving the theorem.
\end{proof}

\subsection{Chiral Reconstructions for Four Point Pairs}
\label{subsec:chiral reconstructions k=4}

We now have the tools to determine when a set of four distinct point pairs has a chiral reconstruction. Up to permuting the two images, every set of four point pairs belong to one of the combinatorial types shown in
\Cref{fig:combtypes}.

\begin{figure}[ht] 
\begin{subfigure}{.25\textwidth}
  \centering
  \includegraphics[width=\linewidth]{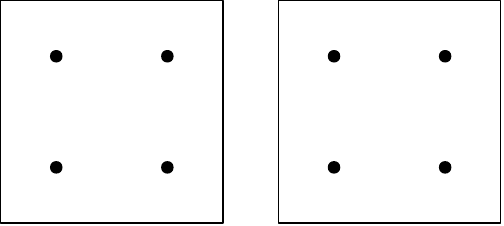} 
  \caption{Both $\mathcal{U}$ and $\mathcal{V}$ are in general position.}
  \label{fig:gengen}
\end{subfigure}
\hspace{0.05\textwidth}
\begin{subfigure}{.25\textwidth}
  \centering
  \includegraphics[width=\linewidth]{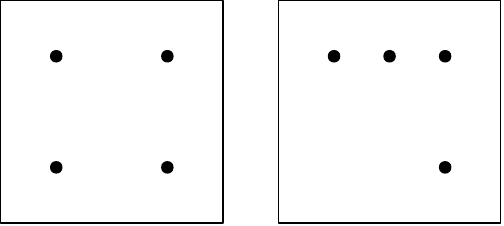}
\caption{$\mathcal{U}$ general position, three $\bf{v}_i$ collinear.}
\label{fig:gen3col}
\end{subfigure}
\hspace{0.05\textwidth}
\begin{subfigure}{.25\textwidth}
  \centering
  \includegraphics[width=\linewidth]{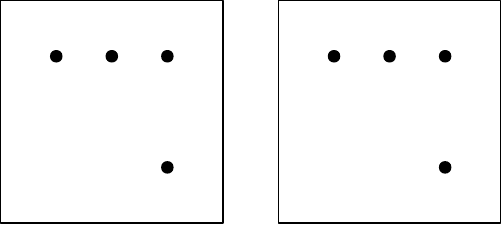}
\caption{Three $\bf{u}_i$ collinear, three $\bf{v}_i$ collinear.}
\label{fig:3col3col}
\end{subfigure}
\hspace{0.05\textwidth}
\begin{subfigure}{.25\textwidth}
  \centering
  \includegraphics[width=\linewidth]{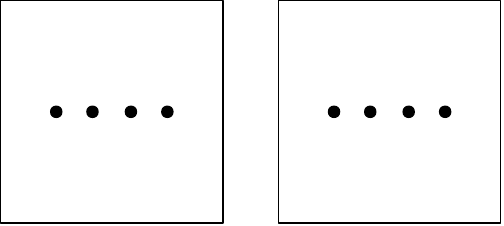}
 \caption{Four $\bf{u}_i$ collinear, four $\bf{v}_i$ collinear.}
 \label{fig:4col4col}
\end{subfigure}
\hspace{0.05\textwidth}
\begin{subfigure}{.25\textwidth}
  \centering
  \includegraphics[width=\linewidth]{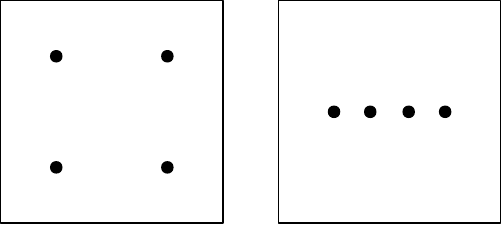}
 \caption{$\mathcal{U}$ general position, four $\bf{v}_i$ collinear.}
 \label{fig:gen4col}
\end{subfigure}
\hspace{0.05\textwidth}
\begin{subfigure}{.25\textwidth}
  \centering
  \includegraphics[width=\linewidth]{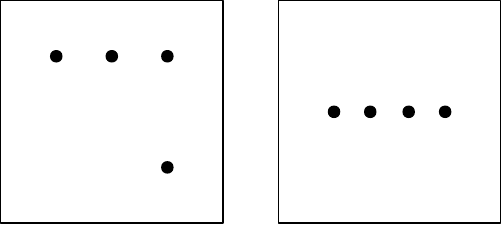}
 \caption{Three $\bf{u}_i$ collinear, four $\bf{v}_i$ collinear.}
 \label{fig:3col4col}
\end{subfigure}
\caption{\label{fig:combtypes}}
\end{figure}

We can further group these types by the \emph{rank} of the points in $\mathcal{U} 
= \{\vu_1, \ldots, \vu_4\}$ and $\mathcal{V} =\{\vv_1, \ldots, \vv_4\}$.

\begin{definition}
We say that $\rank(\mathcal{U})= r$ if and only if the $3 \times 4$ matrix $U$ with columns $\bf{u}_1, \dots, \bf{u}_4$ has rank $r$. Equivalently, $\rank(\mathcal{V})= r$ if and only if the $3 \times 4$ matrix $V$ with columns $\bf{v}_1, \dots, \bf{v}_4$ has rank $r$.
\end{definition}

Rank captures the geometry of $\mathcal{U}$ and $\mathcal{V}$. Indeed, if $rank(\mathcal{U}) = 1$, then all $\bf{u}_i$ are coincident as points in $\P^2$. If $rank(\mathcal{U}) = 2$, then all $\bf{u}_i$ are collinear in $\P^2$, and if $rank(\mathcal{U}) = 3$, then some 
three $\bf{u}_i$ are non-collinear in $\P^2$. The assumption that points in $\mathcal{U}$ and 
$\mathcal{V}$ are distinct implies that $\rank(\mathcal{U}), \rank(\mathcal{V}) \ge 2$. The spirit of \Cref{thm:4 chiral exists}, the main result of this section,
is that 
when $\mathcal{U}$ and $\mathcal{V}$ have similar geometry, then 
$\mathcal{P}$ has a chiral reconstruction. In particular, a chiral reconstruction exists for the combinatorial types in \Cref{fig:gengen}, \Cref{fig:gen3col}, and  \Cref{fig:3col3col} where $\rank(\mathcal{U}) = \rank(\mathcal{V}) = 3$ and for the type in \Cref{fig:4col4col} where $\rank(\mathcal{U}) = \rank(\mathcal{V}) = 2$. We will present examples of configurations of the types in \Cref{fig:gen4col} and \Cref{fig:3col4col} for which a chiral reconstruction can fail to exist.

\begin{theorem}
\label{thm:4 chiral exists}
If $|\mathcal{P}|=4$ and $\rank(\mathcal{U}) = \rank(\mathcal{V})$, then $\mathcal{P}$ has a chiral reconstruction. 
\end{theorem}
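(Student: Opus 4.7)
The argument splits on $r := \rank(\mathcal{U}) = \rank(\mathcal{V}) \in \{2, 3\}$, since distinctness of the $\vu_i$ and $\vv_i$ forces $r \geq 2$.

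\emph{Case $r=2$.} All four $\vu_i$ lie on an affine line $L_\mathcal{U}\subset \R^3$ and all four $\vv_i$ lie on an affine line $L^\mathcal{V}\subset \R^3$. I will adapt part~(2) of the $|\mathcal{P}|=3$ proof directly. Let $\vu_l, \vu_r$ be the two extreme points of $\{\vu_1,\dots,\vu_4\}$ on $L_\mathcal{U}$, so every $\vu_i$ is a convex combination of $\vu_l$ and $\vu_r$, and similarly take $\vv_l, \vv_r$ on $L^\mathcal{V}$. Pick $G\in \GL_3$ with $G\vu_l=\vv_l$, $G\vu_r=\vv_r$, and any prescribed image for a chosen fourth point off $L_\mathcal{U}$; then $G$ carries $L_\mathcal{U}$ to $L^\mathcal{V}$ and each $G\vu_i$ into $\cone(\vv_l,\vv_r)$. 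Choose $\vt\in L^\mathcal{V}\setminus\cone(\vv_l,\vv_r)$ with $\vt\neq \vv_i$ for every $i$, and set $X:=[\vt]_\times G$. One checks that $X$ is a $\mathcal{P}$-regular fundamental matrix (because $\vt$, $\vv_i$, and $G\vu_i$ are collinear in $\R^3$ and $\vt\neq \vv_i, G\vu_i$), and \Cref{lem:gigj is quad product} together with \Cref{lem:t cross quad product} yield $g_i(X)>0$ for all $i$. \Cref{thm:chiral F existence gigj} completes the case.

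\emph{Case $r=3$.} Each view has at most one collinear triple, and since any two $3$-subsets of $\{1,2,3,4\}$ share at least two indices, there exists an index $i^\ast$ such that both $\mathcal{U}\setminus\{\vu_{i^\ast}\}$ and $\mathcal{V}\setminus\{\vv_{i^\ast}\}$ are in general position. The plan is to invoke \Cref{thm:walk away from walls} with the wall $W_{\vu_{i^\ast}}$: find a smooth $X\in \RR_2\cap \mathcal{L}_\mathcal{P}$ on $W_{\vu_{i^\ast}}$ (but on no other wall) so that $I(X)=\{1,2,3,4\}\setminus\{i^\ast\}$ and $\mathcal{P}_{I(X)}$ admits a chiral reconstruction associated to $X$. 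Such an $X$ is parameterized by its left-kernel generator $\ve_2$; picking $\ve_2$ distinct from every $\vv_j$ and generic makes $X$ smooth by \Cref{lem:smooth points}. Using \Cref{lem:match chirotope} on the non-collinear triple $\{\vv_j\colon j\neq i^\ast\}$, I will choose $\ve_2$ so that the determinants $\det[\vv_i\;\vv_j\;\ve_2]$ realize any prescribed sign pattern, and in particular one that makes each $D_{ij}(\vu_{i^\ast},\ve_2)$ positive for $i,j\in I(X)$. \Cref{thm:chiral inactive indices} then gives a chiral reconstruction of $\mathcal{P}_{I(X)}$ associated to $X$, and \Cref{thm:walk away from walls} extends it to $\mathcal{P}$.

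\emph{Main obstacle.} The $r=3$ strategy relies on two things: irreducibility of $\RR_2\cap \mathcal{L}_\mathcal{P}$ (needed for \Cref{thm:walk away from walls}), and non-vanishing of each $\det[\vu_i\;\vu_j\;\vu_{i^\ast}]$ for $i,j\neq i^\ast$ (needed so the $D_{ij}(\vu_{i^\ast},\ve_2)$ are actually non-zero). The hypothesis of \Cref{thm:triples general implies irreducible} supplies the first except when $\mathcal{U}$ and $\mathcal{V}$ share a common collinear triple; the second can fail when $i^\ast$ lies in the collinear triple of $\mathcal{U}$, forcing $\vu_{i^\ast}$ onto one of the lines $\vu_j\vu_k$. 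I would handle these delicate subcases by refining the choice of $i^\ast$ from the intersection of the two bad triples (using that they overlap in at least two indices when distinct), perturbing the data within rank $3$ to obtain an irreducible variety when the bad triples coincide, and invoking the closed nature of the chirality inequalities in \Cref{def:chiral reconstruction} to pass to the limit.
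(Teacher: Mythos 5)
Your case $r=2$ is the paper's argument verbatim and is fine. The trouble is in $r=3$. First, a self-defeating detail: you choose $i^\ast$ so that \emph{both} $\mathcal{U}\setminus\{\vu_{i^\ast}\}$ and $\mathcal{V}\setminus\{\vv_{i^\ast}\}$ are in general position, which forces $i^\ast$ to lie in the collinear triple $T_U$ of $\mathcal{U}$ whenever one exists; but then $\vu_{i^\ast}$ lies on the line spanned by the other two points of $T_U$, so one of the determinants $\det[\vu_j\;\vu_k\;\vu_{i^\ast}]$ vanishes and \Cref{thm:chiral inactive indices} cannot be applied at any point of $W_{\vu_{i^\ast}}$. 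What you actually need is $i^\ast \in T_V\setminus T_U$ (so the $\vu$-determinants are non-zero and \Cref{lem:match chirotope} applies to the non-collinear $\vv$-triple), or symmetrically a point on a $\vv$-wall indexed by $T_U\setminus T_V$ --- this is what the paper does, fixing the epipole $\bf{e}_2=\vv_4$ and leaving $\bf{e}_1$ free in an open cone. Such an index exists whenever $T_U\neq T_V$ (or at most one collinear triple exists), and in that regime \Cref{thm:triples general implies irreducible} does give irreducibility, so your strategy can be repaired.

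The genuine gap is the remaining subcase $T_U=T_V$, i.e.\ a common collinear triple in both views. Here no choice of $i^\ast$ works, and the epipolar variety really is reducible: the three data matrices indexed by the common triple lie in the orthogonal complement of a $(1,1)$-compression space, which then appears as a codimension-one linear component of $\RR_2\cap\mathcal{L}_\mathcal{P}$, so \Cref{thm:walk away from walls} is unavailable. Your proposed fallback --- perturb the data to make the variety irreducible and pass to the limit using the closedness of the inequalities in \Cref{def:chiral reconstruction} --- does not go through: existence of a chiral reconstruction is \emph{not} a closed condition on $\mathcal{P}$, because the limit of the perturbed fundamental matrices can land in an off-diagonal corner $W_{\vu_i}\cap W^{\vv_j}$ where $\mathcal{P}$-regularity fails (equivalently, the limiting cameras or world points degenerate), and on a reducible variety one cannot perturb away from such a corner. \Cref{ex:3 in line 4 in line} exhibits exactly this failure mode: a fundamental matrix satisfying every closed chiral epipolar inequality for which no chiral reconstruction exists. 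The paper closes this subcase with a direct construction in the spirit of your $r=2$ argument: map the extremes of the collinear $\vu$-triple to the extremes of the collinear $\vv$-triple, send $\vu_4\mapsto\vv_4$, take $\vt$ on the common line outside the cone, and verify $g_i(X)>0$ for all $i$ via \Cref{lem:gigj is quad product} and \Cref{lem:t cross quad product}, concluding by \Cref{thm:chiral F existence gigj}. Some such explicit argument is needed; the limit argument cannot replace it.
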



\begin{proof}
We break the proof into three parts:

\begin{enumerate}
    \item Suppose $\rank(\mathcal{U}) = \rank(\mathcal{V}) = 2$. Then the points in both $\mathcal{U}$ and $\mathcal{V}$ are collinear as in \Cref{fig:4col4col}, even as points in $\R^3$. Assume without loss of generality that the 
    $\vv_i$ points appear in the order $\bf{v}_1, \bf{v}_2, \bf{v}_3, \bf{v}_4$ along the affine line $L$ they span in $\R^3$.
    The ordering of $\bf{v}_i$ induces an ordering of the $\bf{u}_i$.
    Let $l, r \in \{1,2,3,4\}$ be such that for all $i \in \{1,2,3,4\}$, $\bf{u}_i \in \cone(\bf{u}_l, \bf{u}_r)$, and  define $G \in GL_3$ such that $G\bf{u}_l = \bf{v}_1$ and $G\bf{u}_r = \bf{v}_4$. This forces $\bf{v}_i, G\bf{u}_i \in \cone(\bf{v}_1 , \bf{v}_4)$ for all $i$. For a $\bf{t} \in L \setminus \cone(\bf{v}_1 , \bf{v}_4)$,  $(\bf{t} \times \bf{v}_i)^\top(\bf{t} \times G\bf{u}_i) > 0$ by \Cref{lem:t cross quad product}. Then, \Cref{lem:gigj is quad product} implies that $g_i([\bf{t}]_\times G) > 0$ for all $i$, proving the result.   

\item Suppose $\rank(\mathcal{U}) = \rank(\mathcal{V}) = 3$ and 
for all $i,j,k$,   $\bf{u}_i, \bf{u}_j , \bf{u}_k$ or $\bf{v}_i, \bf{v}_j, \bf{v}_k$ are in general position.
Then by \Cref{thm:triples general implies irreducible}, $\RR_2 \cap L_\mathcal{P}$ is irreducible, and hence by \Cref{thm:walk away from walls}, it suffices to show that there is a smooth rank two matrix $X \in \mathcal{R}_2 \cap \mathcal{L}_\mathcal{P}$ for which $\mathcal{P}_{I(X)}$ has a chiral reconstruction.

We first claim that there is an ordering of point pairs such that $\bf{u}_1, \bf{u}_2, \bf{u}_3$ are in general position and $\bf{v}_4$ does not lie on any of the lines $L^{12}$, $L^{13}$, and $L^{23}$ where $L^{ij}$ is the line spanned by 
$\vv_i$ and $\vv_j$. Indeed, since $\rank(\mathcal{V}) = 3$, there is an ordering of the point pairs so that $\bf{v}_4$ does not lie on any of the lines $L^{12}$, $L^{13}$, or $L^{23}$.
If $\bf{u}_1, \bf{u}_2, \bf{u}_3$ are in general position for this ordering, we are done. Otherwise, $\bf{u}_1, \bf{u}_2, \bf{u}_3$ collinear and so  
$\bf{v}_1, \bf{v}_2, \bf{v}_3$ must be in general position by assumption, and hence $\bf{v}_1, \bf{v}_2, \bf{v}_3,\bf{v}_4$ are in general position. Also, since  $\rank(\mathcal{U})  = 3$, $\bf{u}_1, \bf{u}_2, \bf{u}_4$ must be non-collinear. Since $\bf{v}_3$ cannot be on the lines $L^{12}, L^{14}, L^{24}$, swapping point pairs with indices 3 and 4 gives the desired result.

Reorder point pairs so that $\bf{u}_1, \bf{u}_2, \bf{u}_3$ are in general position and $\bf{v}_4$ does not lie on any of $L^{12}$, $L^{13}$, and $L^{23}$.
Then choosing $\bf{e}_2 = \bf{v}_4$ forces
$\det[ \bf{v}_1 \, \bf{v}_2 \, \bf{e}_2], \,\,
\det[ \bf{v}_1 \, \bf{v}_3 \, \bf{e}_2], \,\,
\det[ \bf{v}_2 \, \bf{v}_3 \, \bf{e}_2]$
to all be non-zero. Removing $\bf{u}_4$ from $\mathcal{U}$ leaves three points in general position, so we may pick $\bf{e}_1$ by \Cref{lem:match chirotope} so that 
$\det[ \bf{u}_i \, \bf{u}_j \, \bf{e}_1]\det[ \bf{v}_i \, \bf{v}_j \, \bf{e}_2] >0$ 
for all $1 \leq i < j \leq 3$. Since the positivity of these determinant products is an open condition, there is a open polyhedral cone
from which such an $\bf{e}_1$ can be chosen. Consider the system 
\[
X \bf{e}_1 = 0, \,\, \bf{v}_4^\top X = 0, \,\,\bf{v}_i^\top X \bf{u}_i = 0, \,\,i=1,2,3,4
\]
which consists of at most eight linearly independent equations. Therefore, this system has a solution $X \in \P^8$. From \Cref{lem:smooth points}, we know the tangent space to $\mathcal{R}_2$ at $X$ has normal $\bf{v}_4 \bf{e}_1^\top$ and $X$ is a smooth point of the epipolar variety if $\bf{v}_4 \bf{e}_1^\top$ does not lie in $\Span\{\bf{v}_i\bf{u}_i^\top, \,\,i=1,2,3,4\} $. 
If $\bf{v}_4 \bf{e}_1^\top \in \Span\{\bf{v}_i\bf{u}_i^\top, \,\,i=1,2,3,4\}$, there is a minimal subset of the five matrices, including $\bf{v}_4 \bf{e}_1^\top$ that are 
dependent. Placing the vectorizations of these matrices in the rows of a matrix, we get that all its maximal minors are zero. Each of these minors, set to zero, is a linear equation in $\bf{e}_1$ and together they cut out a subspace in $\P^2$. Since we were able to choose $\bf{e}_1$ from an open polyhedral cone, there is a choice of $\bf{e}_1$ that avoids this subspace and the rank one variety. Hence, $X$ can be 
chosen (by choosing $\bf{e}_1$ appropriately) to be a smooth rank two fundamental matrix and we are done by 
\Cref{thm:walk away from walls}.

\item Suppose $\rank(\mathcal{U}) = \rank(\mathcal{V})=3$ and there 
exist $i,j,k$ for which both $\bf{u}_i, \bf{u}_j , \bf{u}_k$ and $\bf{v}_i, \bf{v}_j, \bf{v}_k$ are collinear. 
Without loss of generality we may assume that 
$\{i,j,k\}=\{1,2,3\}$, 
and that $\bf{v}_1, \bf{v}_2, \bf{v}_3$ appear in this order on the affine line $L$ they span in $\R^3$.
Let $l, r \in \{1,2,3\}$ be such that $\vu_1,\vu_2,\vu_3 \in \cone(\bf{u}_l, \bf{u}_r)$, and define $G$ by $G(\bf{u}_l) = \bf{v}_1$, $G(\bf{u}_r) = \bf{v}_3$, and $G(\bf{u}_4) = \bf{v}_4$. 
Then for a $\bf{t} \in L \setminus \cone(\bf{v}_1, \bf{v}_3)$, set 
$X = [\bf{t}]_\times G$. Since, $\bf{v}_i, \bf{t}, $ and $G\bf{u}_i$ are collinear for each $i$, $X$ satisfies all the epipolar equations $\bf{v}_i^\top X \bf{u}_i = 0$. Also, since  $\bf{v}_i, G\bf{u}_i \in \cone(\bf{v}_1 , \bf{v}_3)$ for $i = 1,2,3$, $g_1(X), g_2(X), g_3(X) > 0$ by \Cref{lem:gigj is quad product} and \Cref{lem:t cross quad product}. Finally, 
$
g_4(X) = (\bf{t} \times \bf{v}_4)^\top (\bf{t} \times G \bf{u}_4) = (\bf{t} \times \bf{v}_4)^\top (\bf{t} \times \bf{v}_4) > 0, 
$
and so $X$ strictly satisfies all chiral epipolar inequalities and  $\mathcal{P}$ has a chiral reconstruction by \Cref{thm:chiral F existence gigj}. 

\end{enumerate}
\end{proof}

We conclude by showing that for four point pairs, \Cref{thm:4 chiral exists} is best possible in the following sense. 

\begin{theorem}
\label{thm:4 no chiral}
For the two combinatorial types where
$\rank(\mathcal{U}) \neq \rank(\mathcal{V})$, there are instances of $\mathcal{P}$ without a chiral reconstruction.
\end{theorem}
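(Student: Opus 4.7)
The plan is to exhibit, for each of the two non-generic combinatorial types in \Cref{fig:gen4col} and \Cref{fig:3col4col}, a concrete $\mathcal{P}$ with no chiral reconstruction, combining \Cref{lem:det det infer gigj} with a rigidity observation that confines the possible epipoles.

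\textbf{Preliminary rigidity observation.} Suppose all four $\bf{v}_i$ lie in a $2$-dimensional linear subspace $V\subset\R^3$ (i.e., are collinear in $\P^2$), and a fundamental matrix $X=[\bf{t}]_\times G$ of $\mathcal{P}$ has $\bf{t}\in V$. Then rewriting $\bf{v}_i^\top X\bf{u}_i = (\bf{v}_i\times \bf{t})^\top G\bf{u}_i$ and noting that $\bf{v}_i\times \bf{t}\in V^\perp$, we see the epipolar equations force $G\bf{u}_i\in V$ for every $i$ with $\bf{v}_i\not\sim\bf{t}$. Since $\rank(\mathcal{U})=3$ in both combinatorial types, a short case check --- distinguishing whether the epipole $\bf{e}_2\sim\bf{t}$ coincides with some $\bf{v}_j$ (so that $\mathcal{P}$-regularity pins $\bf{e}_1=\bf{u}_j$) --- shows this collapses three linearly independent $\bf{u}_i$'s into the $2$-dimensional $V$, contradicting $G\in\GL_3$. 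Hence $\bf{e}_2$ must lie off the line $L^\mathcal{V}$. An analogous argument (exploiting $\bf{t}\notin V$ together with the collinearity of $\bf{u}_1,\bf{u}_2,\bf{u}_3$) additionally gives $\bf{e}_1\notin L^\mathcal{U}$ in type (f).

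For \Cref{fig:gen4col}, I would take $\bf{u}_1,\bf{u}_2,\bf{u}_3$ as the vertices of a triangle in $\R^2$ and $\bf{u}_4$ strictly inside it, with $\bf{v}_1,\bf{v}_2,\bf{v}_3,\bf{v}_4$ in this index order along an affine line. Since $\bf{e}_2\notin L^\mathcal{V}$, the factors $\det[\bf{v}_i\,\bf{v}_j\,\bf{e}_2]$ share a common sign for all $i<j$, so \Cref{lem:det det infer gigj} together with \Cref{thm:chiral F existence gigj} demand that all six $\det[\bf{u}_i\,\bf{u}_j\,\bf{e}_1]$ for $i<j$ share sign. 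With explicit coordinates $\bf{u}_1=(0,0,1)$, $\bf{u}_2=(1,0,1)$, $\bf{u}_3=(0,1,1)$, $\bf{u}_4=(\tfrac13,\tfrac13,1)$, the six induced affine linear inequalities in $\bf{e}_1=(x,y,1)$ can be written down and checked to be jointly infeasible for either sign choice. For \Cref{fig:3col4col}, I would take $\bf{u}_1,\bf{u}_2,\bf{u}_3$ collinear with $\bf{u}_1$ strictly between $\bf{u}_2$ and $\bf{u}_3$, $\bf{u}_4$ off this line, and $\bf{v}_1,\dots,\bf{v}_4$ in index order on an affine line. For $\bf{e}_1\notin L^\mathcal{U}$, the three determinants $\det[\bf{u}_i\,\bf{u}_j\,\bf{e}_1]$ with $1\le i<j\le 3$ are each proportional to the signed gap $a_j-a_i$ between positions along $L^\mathcal{U}$ times the signed perpendicular distance of $\bf{e}_1$ from $L^\mathcal{U}$; with positions $(a_1,a_2,a_3)=(0,-1,1)$ the gaps $(a_2-a_1,\,a_3-a_1,\,a_3-a_2)=(-1,+1,+2)$ have mixed signs, so the three corresponding $D_{ij}$ cannot share sign, contradicting \Cref{thm:chiral F existence gigj}.

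The main obstacle is establishing the preliminary rigidity observation. Without it one must contend with degenerate epipole positions where some $D_{ij}$ vanish and \Cref{lem:det det infer gigj} becomes uninformative; the observation eliminates these cases by a linear-algebra dimension count exploiting $\rank(\mathcal{U})=3$, reducing the existence question to the explicit sign analyses above.
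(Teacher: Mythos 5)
Your overall strategy is the same as the paper's: exhibit one explicit configuration for each of the two types in \Cref{fig:gen4col} and \Cref{fig:3col4col}, confine the possible epipoles, and derive a sign contradiction among the $D_{ij}(\mathbf{e}_1,\mathbf{e}_2)$ via \Cref{lem:det det infer gigj} and \Cref{thm:chiral F existence gigj}. Your rigidity observation ($\mathbf{e}_2\notin L^{\mathcal{V}}$, and additionally $\mathbf{e}_1\notin L^{\mathcal{U}}$ in type (f)) is correct and plays the role of the paper's appeal to the epipolar line homography of \Cref{thm:epipolar homography}; with it, your type (f) analysis closes and matches \Cref{ex:3 in line 4 in line}. (One imprecision there: in the subcase $\mathbf{e}_2\sim\mathbf{v}_4$ of type (f), the three remaining points $\mathbf{u}_1,\mathbf{u}_2,\mathbf{u}_3$ are the collinear ones and are \emph{not} linearly independent; to reach the contradiction with $G\in\GL_3$ you must also use that $X\mathbf{u}_4=0$ forces $G\mathbf{u}_4\sim\mathbf{t}\in V$, so that all four images land in $V$.)

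The genuine gap is in type (e). The rigidity observation controls only the position of $\mathbf{e}_2$ relative to $L^{\mathcal{V}}$; it says nothing about whether $\mathbf{e}_1$ lies on one of the six distinct lines spanned by pairs of the general-position points of $\mathcal{U}$. If $\mathbf{e}_1$ lies on the line through $\mathbf{u}_i$ and $\mathbf{u}_j$, then $\det[\mathbf{u}_i\,\mathbf{u}_j\,\mathbf{e}_1]=0$, hence $D_{ij}(\mathbf{e}_1,\mathbf{e}_2)=0$ and \Cref{lem:det det infer gigj} is silent --- and, as the paper notes right after that lemma, $g_ig_j(X)$ need \emph{not} vanish in this situation. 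So your strict-inequality infeasibility computation does not exclude such an $X$, and your closing claim that the rigidity observation ``eliminates these cases'' where some $D_{ij}$ vanishes is not correct: it only eliminates vanishing of the $\mathbf{v}$-factors. The missing step, present in the paper's \Cref{ex:square4line}, is to show that a $\mathcal{P}$-regular fundamental matrix with $\mathbf{e}_2\notin L^{\mathcal{V}}$ cannot have $\mathbf{e}_1$ on any line $\mathbf{u}_i\mathbf{u}_j$: otherwise the coincident epipolar lines $\mathbf{e}_1\mathbf{u}_i=\mathbf{e}_1\mathbf{u}_j$ would have to map to the distinct lines $\mathbf{e}_2\mathbf{v}_i\neq\mathbf{e}_2\mathbf{v}_j$ under the homography of \Cref{thm:epipolar homography} (equivalently, $G\mathbf{u}_i$, $G\mathbf{u}_j$, $\mathbf{t}$ collinear forces $\mathbf{v}_i$, $\mathbf{v}_j$, $\mathbf{t}$ collinear). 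With that argument added, your treatment of type (e) also closes and the proof becomes essentially the paper's.
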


We give examples to prove \Cref{thm:4 no chiral}. 
Recall that the epipolar line homography of \Cref{thm:epipolar homography} cannot send coincident lines to distinct lines or vice versa. Therefore if $X$ is a $\mathcal{P}$-regular fundamental matrix with right and left kernels generated by $\bf{e}_1$ and $\bf{e}_2$, respectively, then $\bf{e}_1,\bf{u}_i,\bf{u}_j$ are collinear 
 if and only if $\bf{e}_2,\bf{v}_i,\bf{v}_j$ are collinear.

\begin{example}
\label{ex:3 in line 4 in line}
Consider the arrangement in \Cref{fig:3line4line} where $\rank(\mathcal{U}) = 3 \neq \rank(\mathcal{V}) = 2$.

\begin{figure}[ht] 
\begin{subfigure}{.2\textwidth}
  \centering
  \includegraphics[width=\linewidth]{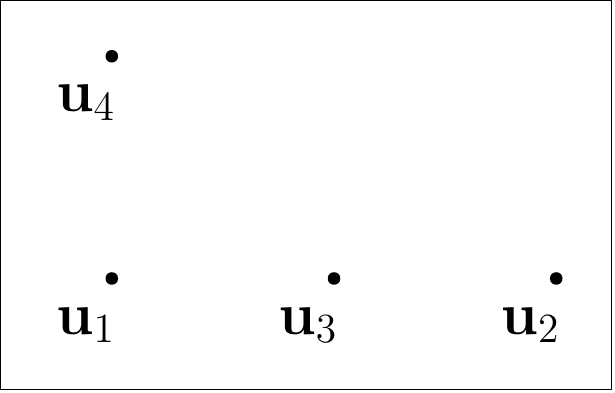} 
\end{subfigure}
\hspace{0.02\textwidth}
\begin{subfigure}{.45\textwidth}
\[ U = \begin{pmatrix}1&3&2&1\\
                    0&0&0&1\\
                    1&1&1&1\\
      \end{pmatrix} \quad V = \begin{pmatrix}1&2&3&4\\
                    0&0&0&0\\
                    1&1&1&1\\
      \end{pmatrix}
      \]
      \end{subfigure}
\hspace{0.02\textwidth}
\begin{subfigure}{.28\textwidth}
  \centering
  \includegraphics[width=\linewidth]{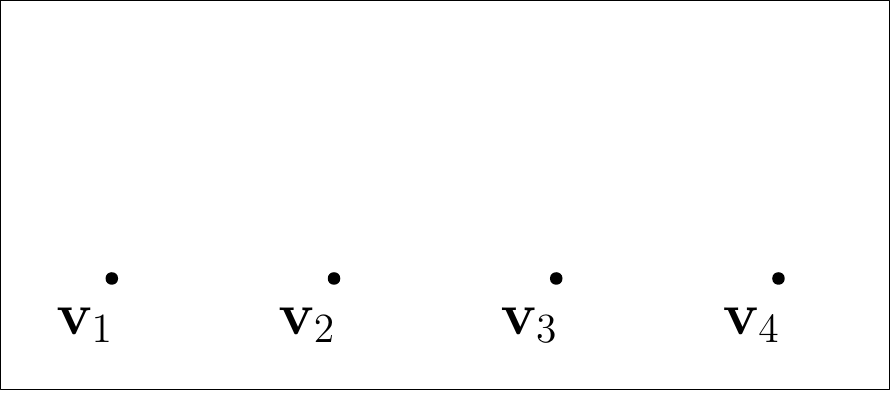} 
\end{subfigure}
\caption{\label{fig:3line4line}}
\end{figure}

Let $L$ be the line in $\P^2$ spanned by $(0,0,1)^\top$ and $(1,0,1)^\top$. Suppose there is a 
fundamental matrix $X$ for $\mathcal{P}$ with $\bf{e}_2^\top X = 0 = X \bf{e}_1$, and suppose $\bf{e}_2 = (e_{21}, e_{22}, e_{23})^\top$ and $\bf{e}_1 = (e_{11}, e_{12}, e_{13})^\top$. 
Then $\bf{e}_2 \not \in L$ (equivalently, $e_{22} \neq 0$) because then the set of lines $\bf{e}_2(\bf{v}_1, \bf{v}_2, \bf{v}_3, \bf{v}_4)$ consists of a repeated line, while the set of lines $\bf{e}_1(\bf{u}_1, \bf{u}_2, \bf{u}_3, \bf{u}_4)$ contains at least two distinct lines for any choice of $\bf{e}_1$. If $\bf{e}_2 \in \P^2 \setminus L$, then 
$\bf{e}_2(\bf{v}_1, \bf{v}_2, \bf{v}_3, \bf{v}_4)$ consists 
of four distinct lines, and hence $\bf{e}_1(\bf{u}_1, \bf{u}_2, \bf{u}_3, \bf{u}_4)$ must also have four distinct lines. 
In particular, $\bf{e}_1 \notin L$, or equivalently, $e_{12} \neq 0$. 
These restrictions imply that 
$
D_{12}(\bf{e}_1, \bf{e}_2) = 2 e_{12} e_{22}, \quad  D_{13}(\bf{e}_1, \bf{e}_2) = 2 e_{12} e_{22}, \quad D_{23}(\bf{e}_1, \bf{e}_2) = - e_{12} e_{22},
$
are all non-zero. It is then clear that they cannot 
all have the same sign. Therefore, 
by \Cref{thm:chiral inactive indices}, there is no chiral reconstruction of 
$\mathcal{P}$.

\end{example}

The epipolar variety in \Cref{ex:3 in line 4 in line} is reducible; $W^{\bf{v}_4}$ is a linear component containing 
the wall $W_{\bf{u}_2}$.
Let $\bf{t} = \bf{v}_4$ and define $G \in \GL_3$ such that $G\bf{u}_1 = \bf{v}_1$ and $G\bf{u}_2 = \bf{v}_4$. Then 
$X = [\bf{t}]_\times G$ is a fundamental matrix of $\mathcal{P}$. Indeed, $\bf{v}_4^\top X = 0$ and $X\bf{u}_2 = [\bf{v}_4]_\times G\bf{u}_2 = [\bf{v}_4]_\times \bf{v}_4 = 0$  and therefore $X$ satisfies the second and fourth 
epipolar equations. It is straightforward to check that $\bf{v}_1, \bf{t}, G\bf{u}_1$ are collinear and $\bf{v}_3, \bf{t}, G\bf{u}_3$ are collinear, so $X$ also satisfies the first and third epipolar equations. By construction, $g_2(X) = g_4(X) = 0$, and by \Cref{lem:gigj is quad product} and \Cref{lem:t cross quad product}, $g_1g_3(X) > 0$. Thus, 
$X$ is a fundamental matrix of $\mathcal{P}$ that satisfies all the chiral epipolar inequalities $g_ig_j(X) \ge 0$ for all $1 \leq i < j \leq 4$. However, $X$ lies in the corner $W_{\bf{u}_2}\cap W^{\bf{v}_4}$ and is not $\mathcal{P}$-regular and all neighboring matrices to $X$ on $W^{\bf{v}_4}$ are also $\mathcal{P}$-irregular. Therefore, we cannot use \Cref{thm:walk away from walls} to perturb $X$ to a regular fundamental matrix. 

\Cref{ex:3 in line 4 in line} illustrates why the irreducibility of the epipolar variety 
is needed in \Cref{thm:walk away from walls}. 
The next example shows that a chiral reconstruction may not exist even when the epipolar variety is irreducible.

\begin{example} \label{ex:square4line}
Consider the arrangement in \Cref{fig:square4line} where $\rank(\mathcal{U}) = 3 \neq \rank(\mathcal{V}) = 2$.
\begin{figure}[ht] 
\begin{subfigure}{.15\textwidth}
  \centering
  \includegraphics[width=\linewidth]{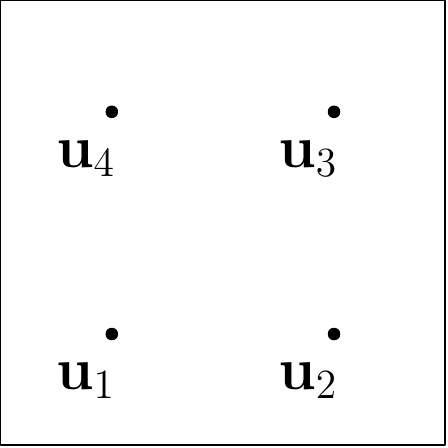}
\end{subfigure}
\hspace{0.03\textwidth}
\begin{subfigure}{.45\textwidth}
\[U = \begin{pmatrix}0&1&1&0\\
                     0&0&1&1\\
                     1&1&1&1\\
      \end{pmatrix} \quad V =  \begin{pmatrix}1&3&2&4\\
                    0&0&0&0\\
                    1&1&1&1\\
      \end{pmatrix}\]
\end{subfigure}
\hspace{0.03\textwidth}
\begin{subfigure}{.28\textwidth}
  \centering
  \includegraphics[width=\linewidth]{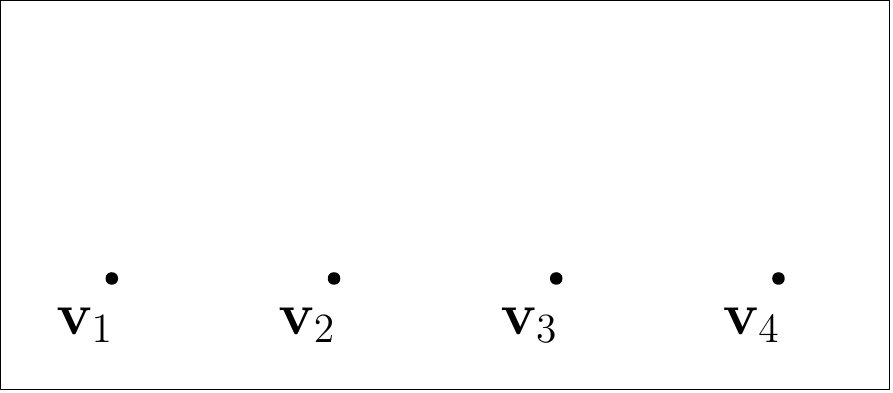} 
\end{subfigure}
\caption{\label{fig:square4line}}
\end{figure}

By \Cref{thm:triples general implies irreducible}, this epipolar variety is irreducible. Suppose $\mathcal{P}$ has a fundamental matrix $X$ with left epipole  $\bf{e}_2=(e_{21},e_{22},e_{23})^\top$ and right epipole $\bf{e}_1=(e_{11},e_{12},e_{13})^\top$. By \Cref{thm:epipolar homography}, $\bf{e}_2$ cannot be on the line spanned by the $\bf{v}$ points since there is no $\bf{e}_1$ for which the set of lines $\bf{e}_1(\bf{u}_1, \ldots, \bf{u}_4)$ consists of coincident lines. For any $\bf{e}_2 = (e_{21},e_{22},e_{23})^\top$ not on the line spanned by the $\bf{v}$ points, the vector 
\[
(\det[\bf{v}_i \, \bf{v}_j \, \bf{e}_2] \,:\, 1 \leq i < j \leq 4) = (2e_{22}, e_{22}, 3e_{22}, -e_{22}, e_{22}, 2e_{22} ) 
\]
which has sign pattern $+++-++$ (up to sign) since $e_{22} \neq 0$, and for any $\bf{e}_1=(e_{11},e_{12},e_{13})^\top$,
\[
(\det[\bf{u}_i \, \bf{u}_j \, \bf{e}_1] \,:\, 1 \leq i < j \leq 4) = (e_{12}, -e_{11}+e_{12}, -e_{11}, -e_{11}+e_{13}, -e_{11}-e_{12}+e_{13}, -e_{12}+e_{13} ).
\]
We can guarantee a chiral reconstruction if 
\[
e_{12} > 0, -e_{11}+e_{12} >  0, -e_{11} >  0, -e_{11}+e_{13} < 0, -e_{11}-e_{12}+e_{13} > 0, -e_{12}+e_{13} > 0.
\]
However, this implies that $e_{12} > e_{11} > e_{13} > e_{12}$ which is a contradiction. Similarly, we cannot achieve the sign pattern $---+--$ either.  What might still be possible is to choose $\bf{e}_1$ so that some of the determinants $\det[\bf{u}_i \, \bf{u}_j \, \bf{e}_1]$ are zero, or equivalently, $\bf{e}_1$ lies on the line joining two of the $\bf{u}$ points. This will not work since then two coincident rays will have to map to two non-coincident rays under the homography in \Cref{thm:epipolar homography}. 
Therefore, $\mathcal{P}$ has no chiral reconstruction.

\end{example}

Unlike in \Cref{ex:3 in line 4 in line}, there is no fundamental matrix for \Cref{ex:square4line} that satisfies all chiral epipolar inequalities. Indeed, \Cref{thm:walk away from walls} implies that there cannot be a smooth $X$ such that $\mathcal{P}_{I(X)}$ has a chiral reconstruction associated to $X$. This rules out the possibility that there is an irregular $X$ with respect to some index that satisfies the chiral epipolar inequalities. 


\section{Five Point Pairs}
\label{sec:fivepoints}

We have seen so far that three point pairs unconditionally have a chiral reconstruction as do four point pairs unless their geometry is special and dissimilar. In this section we will see that five point pairs can fail to have a chiral reconstruction even if both sets of points are in general position. Specific examples of such point pairs were known to Werner \cite{werner2003constraint}. We will show that, in fact, five point pairs do not have a chiral reconstruction with positive probability. We will use \Cref{thm:walk away from walls} to devise a simple test for chirality in this section, and connect to the classical theory of cubic surfaces in the next section. 

Throughout this section, let $\mathcal{P} = \{(\bf{u}_i, \bf{v}_i) : i = 1, \dots, 5\}$ be a set of five generic point pairs in the sense that 
$\mathcal{L}_\mathcal{P}$ is a generic subspace in $\P^8$ of codimension five. In this case, $\mathcal{L}_\mathcal{P}$ misses the four-dimensional variety $\mathcal{R}_1$, and by Bertini's Theorem, the epipolar variety $\mathcal{R}_2 \cap \mathcal{L}_\mathcal{P}$ is a smooth, irreducible cubic surface. Each wall $W_{\bf{u}_i}$ or $W^{\bf{v}_j}$ is a line on $\mathcal{R}_2 \cap \mathcal{L}_\mathcal{P}$. We first see how these lines intersect.

\begin{lemma}
\label{lem:k5corners}
Let $\mathcal{P}$ be a set of five generic point pairs. Then 
\begin{enumerate}
 \item $W_{\bf{u}_i}$ and $W_{\bf{u}_j}$, and similarly 
  $W^{\bf{v}_i}$ and $W^{\bf{v}_j}$, do not intersect for all $i \neq j$.
\item  $W_{\bf{u}_i}$ and $W^{\bf{v}_i}$ do not intersect for all $i$. 
\item The corner $W_{\bf{u}_i} \cap W^{\bf{v}_j}$ consists of a unique real rank two  matrix for $i\neq j$.
\end{enumerate}
\end{lemma}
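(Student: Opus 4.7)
All three parts follow from dimension counts in $\P^8$, exploiting two consequences of the genericity of $\mathcal{P}$ recorded just before the lemma: $\mathcal{L}_\mathcal{P}$ is a three-dimensional linear subspace of $\P^8$, and $\mathcal{L}_\mathcal{P}\cap \mathcal{R}_1 = \emptyset$.

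For (1), I would observe that if $X \in W_{\bf{u}_i}\cap W_{\bf{u}_j}$ with $i\neq j$, then $X\bf{u}_i = 0 = X\bf{u}_j$ places two linearly independent vectors in the right kernel of $X$, forcing $\rank(X) \leq 1$; hence $X \in \mathcal{L}_\mathcal{P}\cap \mathcal{R}_1 = \emptyset$. The identical argument, using left kernels, settles $W^{\bf{v}_i}\cap W^{\bf{v}_j}$.

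For (2) and (3), I would first show that $\widetilde{W}_{\bf{u}_i}\cap \widetilde{W}^{\bf{v}_j}$ has projective dimension three for every $i,j$. The six linear forms on $\C^{3\times 3}$ coming from $X\bf{u}_i$ and $\bf{v}_j^\top X$ satisfy exactly one linear relation---both of $\bf{v}_j^\top(X\bf{u}_i)$ and $(\bf{v}_j^\top X)\bf{u}_i$ compute the same scalar $\bf{v}_j^\top X \bf{u}_i$---so they have rank five. Equivalently, after a coordinate change placing $\bf{u}_i=\bf{v}_j = \bf{e}_3$, the intersection is the space of matrices whose third row and column vanish, of dimension four in $\C^{3\times 3}$. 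Intersecting with $\mathcal{L}_\mathcal{P}$ then amounts to imposing the five epipolar equations, some of which are automatically satisfied on this three-dimensional projective space. In (2), with $j = i$, only $\bf{v}_i^\top X\bf{u}_i = 0$ is automatic, leaving four further conditions; for generic $\mathcal{P}$ they are independent and the intersection is empty. In (3), with $j\neq i$, both $\bf{v}_i^\top X\bf{u}_i = 0$ (from $X\bf{u}_i = 0$) and $\bf{v}_j^\top X \bf{u}_j = 0$ (from $\bf{v}_j^\top X = 0$) are automatic, leaving three further conditions; for generic $\mathcal{P}$ they are independent and cut the three-dimensional projective space down to a single point $X^*$.

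It remains, in (3), to verify that $X^*$ is real and of rank exactly two. Reality follows because the corner is a complex zero-dimensional linear variety defined by equations with real coefficients, so its unique point is fixed by complex conjugation and hence has a real representative. Rank exactly two follows from $X^* \in \mathcal{L}_\mathcal{P}\cap \mathcal{R}_2$ together with $\mathcal{L}_\mathcal{P}\cap \mathcal{R}_1 = \emptyset$. The main obstacle to this plan is justifying the two \emph{generic independence} claims used in (2) and (3): in each case the failure locus is the zero set of certain maximal minors of a coefficient matrix, hence proper Zariski-closed in the configuration space $(\P^2\times \P^2)^5$, and since there are only finitely many index pairs $(i,j)$ to consider, their union remains proper and a generic $\mathcal{P}$ simultaneously avoids all of them.
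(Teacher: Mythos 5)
Your proposal is correct and follows essentially the same route as the paper: part (1) via the rank-drop forcing $X\in\mathcal{R}_1$ (which $\mathcal{L}_\mathcal{P}$ misses generically), and parts (2) and (3) by intersecting the three-dimensional linear space $\widetilde{W}_{\bf{u}_i}\cap\widetilde{W}^{\bf{v}_j}$ with the remaining four (resp.\ three) generic epipolar hyperplanes, then concluding reality from the real linear defining equations and rank two from $\mathcal{L}_\mathcal{P}\cap\mathcal{R}_1=\emptyset$. The only additions are your explicit verification that the six linear forms have rank five and the closing remark on why genericity can be imposed uniformly over the finitely many index pairs, both of which the paper leaves implicit.
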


\begin{proof}
\begin{enumerate}
    \item Recall that for $i \neq j$, $\bf{u}_i \not \sim \bf{u}_j$, and so 
there is no rank two $X$ such that $X\bf{u}_i = 0 = X\bf{u}_j$. 
\item Since $\widetilde{W}_{\bf{u}_i}, \widetilde{W}^{\bf{v}_i} \subset L_{(\bf{u}_i, \bf{v}_i)}$, ${W}_{\bf{u}_i} \cap {W}^{\bf{v}_i}$ is the intersection of the three-dimensional linear space $\widetilde{W}_{\bf{u}_i} \cap \widetilde{W}^{\bf{v}_i}$ with the generic codimension four linear space $\bigcap_{l \neq i} L_{(\bf{u}_l, \bf{v}_l)}$ in $\P^8$, and hence empty. 

\item When $i\neq j$, $W_{\bf{u}_i} \cap W^{\bf{v}_j}$ is the intersection of the three-dimensional linear space $\widetilde{W}_{\bf{u}_i} \cap \widetilde{W}^{\bf{v}_j}$ with the codimension three linear space $\bigcap_{l \neq i,j} L_{(\bf{u}_l, \bf{v}_l)}$ in $\P^8$. This intersection is zero-dimensional and since the defining equations are linear with real coefficients, it consists of a single real matrix $X$. Generically, the intersection will miss $\mathcal{R}_1$, so we conclude that $X$ is a real rank two matrix.
\end{enumerate}
\end{proof}

\begin{remark}
Concretely, when $i \neq j$, the rank two matrix $W_{\bf{u}_i} \cap W^{\bf{v}_j}$ is $X = [\bf{v}_j]_\times H$ where $H$ is the unique homography sending $\bf{u}_i$ to $\bf{v}_j$ and  $\bf{u}_l$ to $\bf{v}_l$ for all $l \neq i,j$. It is easy to verify that this $X$ satisfies the epipolar equations and is the unique point in $W_{\bf{u}_i} \cap W^{\bf{v}_j}$.
\end{remark}

\begin{definition}
Let $C$ be the set of all wall intersections on the epipolar variety:
\[
C = \bigcup_{1\le i , j \le k} W_{\bf{u}_i} \cap W^{\bf{v}_j}.
\]
We call the points in $C$ the {\em corners} of the epipolar variety.
\end{definition}

\Cref{lem:k5corners} shows that $C$ consists of 20 distinct fundamental matrices of $\mathcal{P}$. However, they do not correspond to full projective reconstructions of $\mathcal{P}$ because necessarily the $(\bf{u}_i, \bf{v}_j)$ corner is neither $(\bf{u}_i, \bf{v}_i)$ regular nor $(\bf{u}_j, \bf{v}_j)$ regular. Despite this fact, we argue that checking the signs of the chiral epipolar inequalities at these corners suffices to determine whether $\mathcal{P}$ has a chiral reconstruction.

\begin{theorem}
\label{thm:exist chiral at corner}
Let $\cP$ be a generic set of five point pairs. Then $\mathcal{P}$ has a 
chiral reconstruction if and only if $\mathcal{P}_{I(X)}$ has a chiral reconstruction for one of the 20 corners $X \in C$. 
\end{theorem}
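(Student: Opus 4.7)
The plan is to prove both implications. The reverse direction is an immediate application of \Cref{thm:walk away from walls}: for generic $\cP$, Bertini's Theorem guarantees that $\RR_2 \cap \cL_\cP$ is a smooth irreducible cubic surface, so every corner $X \in C$ is a smooth point of the epipolar variety. Hence if $\cP_{I(X)}$ admits a chiral reconstruction associated to some corner $X \in C$, then \Cref{thm:walk away from walls} (valid since $|\cP|=5$ and $\RR_2\cap\cL_\cP$ is irreducible) produces a chiral reconstruction of $\cP$.

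For the forward direction, suppose $\cP$ has a chiral reconstruction. By \Cref{thm:chiral F existence gigj} the chiral epipolar region is non-empty, and because $\cP$-regularity fails only on a proper Zariski-closed subset of the real epipolar variety, this semialgebraic region has non-empty interior. Let $R$ be a connected component of this interior, and let $\partial R$ denote its boundary within the real locus of the cubic surface. The crux of the argument is to show that $\partial R$ meets the corner set $C$.

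By \Cref{lem:chiral zeros}, every point of $\partial R$ lies on some wall, so $\partial R$ is contained in the union of the ten walls. Consider a connected component $A$ of $\partial R$ that avoids $C$. Since by \Cref{lem:k5corners} distinct walls meet only at corners of $C$, the set $A$ is contained in a single wall $W$. Now $W$ is a line on the cubic surface whose real locus is homeomorphic to a circle, so $A$, being a closed $1$-dimensional connected subset, is either all of $W$ or a proper arc with two endpoints. If $A = W$, then $A$ contains the four corners of the form $W\cap W'$ for walls $W'$ meeting $W$, contradicting $A \cap C = \emptyset$. If $A$ is a proper arc with endpoint $p$, then a second function $g_j$ must vanish at $p$ (otherwise $A$ could be extended along $W$); but by \Cref{lem:chiral zeros}, $g_j$ vanishes on $W$ only at corners, so $p \in C$, again a contradiction. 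Hence $\partial R$ must contain some corner $X \in C$.

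At such a corner $X = W_{\vu_i}\cap W^{\vv_j}$, generically $I(X) = \{1,\dots,5\}\setminus\{i,j\}$ has three elements, and distinctness of the $\vu_\ell$ and of the $\vv_\ell$ ensures that $X$ is $\cP_{I(X)}$-regular. Because $X \in \bar R$ while the strict inequalities $g_\ell g_m > 0$ hold on $R$, continuity gives $g_\ell(X) g_m(X) \geq 0$ for all $\ell,m \in I(X)$, and \Cref{thm:chiral F existence gigj} then yields a chiral reconstruction of $\cP_{I(X)}$ associated to $X$. The principal obstacle will be the topological step above, in particular ruling out the degenerate scenario where $R$ coincides with an entire connected component of the real cubic surface (making $\partial R$ empty); this can be handled by invoking sufficient genericity of $\cP$ to guarantee that every connected component of the real cubic is crossed by at least one wall, preventing a chiral region from engulfing a full component.
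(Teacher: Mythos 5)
Your proposal follows the paper's proof in both structure and substance: the reverse direction is verbatim the paper's (genericity gives a smooth irreducible cubic surface, so corners are smooth points and \Cref{thm:walk away from walls} applies), and the forward direction is the same topological argument the paper uses --- the boundary of the chiral region lies on the walls by \Cref{lem:chiral zeros}, and any arc of a wall in that boundary must terminate where a second chiral polynomial vanishes, i.e.\ at a corner, after which continuity and $\cP_{I(X)}$-regularity give a chiral reconstruction of $\cP_{I(X)}$ via \Cref{thm:chiral F existence gigj}.

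Two steps should be tightened. First, the inference ``the region is non-empty and $\cP$-regularity fails only on a proper Zariski-closed subset, therefore the region has non-empty interior'' is a non sequitur as written: a non-empty set cut out by non-strict inequalities could a priori be lower-dimensional, and deleting the irregular locus would not create interior. The paper's route is the correct one and uses \Cref{lem:k5corners}: for generic $\cP$ the $(\vu_i,\vv_i)$ corners are absent from the epipolar variety, so any $\cP$-regular fundamental matrix $X$ on it satisfies $g_i(X)\neq 0$ for every $i$, hence $g_ig_j(X)>0$ strictly, which immediately yields an open neighborhood inside the chiral region. Second, your handling of the degenerate case $\partial R=\emptyset$ by ``invoking sufficient genericity'' is not an argument; the actual resolution is that if $R$ is a whole connected component of the real surface and that component contains a wall, then it contains that wall's four corners and you are done directly, while a component containing no wall at all is impossible because a smooth real cubic surface carrying a real Schl\"afli double six is of type $F_1$ and hence connected (the paper leaves this equally implicit). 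Both points are repairable with material already in the paper, so this is the paper's argument with two small holes rather than a different or failed approach.
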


\begin{proof}
Suppose $\mathcal{P}$ has a chiral reconstruction. Then \Cref{thm:chiral F existence gigj} implies that there is a $\mathcal{P}$-regular fundamental matrix $X$ such that $g_ig_j(X) \ge 0$ for all $i,j$. By \Cref{lem:k5corners}, the epipolar variety does not contain the $(\bf{u}_i, \bf{v}_i)$ corner for any $i$. Therefore regularity with respect to all image pairs implies $g_i(X) \neq 0$ for all $i$, meaning $g_ig_j(X) > 0$ for all $i,j$. Let $U$ be the connected component of the semialgebraic subset of the epipolar variety described by $g_ig_j \ge 0$ that contains $X$. The chiral epipolar inequalities only change sign along walls. Either $U$ is the entire epipolar variety or its boundary is contained in the walls. Every wall contains four corners and every interval on a wall that is contained in $U$ is bounded by corners (or unbounded, in which case it contains all four corners on the wall). So $U$ contains corners and every corner $X\in U$ lies in $C$ and corresponds to a chiral reconstruction of the subset $\PP_{I(X)}$ of point pairs.

 Conversely, suppose $\mathcal{P}_{I(X)}$ has a chiral reconstruction associated to some $X \in C$. The epipolar variety $\RR_2\cap L_\mathcal{P}$ is smooth, irreducible, and consists only of rank two matrices. Hence the points in $C$ are smooth fundamental matrices and \Cref{thm:walk away from walls} implies there is a chiral reconstruction of $\mathcal{P}$.
\end{proof}

\Cref{thm:chiral inactive indices} allows us to infer the sign of all non-zero chiral epipolar inequalities at a corner $X \in C$  directly using the input point pairs. The next corollary makes this precise. 

\begin{corollary}
\label{cor:5 point det at corner}
Let $\mathcal{P}$ be a set of five generic point pairs. Then 
$\mathcal{P}$ has a chiral reconstruction if and only if there is some $(\bf{u}_i, \bf{v}_j)$ corner such that $D_{lm}(\bf{u}_i, \bf{v}_j), D_{ln}(\bf{u}_i, \bf{v}_j), $ and $D_{mn}(\bf{u}_i, \bf{v}_j)$ have the same sign for $l,m,n \in [5] \setminus \{i,j\}$. 
\end{corollary}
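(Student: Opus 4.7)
The plan is to reduce the corollary to a direct combination of \Cref{thm:exist chiral at corner} and \Cref{thm:chiral inactive indices}, after identifying the relevant inactive index set and kernel generators at each of the $20$ corners in $C$.

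First, I would analyze a single corner. Fix $i\neq j$ and let $X\in W_{\bf{u}_i}\cap W^{\bf{v}_j}$. By \Cref{lem:k5corners}, $X$ is a real rank-two matrix, so its right kernel is spanned by $\bf{u}_i$ and its left kernel by $\bf{v}_j$. By \Cref{lem:chiral zeros}, $g_l(X)=0$ if and only if $X\bf{u}_l=0$ or $\bf{v}_l^\top X=0$, i.e.~if and only if $\bf{u}_l\sim\bf{u}_i$ or $\bf{v}_l\sim\bf{v}_j$. Since all $\bf{u}_l$ (resp.~all $\bf{v}_l$) are pairwise distinct, this happens exactly when $l\in\{i,j\}$. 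Hence $I(X)=[5]\setminus\{i,j\}$, which has cardinality three, and the kernel generators of $X$ are $\bf{e}_1=\bf{u}_i$ and $\bf{e}_2=\bf{v}_j$.

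Next, I would apply \Cref{thm:chiral inactive indices} to $X$ with these generators. Under the genericity assumption on $\mathcal{P}$, no three points among the $\bf{u}_r$'s (nor among the $\bf{v}_r$'s) are collinear, so for every pair $\{l,m\}\subset I(X)$,
\[
D_{lm}(\bf{u}_i,\bf{v}_j)=\det[\bf{u}_l\ \bf{u}_m\ \bf{u}_i]\cdot\det[\bf{v}_l\ \bf{v}_m\ \bf{v}_j]\neq 0.
\]
\Cref{thm:chiral inactive indices} then says that $\mathcal{P}_{I(X)}$ admits a chiral reconstruction associated to $X$ if and only if the three quantities $D_{lm}(\bf{u}_i,\bf{v}_j)$, $D_{ln}(\bf{u}_i,\bf{v}_j)$, $D_{mn}(\bf{u}_i,\bf{v}_j)$ all have the same (nonzero) sign, where $\{l,m,n\}=I(X)=[5]\setminus\{i,j\}$.

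Finally, I would combine this with \Cref{thm:exist chiral at corner}, which states that $\mathcal{P}$ admits a chiral reconstruction if and only if $\mathcal{P}_{I(X)}$ admits a chiral reconstruction associated to \emph{some} corner $X\in C$. Quantifying the corner condition over the $20$ corners $(\bf{u}_i,\bf{v}_j)$, $i\neq j$, yields exactly the criterion in the corollary. The main obstacle in this argument is the non-vanishing of the determinants $D_{lm}(\bf{u}_i,\bf{v}_j)$, which is needed to apply \Cref{thm:chiral inactive indices} cleanly; the genericity of $\mathcal{P}$ (equivalently, general linear position of $\{\bf{u}_r\}$ and $\{\bf{v}_r\}$) handles this automatically, and moreover guarantees through \Cref{lem:k5corners} that each corner is a single smooth rank-two fundamental matrix, so that each of the $20$ corners contributes a well-defined sign test.
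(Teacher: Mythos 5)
Your proposal is correct and follows essentially the same route as the paper's proof: it chains \Cref{thm:chiral inactive indices} and \Cref{thm:exist chiral at corner}, using genericity to guarantee the non-vanishing of the determinants $D_{lm}(\mathbf{u}_i,\mathbf{v}_j)$. The only difference is that you spell out explicitly (via \Cref{lem:chiral zeros} and \Cref{lem:k5corners}) why $I(X)=[5]\setminus\{i,j\}$ at each corner, a step the paper leaves implicit.
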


\begin{proof}
Since $\mathcal{P}$ is generic, no three $\bf{u}_i$ are collinear and no three $\bf{v}_i$ are collinear. Hence, the expressions $D_{lm}(\bf{u}_i, \bf{v}_j),
$ $D_{ln}(\bf{u}_i, \bf{v}_j), $ and $D_{mn}(\bf{u}_i, \bf{v}_j)$ are all non-zero for pairwise distinct $l,m,n \in [k] \setminus \{i,j\}$. 
There is some $(\bf{u}_i, \bf{v}_j)$ corner $X\in C$ such that $D_{lm}(\bf{u}_i, \bf{v}_j), D_{ln}(\bf{u}_i, \bf{v}_j), $ and $D_{mn}(\bf{u}_i, \bf{v}_j)$ have the same sign if and only if $\mathcal{P}_{I(X)} = \mathcal{P}_{\{l,m,n\}}$ has a chiral reconstruction associated to $X$ (\Cref{thm:chiral inactive indices}) if and only if $\mathcal{P}$ has a 
chiral reconstruction (\Cref{thm:exist chiral at corner}).
\end{proof}

\Cref{cor:5 point det at corner} gives an algorithm to check whether five generic point pairs have a chiral reconstruction. We evaluate the sign of three polynomials on the input point pairs at each of the 20 corners. The signs are the same at a corner if and only if it lies in the boundary of the chiral epipolar region of $\mathcal{P}$. We illustrate the procedure on an example.

\begin{example}
\label{ex:Werner}
Let 
\begin{align} \label{eq:werner no chiral}
U =             \left(\begin{array}{ccccc} 0 & 0 & 4 & 2 & 2 \\ 0 & 4 & 0 & 1 & 3 \\ 1 & 1 & 1 & 1 & 1 \end{array}\right)
 \,\,\,\textup{ and } \,\,\,
      V =         \left(\begin{array}{ccccc} 2 & 2 & 4 & 0 & 1 \\ 1 & 3 & 0 & 4 & 1 \\ 1 & 1 & 1 & 1 & 1 \end{array}\right), 
\end{align}
and $\mathcal{P} = \{(\bf{u}_i,\bf{v}_i)\}$ where $\bf{u}_i$ is the $i$th column of $U$ and $\bf{v}_i$ is the $i$th column of V. By \Cref{cor:5 point det at corner} 
and the following table, $\mathcal{P}$ has no chiral reconstruction.

\[\small \left(\begin{array}{c|c|c|c|c}i&j& D_{lm}(\bf{u}_i, \bf{v}_j) & D_{ln}(\bf{u}_i,\bf{v}_j) & D_{mn}(\bf{u}_i, \bf{v}_j) \\ \hline 
    1 & 2 & -16 & -84 & 20\\ 1 & 3 & -32 & -56 & 32\\ 1 & 4 & 64 & 40 & -96\\ 1 & 5 & 112 & -40 & 32\\ 2 & 1 & -16 & -4 & 12\\ 2 & 3 & -32 & 8 & 32\\ 2 & 4 & 64 & -24 & -32\\ 2 & 5 & -16 & 24 & -32\\ 3 & 1 & 16 & -8 & -12\\ 3 & 2 & 16 & 24 & -20\\
    \end{array}\right)
    \left(\begin{array}{c|c|c|c|c}i&j& D_{lm}(\bf{u}_i, \bf{v}_j) & D_{ln}(\bf{u}_i,\bf{v}_j) & D_{mn}(\bf{u}_i, \bf{v}_j) \\ \hline 
    3 & 4 & -64 & 36 & 20\\ 3 & 5 & -32 & -12 & 20\\ 4 & 1 & 16 & -8 & -4\\ 4 & 2 & 16 & 8 & -28\\ 4 & 3 & 32 & -4 & -28\\ 4 & 5 & -16 & -4 & 28\\ 5 & 1 & -16 & 16 & -16\\ 5 & 2 & 48 & -16 & 16\\ 5 & 3 & 32 & -16 & 16\\ 5 & 4 & -32 & 48 & -16 \end{array}\right)
\]

Now consider the following modification of the above example 
obtained by perturbing $\bf{v}_5$:
\begin{align} \label{eq:werner chiral}
U =             \left(\begin{array}{ccccc} 0 & 0 & 4 & 2 & 2 \\ 0 & 4 & 0 & 1 & 3 \\ 1 & 1 & 1 & 1 & 1 \end{array}\right)
 \,\,\,\textup{ and } \,\,\,
      V =         \left(\begin{array}{ccccc} 2 & 2 & 4 & 0 & 4 \\ 1 & 3 & 0 & 4 & 4 \\ 1 & 1 & 1 & 1 & 1 \end{array}\right).
\end{align}
We compute the products $D_{lm}, D_{ln}, D_{mn}$ at all 20 corners $(\bf{u}_i,\bf{v}_j)$ and find that
\[
D_{14}(\bf{u}_2, \bf{v}_3) = -32, \quad D_{15}(\bf{u}_2, \bf{v}_3) = -64, \quad D_{45}(\bf{u}_2, \bf{v}_3) = -64, 
\]
and hence the $(\bf{u}_2,\bf{v}_3)$ corner lies in the boundary of the chiral epipolar region, so these point pairs have a chiral reconstruction. We point out that the same-signness property is not symmetric in the sense that if it holds for 
$(\bf{u}_i,\bf{v}_j)$, it need not hold for $(\bf{u}_j,\bf{v}_i)$. 
Indeed, \[
D_{14}(\bf{u}_3, \bf{v}_2) = 16, \quad D_{15}(\bf{u}_3, \bf{v}_2) = -48, \quad D_{45}(\bf{u}_3, \bf{v}_2) = 16.
\]

\end{example}

The point pairs in \eqref{eq:werner no chiral} are a slight modification of \cite[Figure 1]{werner2003constraint} for which the epipolar variety is singular. Our modification makes the  variety smooth, a property of interest in the next section.
Furthermore, since sufficiently small perturbations of point pairs do not change the signs of $D_{lm}$, $D_{ln}$, and $D_{mn}$ at any $(\bf{u}_i, \bf{v}_j)$ corner, our methods show that having no chiral reconstruction is an open condition.
This leads to the following conclusion.

\begin{theorem} \label{thm:Zariski dense}
The set of five point pairs without a chiral reconstruction is Zariski dense in the space of all five point pairs.
\end{theorem}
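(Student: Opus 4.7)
The plan is to combine Example~\ref{ex:Werner}, which exhibits a concrete collection of five point pairs without a chiral reconstruction, with the observation that non-existence of a chiral reconstruction is a Euclidean-open condition on generic point pairs. Since the space of five point pairs is irreducible, a non-empty Euclidean-open subset is automatically Zariski dense.

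In more detail, I would first single out a Zariski-open subset $\mathcal{G}$ of $(\P_\R^2 \times \P_\R^2)^5$ on which the following genericity properties hold simultaneously: $\cL_\cP$ has codimension five in $\P^8$ and misses $\RR_1$ (so the epipolar variety is a smooth irreducible cubic surface as assumed in this section), \Cref{lem:k5corners} applies so that all twenty corners $W_{\bf{u}_i}\cap W^{\bf{v}_j}$ are distinct real rank-two matrices, and none of the finitely many polynomials $D_{lm}(\bf{u}_i,\bf{v}_j)$ (with pairwise distinct $i,j,l,m\in[5]$) vanishes. All of these are Zariski-open conditions on the point pairs, so $\mathcal{G}$ is Zariski open, and by general position it is non-empty hence Zariski dense.

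On $\mathcal{G}$, \Cref{cor:5 point det at corner} gives an explicit criterion: $\cP$ \emph{has} a chiral reconstruction if and only if, at some ordered pair $(i,j)$ with $i\neq j$, the three non-zero values $D_{lm}(\bf{u}_i,\bf{v}_j)$, $D_{ln}(\bf{u}_i,\bf{v}_j)$, $D_{mn}(\bf{u}_i,\bf{v}_j)$ with $\{l,m,n\}=[5]\setminus\{i,j\}$ share a common sign. Negating this, the set $\mathcal{N}\subset \mathcal{G}$ of point pairs with \emph{no} chiral reconstruction is cut out by finitely many strict sign conditions on polynomials that do not vanish on $\mathcal{G}$. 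Since polynomials are continuous and their signs are locally constant away from their zero loci, $\mathcal{N}$ is Euclidean open in $\mathcal{G}$, and hence Euclidean open in $(\P_\R^2\times\P_\R^2)^5$.

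It remains to show that $\mathcal{N}$ is non-empty. The table in Example~\ref{ex:Werner} verifies by inspection that the point pairs in \eqref{eq:werner no chiral} fail the same-sign condition at every one of the twenty corners, and the associated epipolar variety is smooth with twenty distinct rank-two corners (so these point pairs lie in $\mathcal{G}$). Thus $\mathcal{N}$ is a non-empty Euclidean-open subset of the irreducible real variety $(\P_\R^2\times\P_\R^2)^5$, and is therefore Zariski dense. The only real piece of work is the verification that Example~\ref{ex:Werner} lies in $\mathcal{G}$, but this is a finite determinantal check on the specific matrices $U$ and $V$ and carries no conceptual difficulty.
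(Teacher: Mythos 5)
Your proposal is correct and follows essentially the same route as the paper: the paper likewise observes that the corner criterion of \Cref{cor:5 point det at corner} fails at all twenty corners for the point pairs in \eqref{eq:werner no chiral}, that small perturbations cannot change the signs of the nonvanishing $D_{lm}$, $D_{ln}$, $D_{mn}$ at the corners, and concludes that non-existence of a chiral reconstruction is an open condition witnessed by a concrete example, hence Zariski dense. Your write-up merely makes explicit the genericity set $\mathcal{G}$ and the membership of the example in it, which the paper leaves implicit.
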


Finally, embedding the point pairs in \eqref{eq:werner no chiral} into instances of six or more point pairs one can construct larger configurations with no chiral reconstruction.

\section{Connections to Classical Algebraic Geometry}
\label{sec:classicalAG}

In this section, we discuss the connection between five point pairs and the classical theory of real cubic surfaces in projective $3$-space. This point of view is yet another way to study the space of epipoles, which goes back to Klein and Segre \cite{segre}. General references for cubic surfaces are still mostly classical books like \cite{Henderson},\cite{HilbertCohnVossen}, or \cite{segre}. More modern accounts of classical facts about cubic surfaces can be found in \cite{DolgachevClassical} or \cite{ReidUAG}. Some of the history going back to Cayley and Salmon is discussed in \cite{DolgachevLuigiCremona}.

As in \Cref{sec:fivepoints}, we consider sets of five point pairs $\mathcal{P}$ that are generic though some of our discussion below might generalize to more singular situations.
In the first three subsections, we use the following as a running example.

\begin{example}\label{exm:7running}
    Consider the point correspondences $\mathcal{P} = \{(\bf{u}_i, \bf{v}_i), \,\,i=1,\ldots,5\}$ where $\bf{u}_i$ is the $i$th column of $U$ and $\bf{v}_i$ is the $i$th column of $V$ shown below:
    \[ U = \begin{pmatrix}0&0&1&1&2\\
          1&0&1&2&{-1}\\
          1&1&1&1&1\\
          \end{pmatrix}, \,\,\,\,\,\,\,\,\,
          V = \begin{pmatrix}3&5&{-1}&{-3}&1\\
          0&0&{-2}&{-2}&4\\
          1&1&1&1&1\\
          \end{pmatrix}.\]
\end{example}

\subsection{From two images to a cubic surface}
Given five generic point pairs $\PP = \{ (\vu_i,\vv_i)\in \P_\R^2\times \P_\R^2\mid i=1,2,3,4,5\}$, the epipolar variety $\RR_2\cap \LL_\PP \subset \LL_\PP \simeq \P_\R^3$ is a smooth cubic surface and its defining equation comes with a determinantal representation. Indeed, $\RR_2\subset \P_\R^8$ is defined by $\det(X) = 0$; to get the equation of the intersection with the linear space $\LL_\PP\subset \P_\R^8$, we compute a basis $(M_0,M_1,M_2,M_3)$ of $\LL_\PP$ and plug a general linear combination $z_0 M_0 + z_1 M_1 + z_2 M_2 + z_3 M_3$ of this basis into the equation $\det(X)$ to obtain 
\[
\RR_2 \cap \LL_\PP = \{(z_0:z_1:z_2:z_3)\in\P_\R^3 \mid \det(M(z_0,z_1,z_2,z_3)) = 0\},
\]
where the entries of $M$ are linear forms in $z_0,z_1,z_2,z_3$.
\begin{example}
    The variety $\mathcal{R}_2 \cap \mathcal{L}_\mathcal{P}$ is cut out by 
    $\det M(\mathbf{z}) = 0$ where 
    \[ M(\bf{z}) = \begin{pmatrix}-{z}_{0}-{z}_{1}+{z}_{2}+4 {z}_{3}&
          -{z}_{2}-{z}_{3}&
          -{z}_{2}+{z}_{3}\\
          {z}_{0}-{z}_{1}&
          {z}_{0}+{z}_{1}+3 {z}_{2}&
          -{z}_{0}+2 {z}_{1}+{z}_{2}-{z}_{3}\\
          {z}_{0}+3 {z}_{1}+{z}_{2}+2 {z}_{3}&
          {z}_{2}+5 {z}_{3}&
          5 {z}_{2}-5 {z}_{3}\\
          \end{pmatrix}.\]
          The coefficient matrices $M_0,M_1,M_2,M_3$ in the linear matrix polynomial $M(\bf{z}) = z_0 M_0 + z_1 M_1 + z_2 M_2 + z_3 M_3$ form a basis of $\mathcal{L}_\mathcal{P}$.
\end{example}

\subsection{The 27 lines on a cubic surface}
The cubic surface $S = \RR_2 \cap \LL_\PP\subset \P_\R^3$ contains special lines coming from the input point pairs, namely the walls $W_{\vu_i}$ and $W^{\vv_j}$. From \Cref{lem:k5corners}, we know the lines $W_{\vu_i}$ do not intersect each other and the lines $W^{\vv_j}$ do not intersect each other. Additionally the lines $W_{\vu_i}$ intersect the lines $W^{\vv_j}$ in corners, but they do not all intersect pairwise. We have $10$ lines in a very special configuration: It turns out that this is only possible if these $10$ lines are contained in a real \emph{Schl\"afli double six} on $S$.


\begin{definition}
A Schl\"afli double six on a smooth cubic surface $S \subset \P_\R^3$ is a pair of six-tuples of lines \[ \{(\ell_1,\ell_2,\ldots,\ell_6),(\ell'_1,\ell'_2,\ldots,\ell'_6)\} \] on $S$ such that the six lines in each tuple are pairwise disjoint and $\ell_i \cap \ell'_j$ is non-empty if and only if $i\neq j$. 
\end{definition}

Every smooth cubic surface contains $27$ complex lines in total and they can be organized into $36$ different Schl\"afli double six configurations. The combinatorics behind this has been studied extensively. A discussion of the Schl\"afli double sixes is summarized in \cite[Section 2]{buckley2007determinantal}. For a development via the blow-up construction, see \cite[Chapter V, Section 4]{hartshorne1977algebraic}. With this approach, it is straightforward to check that a pair of five tuples of lines $((\ell_1,\ldots,\ell_5),(\ell'_1,\ldots,\ell'_5))$ on a smooth cubic surface with $\ell_i \cap \ell_j = \emptyset$, $\ell_i'\cap \ell_j' = \emptyset$ (for all $i\neq j$), and $\ell_i \cap \ell_j' = \emptyset$ if and only if $i = j$, uniquely determines a Schl\"afli double six. 

The fact that our cubic surfaces, that appear as epipolar varieties, always contain a Schl\"afli double six consisting of real lines, means that they are all of the same real topological type. Schl\"afli, Klein, and Zeuthen classified the real topologies of cubic surfaces. There are five possible types. The real classification of cubic surfaces is summarized in \cite[Theorem 5.4]{buckley2007determinantal} and discussed in detail in \cite[pp.~40--55]{Geramita}. Epipolar varieties are always of type $F_1$ containing $27$ real lines. Indeed, cubic surfaces of type $F_3$, $F_4$, or $F_5$ do not contain enough real lines to have a real Schl\"afli double six (namely only seven, three, and three). Type $F_2$ surfaces contain $15$ real lines, which would be enough for a Schl\"afli double six. These surfaces are the blow-up of $\P_\R^2$ in four real points and one conjugate pair of complex points. The $15$ lines are the four real exceptional divisors over the four real points, the strict transforms of the six real lines joining any pair of the four real points, the strict transform of the one real line joining the conjugate pair of complex points and the strict transforms of the four conics passing through all sets of five points that is complementary to one of the real points. It is straightforward to check that there are no six mutually skew lines among these $15$ real lines.

Every smooth cubic surface that arises as the epipolar variety of five point pairs is the blow-up of the real projective plane $\P_\R^2$ in \emph{six} real points in general position. (The other types are obtained by blowing up $\P_\R^2$ in six complex points that are invariant under complex conjugation with different numbers of fixed points, namely $0$, $2$, and $4$. The last remaining type is not isomorphic to a blow-up of $\P_\R^2$ over the reals.) This is curious because we start with five point pairs that specify $10$ lines on the epipolar variety, but these $10$ lines determine two other lines via the unique Schl\"afli double six containing the $10$. This prescribes one more point in every image, as we will see below. In \cite{werner2003constraint}, Werner shows one way to construct this sixth point in each image using the five points pairs.
We discuss two other ways to explain the sixth point (see \Cref{exm:sixth1} and \Cref{exm:sixth2}).

\begin{example}\label{exm:sixth1}
    The four-dimensional linear space $\Span\{\bf{v}_1\bf{u}_1^\top, \bf{v}_2\bf{u}_2^\top, \dots, \bf{v}_5\bf{u}_5^\top\}$ intersects the four-dimensional, degree six variety $\RR_1$ in six points. The sixth point is $\bf{v}_0\bf{u}_0^\top$ where $\bf{v}_0 = (-3,-12,5)$ and $\bf{u}_0 = (18,11,17)^\top$.
\end{example}

The Schl\"afli double six specified by the lines $W_{\bf{u}_i}$ and $W^{\bf{v}_j}$ for $0\le i,j\le 5$ accounts for twelve of the 27 lines on the epipolar variety. For $i\neq j$, $ W_{\bf{u}_i}$ and $W^{\bf{v}_j}$ span a tritangent plane, $\pi_i^j$. The tritangent plane intersects the epipolar variety in $W_{\bf{u}_i}$, $W^{\bf{v}_j}$ and one additional line 
\begin{align*}
 W_i^j := \{ X \in \RR_2 \cap L_\mathcal{P} : 
 X\bf{u} = 0 \textup{ and } \bf{v}^\top X = 0 \textup{ for some } \bf{u} \in \Span\{\bf{u}_i, \bf{u}_j\} , \bf{v} \in \Span\{\bf{v}_i, \bf{v}_j\} \}.
 \end{align*}
The plane $\pi_i^j$ coincides with $\pi_j^i$, hence the line $W_i^j$ coincides with $W_j^i$ for all $i,j$. The distinct lines $W_i^j$ for $0\le i<j\le 5$ account for the remaining 15 real lines on the epipolar variety.

\subsection{The determinantal representations of a cubic surface}
From the classical point of view, we can start with six real points in the real projective plane $\P_\R^2$ and obtain a cubic surface of the correct topological type. Each of these surfaces is the epipolar variety for five generic points pairs but every cubic surface is compatible with many second images. To determine the second image, we go via a determinantal representation; different choices of a determinantal representation result in different second images (even modulo projective transformations).

Given six real points $\vu_0,\ldots,\vu_5$ in $\P_\R^2$, we get a cubic surface $S\subset \P_\R^3$ that is the blow-up of $\P_\R^2$ in these six points by considering the rational map $\P_\R^2\dashrightarrow \P_\R^3$ given by the four-dimensional space of cubics that vanish at the six points. Fixing a basis $c_0,c_1,c_2,c_3$ of this space, the map is given by $\vx \mapsto (c_0(\vx):c_1(\vx):c_2(\vx):c_3(\vx))$ and is defined on $\P_\R^2\setminus\{\vu_0,\ldots,\vu_5\}$. The closure of the image of this map is the cubic surface $S$ in $\P_\R^3$ (which is determined up to change of coordinates on $\P_\R^3$ by the six points in $\P_\R^2$).

We cannot determine the second image from this surface alone. However, this information is specified by a determinantal representation of the surface. Determinantal representations are closely related to the \emph{Hilbert-Burch matrix} as is explained in \cite[Section 3]{buckley2007determinantal} (see also \cite{Geramita}). The vanishing ideal $I$ of six points in $\P_\R^2$ in general linear position is generated by the four-dimensional linear space of cubics vanishing on it. Its minimal free resolution looks like
\[
    0 \to F \to G \to I \to 0
\]
where $F$ is a free, graded $\R[x_0,x_1,x_2]$-module of rank three and the map from $F$ to $G$ is given by a $4\times 3$ Hilbert-Burch matrix $L(x_0,x_1,x_2)^\top$ with entries linear in $x_0,x_1,x_2$. Every such matrix gives a determinantal representation of $S$ via a simple computation: Determine the $3\times 3$ matrix $M$ with entries in $\R[z_0,z_1,z_2,z_3]_1$ such that 
\[
   L(x_0,x_1,x_2)
   \begin{pmatrix}
       z_0 \\ z_1 \\ z_2 \\ z_3
   \end{pmatrix} =
   M(z_0,z_1,z_2,z_3)
   \begin{pmatrix}
       x_0 \\ x_1 \\ x_2
   \end{pmatrix} 
\]
and then $S = \{(z_0,z_1,z_2,z_3)\in \P_\R^3 \mid \det(M(\bf{z})) = 0\}$. Since $M(\bf{z})
^\top$ is also a determinantal representation of $S$, we get another Hilbert-Burch matrix $L'(x_0,x_1,x_2)^\top$ with linear entries such that 
 \[
   L'(x_0,x_1,x_2)
   \begin{pmatrix}
       z_0 \\ z_1 \\ z_2 \\ z_3
   \end{pmatrix} =
   \left[M(z_0,z_1,z_2,z_3)\right]^\top
   \begin{pmatrix}
       x_0 \\ x_1 \\ x_2
   \end{pmatrix}.
\]

Therefore a determinantal representation determines six real points $\vu_i$ in the first image cut out by the $3\times 3$ minors of $L$ and six real points $\vv_i$ in the second image cut out by the $3\times 3$ minors of $L'$.

\begin{example}\label{exm:sixth2}
    The Hilbert Burch matrix of $M(\mathbf{z})$ is
    \[
    L = \begin{pmatrix}
          -x_0 & -x_0 & x_0 - x_1 - x_2 & 4\, x_0 - x_1 + x_2 \\
          x_0 + x_1 - x_2 & -x_0 + x_1 + 2\,x_2 & 3\,x_1 + x_2 & -x_2 \\
          x_0 & 3\,x_0 & x_0 + x_1 + 5\,x_2 & 2\,x_0 + 5\,x_1 - 5\,x_2 \end{pmatrix}
    \]
    while the Hilbert-Burch matrix of $M(\bf{z})^\top$ is
    \[
      L' = \begin{pmatrix}
      -{x}_{0}+{x}_{1}+{x}_{2}&-{x}_{0}-{x}_{1}+3\,{x}_{2}&{x}_{0}+{x}_{2}&4\,{x}_{0}+2\,{x}_{2}\\
      {x}_{1}&{x}_{1}&-{x}_{0}+3\,{x}_{1}+{x}_{2}&-{x}_{0}+5\,{x}_{2}\\
      {-{x}_{1}}&2\,{x}_{1}&-{x}_{0}+{x}_{1}+5\,{x}_{2}&{x}_{0}-{x}_{1}-5\,{x}_{2}\end{pmatrix}.
    \]
    We can compute the six real points in each image from the Hilbert-Burch matrices, because the $3\times 3$ minors vanish exactly at these points. The zeros of these ideals of minors are 
    $\mathbf{u}_0, \ldots, \mathbf{u}_5$ where $\bf{u}_0 = (18,11,17)^\top$ and $\mathbf{v}_0, \ldots, \mathbf{v}_5$ where $\bf{v}_0 = (-3,-12,5)^\top$, respectively. This gives yet another way to compute $\bf{u}_0$ and $\bf{v}_0$.
\end{example}

\subsection{From a determinantal representation to epipoles}
Given a determinantal representation $M(\bf{z})$ of a cubic surface $S \subset \P_\R^3$, consider the map $S \to \P_\R^2\times \P_\R^2$, $\bf{z} \mapsto \adj(M(\bf{z}))$. Since the matrix $M(\bf{z})$ has rank two for every point $\bf{z}\in S$, the image lies indeed inside $\P_\R^2\times \P_\R^2$ embedded via the standard Segre embedding in $\P_\R^8$. With its given determinantal representation, the image of the cubic surface $S$ under the adjoint map is the space of epipoles in two images consistent with $\PP$.

From our study of the chiral epipolar region in \Cref{sec:fivepoints}, we are interested in the lines $W_{\vu_i}$ and $W^{\vv_j}$ on the cubic surface corresponding to $\PP$. We compute the images of these lines under the adjoint map, connecting our work with Werner's results in epipole space \cite{werner2003constraint}. The image of $W_{\vu_i}$ under the adjoint map is $\{\vu_i\}\times C^i$, where $C^i\subset \P_\R^2$ is a curve of degree two, namely the conic passing through $\vv_j$ for $i\neq j$. Indeed, for every matrix $X\in W_{\vu_i}$, the vector $\vu_i$ is in the right kernel, by definition, whereas the left kernel varies. This left kernel varies along a conic in $\P_\R^2$ because every entry of the adjoint matrix of a $3\times 3$ matrix is a quadric. So intersecting the image of $W_{\vu_i}$ under the adjoint map with a line in $\P_\R^2$ translates, by pulling back via the adjoint map, to the intersection of a quadric hypersurface in $\P_\R^3$ with the line $W_{\vu_i}$, which therefore consists of two intersection points. Symmetrically, the image of $W^{\vv_j}$ is $C_j\times \{\vv_j\}$, where $C_j\subset \P_\R^2$ is the conic passing through $\vu_j$ for $i\neq j$. This behaviour is special and occurs only for the walls in the cubic surface. The image of the other lines $W_i^j$ under the adjoint is contained in $\Span\{\bf{u}_i, \bf{u}_j\} \times \Span\{\bf{v}_i, \bf{v}_j\}$. 



The conics $C_i$ in the first image and $C^j$ in the second image are described in Section 4 of \cite{werner2003constraint}. These conics divide each image into cells of possible epipoles with piecewise conic boundaries. Each cell is uniquely characterized by the subset of conics that participate to form its boundary or equivalently by the subset of points that belong to its boundary. Werner's test for the existence of a chiral reconstruction involves looking for ``allowed segments" of the conics $C_i$ and $C^j$. In our work, we translate this question to studying the intersections of lines, i.e., the $(\bf{u}_i, \bf{v}_j)$ corners in the cubic surface in the preimage of the adjoint. In the following example, we show how our chirality test on corners relates to Werner's allowed segments in the space of epipoles.

\begin{example*}[\textbf{\ref{ex:Werner} continued}] 
Consider the five point pairs from \eqref{eq:werner chiral}:
\begin{align*}
U =             \left(\begin{array}{ccccc} 0 & 0 & 4 & 2 & 2 \\ 0 & 4 & 0 & 1 & 3 \\ 1 & 1 & 1 & 1 & 1 \end{array}\right)
 \,\,\,\textup{ and } \,\,\,
      V =         \left(\begin{array}{ccccc} 2 & 2 & 4 & 0 & 4 \\ 1 & 3 & 0 & 4 & 4 \\ 1 & 1 & 1 & 1 & 1 \end{array}\right).
\end{align*}
We find that the same sign condition of \Cref{cor:5 point det at corner} holds at the following corners: $(\bf{u}_2, \bf{v}_3)$, $(\bf{u}_2, \bf{v}_4)$, $(\bf{u}_3, \bf{v}_1)$, $(\bf{u}_4, \bf{v}_1)$, and $(\bf{u}_4, \bf{v}_3)$. These corners are in the boundary of the chiral epipolar region of $\mathcal{P}$ which lives in the cubic surface $\RR_2 \cap \mathcal{L}_\mathcal{P}$. Blowing down with the adjoint map, we get the non-empty region of epipoles in $\P_\R^2 \times \P_\R^2$ that correspond to a chiral reconstruction. The boundary of the region in the first image is defined by $C_3$ and $C_4$ at $\bf{u}_2$, $C_1$ and $C_4$ at $\bf{v}_3$, and $C_1$ and $C_3$ at $\bf{u}_4$. The boundary of the region in the second image is defined by $C^3$ and $C^4$ at $\bf{v}_1$, $C^2$ and $C^4$ at $\bf{v}_3$, and $C^2$ and $C^3$ at $\bf{v}_4$. See \Cref{fig:shaded chiral region}.
\end{example*}

\begin{figure}
\begin{subfigure}{.49\textwidth}
  \centering
  \includegraphics[width=\linewidth]{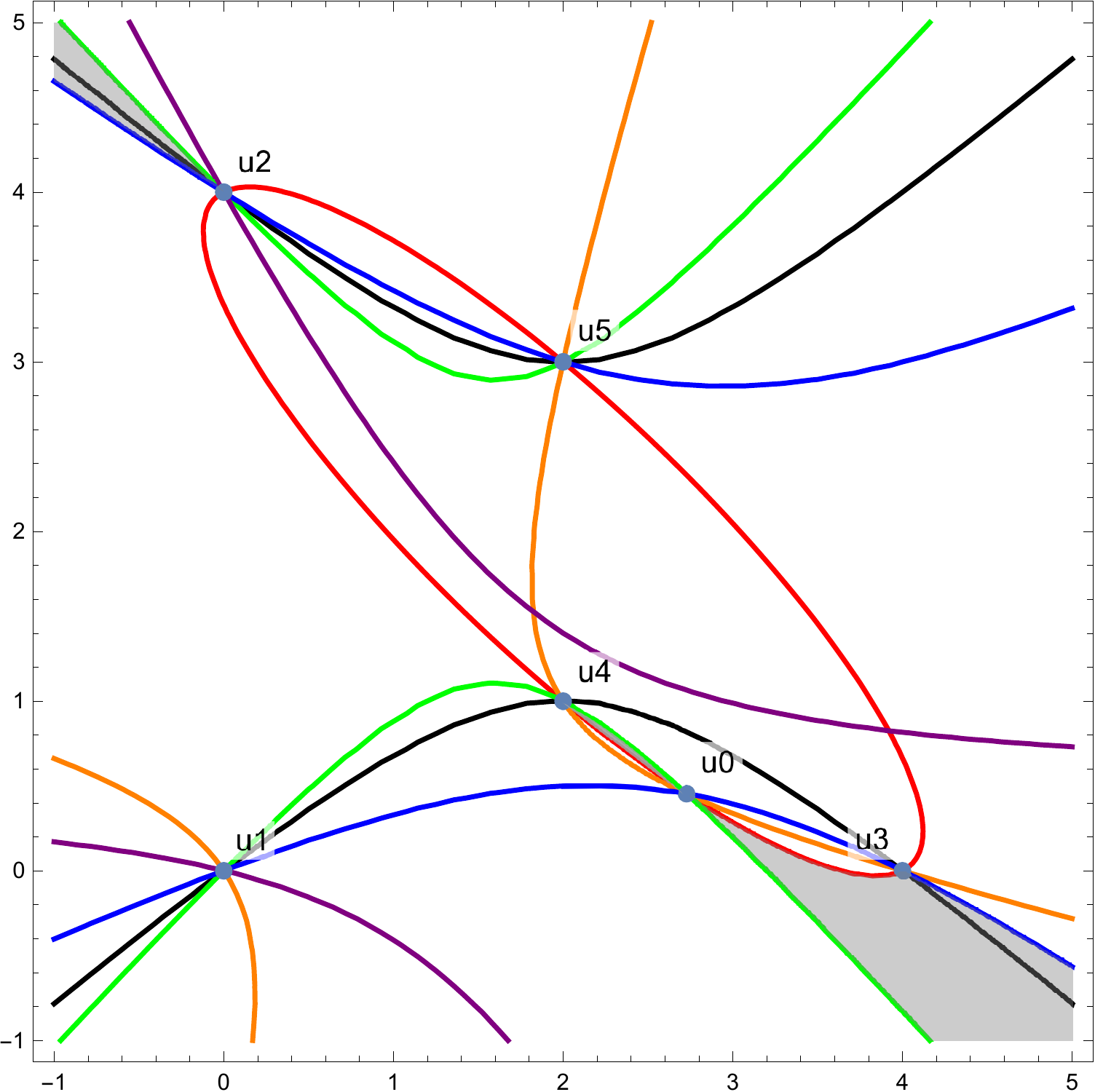} 
  \caption{Shaded area represents region of allowed epipoles $\bf{e}_1$. }
\end{subfigure}
\begin{subfigure}{.49\textwidth}
  \centering
  \includegraphics[width=\linewidth]{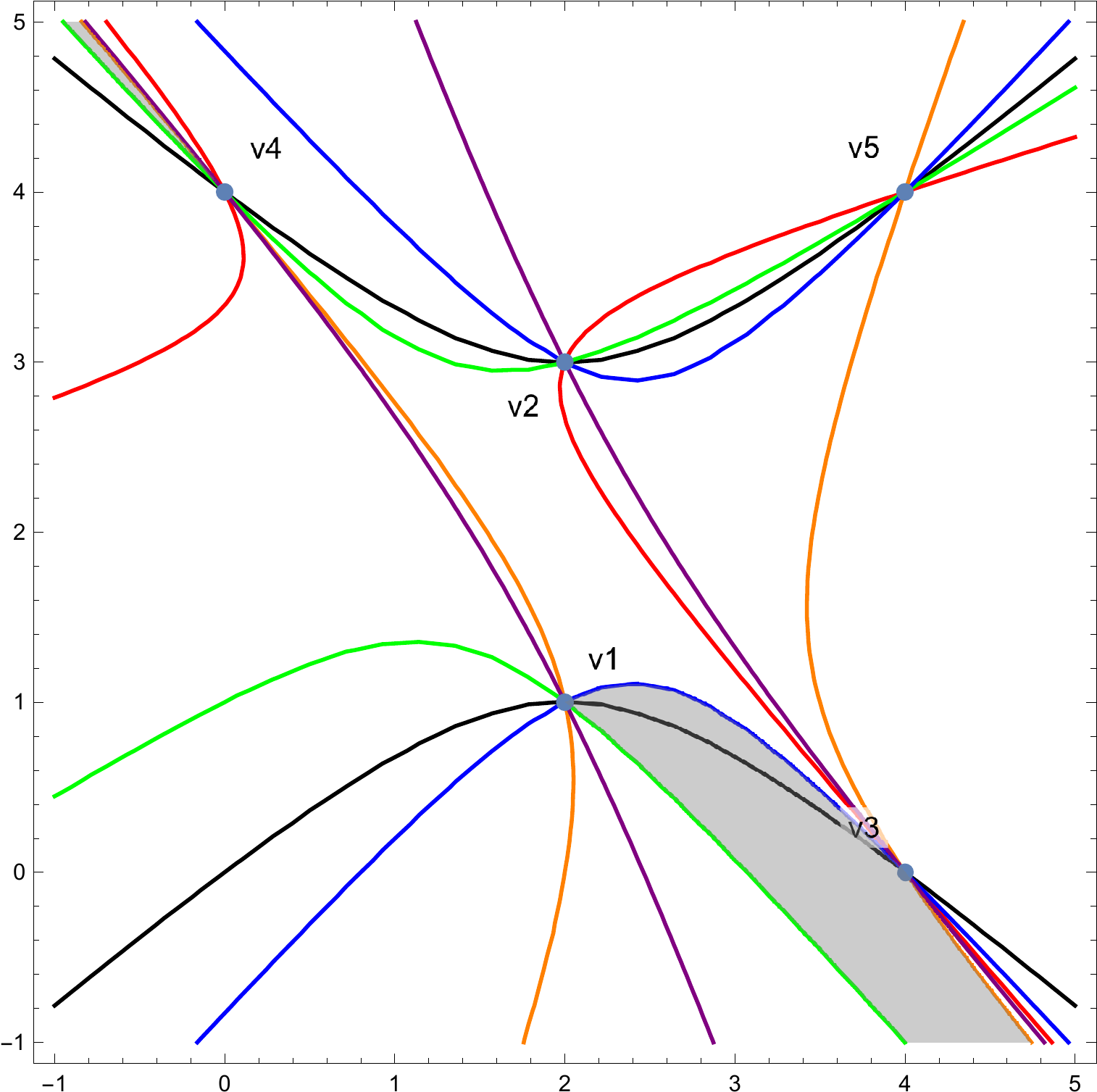} 
  \caption{Shaded area represents region of allowed epipoles $\bf{e}_2$.}
\end{subfigure}
\caption{}
\label{fig:shaded chiral region}
\end{figure}

We remark that our test for chirality requires more than just the cubic surface $S$ determined by six real points $\vu_i$ in one image $\P_\R^2$. For different determinantal representations of $S$, the six points in the other image may differ. In order to check the chiral epipolar inequalities, it is necessary to fix the second image, for example, by fixing a determinantal representation of $S$.

It is also necessary to specify five out of the six point pairs to test for chirality. As the next example shows, while some sets of five point pairs may have a chiral reconstruction, another set associated with the same determinantal representation may not. These observations point out the intricate dependence of chirality on arithmetic information. 

\begin{example*}[\textbf{\ref{ex:Werner} continued}] 
Consider the five point pairs from \eqref{eq:werner no chiral}:
\begin{align*}
U =             \left(\begin{array}{ccccc} 0 & 0 & 4 & 2 & 2\\ 0 & 4 & 0 & 1 & 3\\ 1 & 1 & 1 & 1 & 1 \end{array}\right)
 \,\,\,\,\,\,\,\,\,
      V =         \left(\begin{array}{ccccc} 2 & 2 & 4 & 0 & 1\\ 1 & 3 & 0 & 4 & 1 \\ 1 & 1 & 1 & 1 & 1 \end{array}\right).
\end{align*}
We compute the sixth associated point pair $\bf{u}_0 = (504/281,300/281,1)^\top$ and $\bf{v}_0 = (68/97, 300/97, 1)^\top$. 
Note that the need for all point pairs to have last coordinate one fixes the representatives of $\mathbf{u}_0$ and $\mathbf{v}_0$. By computing the relevant products $D_{lm}, D_{ln}, D_{mn}$ at the 20 corners, we find that $\{(\bf{u}_i, \bf{v}_i)\}_{i=1}^5$ does not have a chiral reconstruction. However, suppose we consider a different subset of five point pairs
\[
\widehat{\mathcal{P}}_5 = \{(\bf{u}_i,\bf{v}_i) : i \in \{0,1,2,3,4\}\}.
\]
Some corners now satisfy the same sign condition of \Cref{cor:5 point det at corner}, hence $\widehat{\mathcal{P}}_5$ has a chiral reconstruction. In fact, the subset of five point pairs $\widehat{\mathcal{P}}_i$ which omits index $i$ has a chiral reconstruction when $i = 1,2,3,5$ and has no chiral reconstruction when $i = 0,4$. The cubic surface $\RR_2 \cap L_{\widehat{\mathcal{P}_i}}$ is the same for all $i$ and the determinantal representations are all equivalent, but whether a chiral reconstruction exists or not depends on which five point pairs we choose. 
\end{example*}



\bibliography{references}{}
\bibliographystyle{siam}

\end{document}